\documentclass[10pt]{article} 

\usepackage[preprint]{tmlr}

\usepackage{amsmath, amsthm}
\usepackage{appendix}
\usepackage{xcolor}

\usepackage{hyperref}
\usepackage{url}

\newtheorem{example}{Example}

\usepackage{parskip}
\usepackage{tikz}
\usepackage[english]{babel}
\usepackage{amssymb}
\usepackage{verbatim}
\usepackage{lineno}
\usepackage{calligra}

\usepackage{amsthm}
\usepackage{amsmath}
\usepackage{amssymb}
\usepackage{graphicx}
\usepackage{cleveref}  
\usepackage{enumitem}
\usepackage{bbm}
\usepackage{csquotes}

\numberwithin{equation}{section} 
\newcommand{\pow}[1]{^{(#1)}}
\newcommand{\lp}{\left(}
\newcommand{\rp}{\right)}
\newcommand{\lc}{\left\{}
\newcommand{\rc}{\right\}}
\newcommand{\lb}{\left[}
\newcommand{\rb}{\right]}

\newcommand{\density}{\scal}

\newcommand{\lV}{\left\|}
\newcommand{\rV}{\right\|}

\newcommand{\E}{\mathrm{E}}

\newcommand{\Vol}[1]{\mathrm{Vol}(#1)}
\newcommand{\History}{\mathcal{H}}
\newcommand{\history}{\hbar}

\DeclareMathAlphabet{\mathbcal}{U}{dutchcal}{m}{n}

\newcommand{\Bbba}{\mathbb{B}}

\newcommand{\Fbb}{\mathbb{F}}

\newcommand{\Hbb}{\mathbb{H}}
\newcommand{\Ibb}{\mathbb{I}}

\newcommand{\Lbb}{\mathbb{L}}

\newcommand{\Nbb}{\mathbb{N}}

\newcommand{\Pbb}{\mathbb{P}}

\newcommand{\Rbb}{\mathbb{R}}

\newcommand{\Xbb}{\mathbb{X}}

\newcommand{\Zbb}{\mathbb{Z}}

\newcommand{\Ccal}{\mathcal{C}}

\newcommand{\Fcal}{\mathcal{F}}
\newcommand{\Gcal}{\mathcal{G}}
\newcommand{\Hcal}{\mathcal{H}}
\newcommand{\Ical}{\mathcal{I}}

\newcommand{\Mcal}{\mathcal{M}}

\newcommand{\Ocal}{\mathcal{O}}

\newcommand{\Scal}{\mathcal{S}}
\newcommand{\Rcal}{\mathcal{R}}

\newcommand{\Xcal}{\mathcal{X}}

\newcommand{\Zcal}{\mathcal{Z}}

\newcommand{\ccal}{\mathbcal{c}}

\newcommand{\hcal}{\mathbcal{h}}

\newcommand{\ocal}{\mathbcal{o}}

\newcommand{\scal}{\mathbcal{s}}

\newcommand{\constant}{\ccal}

\newcommand{\naturalset}{\Nbb}

\newcommand\numberthis{\addtocounter{equation}{1}\tag{\theequation}}

\newtheorem{assumption}{Assumption}
\newcommand{\prob}{\Pbb}
\newcommand{\expec}{\mathbb{E}}

\newcommand{\indicator}{\mathbbm{1}}

\newcommand{\imon}{\textcolor{purple}}

\newcommand{\beq}{\begin{eqnarray*}}
\newcommand{\eeq}{\end{eqnarray*}}
\newcommand{\beqn}{\begin{eqnarray}}
\newcommand{\eeqn}{\end{eqnarray}}
\newcommand{\ben}{\begin{enumerate}}
\newcommand{\een}{\end{enumerate}}
\newcommand{\bit}{\begin{itemize}}
\newcommand{\eit}{\end{itemize}}
\newcommand{\hide}[1]{}

\newcommand{\argmin}{\mathop{\mathrm{argmin}}}

\newcommand{\eps}{\varepsilon}
\newcommand{\vertiii}[1]{{\left\vert\kern-0.25ex\left\vert\kern-0.25ex\left\vert #1 
    \right\vert\kern-0.25ex\right\vert\kern-0.25ex\right\vert}}

\renewcommand{\epsilon}{\eps}

\newcommand{\y}{\boldsymbol{y}}

\newtheorem{definition}{Definition}
\newtheorem{lemma}{Lemma}
\newtheorem{remark}{Remark}
\newtheorem{proposition}[lemma]{Proposition}
\newtheorem{theorem}{Theorem}
\newtheorem{corollary}{Corollary}

\newcommand{\rn}{\log n}

\newcommand{\hellinger}{\mathbcal{H}}

\newcommand{\R}{\mathbb{R}}

\let\temp\phi
\let\phi\varphi
\let\varphi\temp
\newcommand{\Constant}{\Ccal}

\title{Adaptive Estimation and Optimal Control in Offline Contextual MDPs without Stationarity}

\author{\name Riddhiman Bhattacharyya \email  rbhatta6@ucsc.edu\\
      \addr Department of Statistics,\\
      University of California, Santa Cruz\\
      Santa Cruz, California, USA
      \AND
      \name Sayak Chakrabarty \email sayakchakrabarty2025@u.northwestern.edu \\
      \addr Department of Computer Science\\
      Northwestern University\\
      Evanston, IL, USA
      \AND
      \name Imon Banerjee \email imon.banerjee@northwestern.edu\\
      \addr Department of Industrial Engineering and Management Sciences\\ Northwestern University\\
      Evanston, IL, USA
      \\
      }

\def\month{04}  
\def\year{2026} 
\def\openreview{\url{https://openreview.net/forum?id=FGBZ4q1HPZ}} 

\begin{document}

\maketitle

\begin{abstract}
Contextual MDPs are powerful tools with wide applicability in areas from biostatistics to machine learning. However, specializing them to offline datasets has been challenging due to a lack of robust, theoretically backed methods. Our work tackles this problem by introducing a new approach towards adaptive estimation and cost optimization of contextual MDPs. This estimator, to the best of our knowledge, is the first of its kind, and is endowed with strong optimality guarantees. We achieve this by overcoming the key technical challenges evolving from the endogenous properties of contextual MDPs; such as non-stationarity, or model irregularity. Our guarantees are established under complete generality by utilizing the relatively recent and powerful statistical technique of $T$-estimation \citep{baraud_estimator_2011}. We first provide a procedure for selecting an estimator given a sample from a contextual MDP and use it to derive oracle risk bounds under two distinct, but nevertheless meaningful, loss functions. We then consider the problem of determining the optimal control with the aid of the aforementioned density estimate and provide finite sample guarantees for the cost function.
\end{abstract}
\section{Introduction}\label{sec:introduction}

Contextual Markov decision processes (contextual MDPs; \citealp{hallak_contextual_2015}) are a core abstraction for sequential decision making with exogenous information (``context'') that modulates the dynamics and costs. 
They underpin applications across healthcare, economics, and operations where learning a decision rule from historical logs---\emph{offline reinforcement learning (offline RL)}---is often the only viable option.
Yet, despite their practical importance, a general, assumption-light statistical theory for \emph{estimating the transition mechanism and optimizing costs} in contextual MDPs has remained elusive: most existing approaches rely on stationarity/ergodicity, parametric modeling, or strong smoothness assumptions that are frequently violated in real data. \textcolor{black}{Motivated by the recent success of model based offline RL \citep{agarwal_model-based_2020,li_settling_2022-1} in the finite state-control setting, we aim to achieve the following objectives in this paper.}

\textbf{Goal.}
We develop a non-parametric framework for \emph{offline} contextual MDPs that (i) adaptively estimates the transition density without stationarity or ergodicity assumptions, (ii) delivers \emph{oracle risk bounds} in an empirical Hellinger metric, and (iii) transfers these guarantees to \emph{cost minimization} and \emph{optimal control selection} via a plug-in scheme.
Our results close a gap between \textcolor{black}{the realities encountered in practice and existing theory} by providing finite-sample guarantees under minimal conditions while remaining minimax-optimal (up to logarithmic factors) under standard smoothness regimes.

\textbf{Our Approach.}
Let $s(x,a,g,y)$ denote the (contextual) transition density.
We construct an estimator $\hat s$ by minimizing a penalized \emph{pairwise comparison} functional over a rich, countable model class $\Scal$:
\[
\hat s \in \argmin_{f\in\Scal}\;\sup_{f'\in\Scal}\Big\{\alpha\,\hellinger^2(f,f') + T(f,f')\Big\} + \frac{L\,\Delta_{\Scal}(f)}{n},
\]
where $\hellinger^2$ is the empirical Hellinger loss, $T(\cdot,\cdot)$ compares candidates on the observed trajectory (including a density-correcting term), and $\Delta_{\Scal}$ is a data-independent complexity penalty.
This is the $T$-estimation principle: first introduced in a seminal PTRF paper by \cite{baraud_estimating_2009}, and rooted in modern model-selection theory, yields robust performance under dependent, non-stationary data and accommodates non-parametric model classes (e.g., piecewise-constant partitions, spline bases etc.).

\textbf{Intuition.}
Technically, our analysis relies on (i) a martingale Bernstein inequality to control empirical deviations \emph{without} stationarity or mixing assumptions, and (ii) the polarization identity to relate risk in squared-root density space to Hellinger loss.
These tools allow us to derive high-probability comparisons that convert into \emph{expected} oracle inequalities after integrating the deviation parameter.

\textbf{From density estimation to control:}
Given a user-specified, positive cost function $L(x,a,g,y)$, we study the \emph{empirical cost}
\[
\hat C_n(a,g)\;=\;\int L(x,a,g,y)\,\hat s(x,a,g,y)\,\lambda_n^{(1)}(dx,dy),
\]
and show that optimizing a simple upper-confidence surrogate of $\hat C_n$ identifies (nearly) optimal controls from logged data.
Crucially, our guarantees are \emph{policy-agnostic}: any procedure that optimizes the plug-in cost inherits our rates, thereby decoupling statistical estimation from control optimization.
\begin{remark}
    Let $L(x,l,g,y)=|\phi(g,l)^{T}( \psi(x)-\psi(y))|$ \citep{belogolovsky2021inverse, zhou2024federated}, where $\phi$ and $\psi$ are feature maps and suppose that $\tilde{\omega}_{\psi}$ is a joint measure between $\mu_{\chi}(\psi^{-1})$ with itself. Then the cost function defined as above is essentially a cost for a transport from $(\chi, \mu_{\chi})$ to itself. In fact, it is exactly the Wasserstein-1 distance if $\tilde{\omega}_{\psi}$ is the optimal joint measure. Then the problem translates into selecting a control $l$ for a given context $g$ such that ``some" risk metric is minimised (see Section \ref{sec:optimality} for more details). 
\end{remark}

We now detail the two distinct challenges facing us: 
\begin{enumerate}
    \item Our first challenge is innate to non-parametric inference, and can be linked to ``bandwidth selection" for kernel density estimators. For any $\sigma\in(0,1]$, let $\Hbb^\sigma(\chi)$ be the class of all H\"older smooth functions on $\chi$ (formally defined in \Cref{def:holder-space}. See also \cite{bergh_interpolation_1976}). Then, it is known that if $\density(x,l,g,\cdot)\in\Hbb^{\sigma_{x,l,g}}(\chi)$, setting the bandwidth $\Ocal(n^{-1/(2\sigma_{x,l,g}+1)})$ one can recover the minimax rate of estimating the density $\density(x,l,g,\cdot)$. However, such assumptions are impractical when there are uncountably infinite many $x,l,g$'s. 
    
    At this point, we remark that one cannot simply take the bandwidth to be $\Ocal(n^{-1/(2\sigma+1)})$ where $\sigma=\sup_{x,l,g}\sigma_{x,l,g}$ since, without additional assumptions on the diameter of $\chi$, H\"older spaces generally \textbf{do not} satisfy the embedding $\Hbb^\alpha(\chi)\subset \Hbb^\beta(\chi)$ for $\alpha<\beta$. Therefore, this problem of bandwidth selection is even more tenuous in the MDP context.
    
    \item Our second challenge is on the collection of historical data. For regular (non-contextual) MDPs it is common practice to assume that the actions $a_i$ depend only on the current state $X_i$ \citep{sutton_reinforcement_2018}, and the contexts $G_i$ are independent; an assumption which is easily violated. As an example, note that, hidden Markov models (which may correspond in real-life to the biological markers for the evolving health conditions of a single patient), are not Markovian up to any degree. However, since the contexts $G_i$'s can themselves arise out of such an evolving time series, it is desirable to have an estimation procedure that is ``robust" to non-stationarity or non-ergodicity but optimal if such extra conditions are available. We now informally state our contributions.
\end{enumerate}

\paragraph{Contributions.}
\begin{itemize}
    \item \textbf{Non-parametric estimation for contextual MDPs.}
    We introduce a $T$-estimator for the transition density that requires only boundedness of a dominating density and \emph{no} stationarity or ergodicity.
    We prove an \emph{oracle inequality} in empirical Hellinger loss:
    \[
    \E\big[\hellinger^2(s,\hat s)\big]\;\lesssim\;\inf_{f\in\Scal}\Big\{\hellinger^2(s,f)+\tfrac{\Delta_{\Scal}(f)+\dim(f)\log n}{n}\Big\}.
    \]
    \item \textbf{Minimax-rate adaptivity.}
    For standard smoothness classes (e.g., H\"older/Besov), our procedure attains the optimal non-parametric rate (up to poly-log order):
    \(
    \E[\hellinger^2(s,\hat s)] \lesssim \big(\tfrac{\log n}{n}\big)^{\bar\sigma/(4+\bar\sigma)}.
    \)
    \item \textbf{Cost minimization with finite-sample guarantees.}
{\color{black}
We establish non-asymptotic bounds for the \emph{plug-in} cost gap between the control selected by our estimated surrogate for the cost function and the true optimal control.
The bound scales as
\(
  O\!\Big(\sqrt{\tfrac{\max(\Delta_{\Scal}(f), \log n)}{n}}\Big).
\)
}
\end{itemize}

\textit{Note for Practitioners:}
{\color{black}
Our results provide a practical, assumption-light route to \emph{offline} cost-sensitive decision making in contextual MDPs:
learn a non-parametric transition model with $T$-estimation, plug it into the task-specific cost, and optimize controls with the estimated costs.
This pipeline yields rigorous risk and optimality guarantees without relying on stationarity, ergodicity, or parametric misspecification---a favorable regime for real-world logs with shifting contexts and irregular dynamics.
}

\textit{The rest of the paper is organised as follows:} Section \ref{sec:background} outlines a comprehensive discussion of relevant research works. In section \ref{sec:model}, we formally introduce the model selection procedure and the relevant notations. In section \ref{sec:results}, we provide our key theoretical results on estimating $\density$, with the proof sketches for these, while the full proofs have been deferred till the appendix due to lack of space. Section \ref{sec:cost} contains our theoretical guarantees on estimating the optimal cost, and finally, in Section \ref{sec:conclusions}, we conclude by mentioning the limitations and broader impact of our work. All technical proofs are deferred to the Appendix.



\section{Background and Related Research}\label{sec:background}

Initial works on model selection as a form of estimation dates back to \cite{barron_risk_1999}, and for a general overview of the literature on model selection, we refer the readers to the exceptional (albeit slightly dated) monograph by Pascal Massart \citep{massart_concentration_2007}. $T$-estimators, specifically, have seen a significant amount of focus in recent years. They have been used as a general density estimation procedure in i.i.d. \citep{baraud_estimating_2009,birge_model_2006}, bivariate \citep{sart_estimating_2017}, Markovian \citep{sart_density_2023} and controlled Markovian \citep{banerjee_adaptive_2025} contexts, and seen applications in machine learning related fields like differential privacy \citep{sart_density_2023}.

In the interest of exposition, we point out that recently, $T$-estimators have been generalised in a series of groundbreaking papers starting with \cite{baraud_new_2017}, with follow ups in \cite{baraud_rho-estimators_2018}, and  \cite{baraud_robust_2020} These so-called $\rho$-estimators---based on the Bhattacharya correlation coefficient $\rho(f,g):=\int \sqrt{fg}\, d\lambda_n$---are non-parametric, yet MLE-like in their efficiency. However, theory of $\rho$-estimation is still in its infancy, and for the sake of brevite, we fallback on the various technical tools available to us for $T$-estimators. 

Contextual MDPs---first formally introduced in \cite{hallak_contextual_2015}---have become a cornerstone of cost optimisation and decision making in diverse fields such as finance, economics, and healthcare \citep{batsis_contextual_2024,xiao_model-based_2019,tang_model_2021, li2022dynamic, javanmard2019dynamic}, in both online \citep{li2022dynamic} and offline  \citep{zhou2024federated} settings; proving to be a major tool in solving problems with real life applications~\citep{cao2023safe}. Much work has been done in the field of contextual MDPs with PAC/sample complexity bounds for the optimal policy~\citep{krishnamurthy2016contextual, sun2019model,jiang2017contextual}. However, our work is more general; as we prove in Theorem \ref{prop:cost:bound:nogap}, one can use any method to find the optimal control and \textcolor{black}{leverage rate optimality guarantees (as given Theorem \ref{thm:minimax}) endowed by our estimator}---a crucial setting which till now had remained largely unexplored, formalized by the following open question.

\textbf{Open question.} The paper resolves the following open questions in the theory of contextual MDPs: (i) Can we provide a \textit{complete convergence theory for contextual MDPs}---oracle risk bounds for transition function and the optimal cost without prior assumptions on the data generating process? (ii) Can we then, under further suitable assumptions, derive minimax optimality guarantees for our estimator and the associated cost function?  

With that, we move on to formally define our problem.

\section{Problem Formulation}\label{sec:model}

We initiate this section by introducing some notation that will be used repeatedly throughout the paper. Let $\naturalset$ and $\mathbb{R}$ denote the natural and real numbers, and the symbol $\lfloor\cdot\rfloor$,  the floor function. 
All random variables in this paper will be defined with respect to a filtered probability space $(\Omega, \Fcal, \Fbb, \Pbb)$, where $\Fcal$ is a $\sigma$-algebra and $\Fbb := \{\Fcal_i\}_{i\geq 0}$, with $\Fcal_i \subset \Fcal$, is a given filtration. 
Let $\{(X_i,a_i,G_i)\}$ represent a discrete-time stochastic processes adapted to $\mathbb F$, and taking values in $\chi\times\Ibb\times\Gcal \subseteq \Rbb^{d_1}\times \Rbb^{d_2}\times\Rbb^{d_2}$. We call $\chi$, $\Ibb$, and $\Gcal$  the \textit{state}, \textit{control}, and \textit{context} spaces respectively. For all non-negative integers $i,j$, we define $\History_i^j := (X_j,a_j,G_j\dots,X_i,a_i,G_i)$ and $\history_i^j := (x_j,l_j,g_j,\dots,x_i,l_i,g_i)$ and note that $\history_i^j$ is an element of $(\chi\times\Ibb\times \Gcal)^{j-i+1}$. The $\sigma$-field generated by $\History_i^j$ shall be $\Fcal_i^j$. Throughout the paper, we will {assume that $\chi$, $\Ibb$, and $\Gcal$ are compact}. $\Xbb:=\chi\times\Ibb\times\Gcal\times\chi$ shall denote the augmented state-space, and is compact by the previous assumption.
When $\Xbb$ is \emph{not compact}, all of our theory still continues to hold on any restriction of $\density$ on a compact subset $A\subset \chi\times\Ibb\times\chi$, given by $\density\indicator_A$.  {Observe that $\density\indicator_A$ is not necessarily a conditional density, since it may not integrate upto $1$.}

Let $\expec[X]$ be the expectation and $\sigma(X)$ the $\sigma$-algebra induced by $X$. We endow $\chi$, $\Ibb$, and $\Gcal$ with integrating measures $\mu_\chi$, $\mu_\Ibb$, and $\mu_\Gcal$ respectively. One can assume $\mu$'s to be Lebesgue when $\chi,\Ibb,\Gcal$ are Euclidean or the counting measure when they are discrete. By $\mathrm{Vol}(\Scal)$ we denote the volume of the set $\Scal$ with respect to its natural measure. As an example, if $\Scal\subset \chi$, then $\Vol{\Scal}=\mu_{\chi}(\Scal)$; if $\Scal\subset \Ibb$, then $\Vol{\Scal}=\mu_{\Ibb}(\Scal)$, etc.  $\Constant$ and $\constant$ are always used to denote universal constants whose values can change from line to line.
 We call $m = \lc k:k\subseteq \chi\times\Ibb\times\chi \rc$ to be a \textit{partition} of $\chi\times\Ibb\times\chi$ if $\bigcup_{k\in m}k = \chi\times\Ibb\times\chi$ and $k\bigcap k'=\text{\O}$ for all distinct $ k,k'\in m$. Finally, to avoid trivialities, we assume throughout the paper that the number of samples, denoted by $n$ is at least $3$.

\subsection{Definitions}
Our objective in the paper is to select the best density (the eponymous ``model")  from a given class of models. To set the stage, we introduce the following definitions.

Let $ \lambda_n := \sum_{i=0}^{n-1} \delta_{X_i,a_i,G_i} \otimes \mu_\chi/n$ denote the product between the point measure $n^{-1}\sum_{i=0}^{n-1} \delta_{X_i,a_i,G_i}$ and $\mu_\chi$. Formally, for any set $\Xcal\subset\Xbb$ such that $\Xcal = \Xcal_1\times\Xcal_2\times \Xcal_3\times \Xcal_4 $,
\[
 \lambda_n(\Xcal) = \frac{1}{n}\sum_{i=0}^{n-1}\indicator[(X_i,a_i,G_i)\in\Xcal_1\times\Xcal_2\times \Xcal_3 ]\mu_\chi(\Xcal_4).
\]
Observe that this is a valid measure on $\Xbb$. By $\Lbb_1^+(\Xbb,\lambda_n)$, we denote all positive, absolutely integrable functions on $\Xbb$. For $f_1,f_2\in L_1^+(\Xbb,\lambda_n)$ (not necessarily densities) on $\Xbb$, we
define the square of the \textbf{empirical Hellinger distance} $\hellinger^2$ as
\[
\hellinger^2(f_1,f_2) := \frac{1}{2}  \int_\chi \lp \sqrt{f_1}-\sqrt{f_2}  \rp^2 d\lambda_n.\label{def:heL_pist}
\]
It follows that $\hellinger$ is a nonnegative random variable adapted to $\Fcal_0^n$. Our model selection procedure is as follows: For some universal constant $\alpha$, let $L>0$, and $\Scal\subset \Lbb_+^2(\Xbb,\lambda_n)$ be a countable (but possibly infinite) class of functions. The map $\Delta_\Scal:\Scal\rightarrow \Rbb$ is said to be a penalty on $\Scal$ if $\Delta_\Scal(f)> 0\ \forall f\in\Scal$. For convenience of notation, we define 
\begin{align*}
    \phi(x,y) := \frac{\sqrt{y}-\sqrt{x}}{2\sqrt{x+y}} 
\end{align*}
For any two functions $f_1,f_2:\chi\times\Ibb\times\Gcal\times\chi \rightarrow \Rbb$ define $T(f_1,f_2)$ as,
\begin{small}
\begin{align*}
     & T(f_1,f_2):= \frac{1}{n}\sum_{i=0}^{n-1}\underbrace{\int  \frac{1}{\sqrt{2}} \phi(f_1,f_2) \delta_{X_i,a_i,G_i,X_{i+1}}}_{A}\\
    &\quad +\underbrace{ \int \sqrt{\frac{f_1+f_2}{2}} \cdot (\sqrt{f_2} - \sqrt{f_1}) \, d\lambda_n}_B + \underbrace{\int (f_1 - f_2) \, d\lambda_n}_C.\numberthis \label{eq:Tn}
\end{align*}   
\end{small}
\begin{remark}
    $T$ can intuitively be thought of as a comparison of $f_1$ and $f_2$, with $A$ comparing which one fits the $\{(X_i,a_i,G_i,X_{i+1})\}$ process better, $B$ comparing which one fits the $\{(X_i,a_i,G_i)\}$ process better, and $C$ penalising if $f_1$ or $f_2$ is not a proper density.
\end{remark}
\begin{definition}
    $\vartheta$ is said to be a model-selection procedure on $\Scal$ if
    \begin{align*}
        \vartheta(f) = & \sup_{f'\in\Scal}\lb\alpha\hellinger^2(f,f')+T(f,f')-L\frac{\Delta_\Scal(f')}{n}\rb+L\frac{\Delta_\Scal(f)}{n}.\numberthis\label{eq:contrast}
    \end{align*}
\end{definition}
Our choice of estimator shall be the following: 
\begin{align*}
\hat \density:= \argmin_{f\in \Scal} \vartheta(f)+\frac{1}{n}.\numberthis\label{eq:def-estimator}
\end{align*}
\textcolor{black}{Observe that $\hat \density$ is precisely the minimum-contrast estimator of \cite{massart_concentration_2007} and, following the foundational PTRF paper \cite{barron_risk_1999}, is estimated via a penalised selection procedure from a model class. Recently, this estimator has been used with great success in model selection for Markov chains \citep{sart_estimation_2014}, and in the following section we investigate its properties in the contextual MDP setting.}
\begin{remark}
    $\hat \density$ depends on $n$ and $L$ and may not be unique. In that case, all of the choices are valid estimators.
\end{remark}
{\color{black}
\paragraph{On the choice of the model class $\mathcal{S}$.}
In practice, the performance of the estimator depends on the complexity of the candidate model class $\mathcal{S}_\ell$. The precise choice is somewhat ambiguous but Larger model classes typically yield more precise estimation, with the obvious tradeoff of under-penalizing bad models for small sample sizes.
\begin{itemize}
\item For smaller sample sizes, smaller classes may be beneficial to avoid excessive bias with the added risk of under-penalizing if the model class is too large.

\item For moderate sample sizes, setting standard model classes like those in examples below (see Example \ref{example:dyadic:cut} or \ref{example:splines}) work well.

\item For larger sample sizes, even richer classes may suffice.
\end{itemize}
}

\section{Theoretical Results}\label{sec:results}

Before stating our main theorem we state the following assumptions (and discuss them below)
\begin{assumption}\label{assume:model-selection}
Any $f\in\Scal$ is finite dimensional (over the $\Lbb^2$ norm) with dimension $\dim(f)$. We further assume that {\color{black} $\sup_{f\in\Scal}0\leq f\leq 1$ and} $\sum_{f\in\Scal}e^{-\Delta_\Scal(f)}\leq 1$.
\end{assumption}

\begin{assumption}\label{assume:density-dominance}
    For all $i\in\{0,\dots,n-1\}$, $(X_i,a_i,G_i,X_{i+1})$ admits a (possibly inhomogenous) density $\varphi_i$ with respect to some known measure $\mu_\Xbb$ (defined formally in Section \ref{sec:prf-rskbd}) such that $\varphi(\cdot,\cdot,\cdot,\cdot)\leq \kappa$ for some constant $\kappa>0$ and $\mu_\Xbb(\Xbb)=1$.
\end{assumption}

The first assumption signifies that the penalty must be large enough such that $e^{-\Delta_\Scal(f)}$ is summable on $f$; the upper bound of $1$ is without losing generality. We discuss this further with an example below.

The second assumption has two parts. The requirement on the density is mild since it assumes an \emph{upper bound} on the density, as opposed to the more prevalent (and somewhat tenuous) \textit{lower bound} that is ubiquitous adaptive learning literature (see for instance A5, \cite{lacour_adaptive_2007} {\color{black} or Assumption 4.1 in \cite{sart_estimation_2014}.})
We can now state our main oracle risk bound.

\subsection{Oracle Risk Bound for Empirical Hellinger}
    \begin{theorem}\label{prop:emp-risk}
        Under the conditions of Assumption \ref{assume:model-selection}, there exists an universal constant $L_0 > 0$ such that if $L \geq L_0$, the estimator $\hat{s}$ satisfies
        \begin{align*}
        \Constant \expec\!\left[ \hellinger^2(\density, \hat \density) \right] \leq \expec\!\left[ \inf_{f \in \Scal} \left\{ \hellinger^2(\density, f) + L \frac{\Delta_\Scal(f)}{n} \right\} \right].
        \end{align*}
    \end{theorem}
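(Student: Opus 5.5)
We follow the classical $T$-estimation route of \cite{baraud_estimator_2011} and \cite{sart_estimation_2014}, adapted to the contextual, non-stationary setting. The argument has three parts: a deterministic decomposition of $T$, a uniform deviation bound for the stochastic part, and a comparison argument that turns the bound into an oracle inequality.

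\textbf{Step 1: decompose $T(f,f')$ into drift plus martingale.} Write term $A$ as
\[
\frac1n\sum_{i}\Big(Z_i(f,f')-\expec[Z_i(f,f')\mid\Fcal_i]\Big)+\frac1n\sum_i\expec[Z_i(f,f')\mid\Fcal_i],
\]
where $Z_i(f,f')=\phi(f,f')(X_i,a_i,G_i,X_{i+1})/\sqrt2$. The conditional expectation equals $\int\phi(f,f')\,\density\,d\lambda_n$ after integrating $X_{i+1}$ against $\density(X_i,a_i,G_i,\cdot)\,d\mu_\chi$. Combining this with $B$ and $C$, and using the polarization identity $\sqrt{f'}-\sqrt f$ together with the algebraic bounds on $\phi$ from Baraud's lemma, we aim for a deterministic inequality of the form
\[
(1-\epsilon)\hellinger^2(\density,f')-c_1\hellinger^2(\density,f)\;\le\; T(f,f')-M_n(f,f')\;\le\; c_2\hellinger^2(\density,f)-c_3\hellinger^2(\density,f').
\]
Here $M_n(f,f')$ denotes the martingale part. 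Term $C$ exactly cancels the mass discrepancy, because $\density$ integrates to one in $y$ while $f,f'$ need not.

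\textbf{Step 2: uniform deviation bound for the martingale, the main obstacle.} The increments of $M_n$ are bounded, since $|\phi|\le 1/2$. Their conditional variance is bounded by $\int(\sqrt{f'}-\sqrt f)^2 d\lambda_n$ up to constants, which in turn is at most $\hellinger^2(\density,f)+\hellinger^2(\density,f')$. A Freedman/Bernstein martingale inequality then holds without any stationarity assumption. Because the variance proxy $\hellinger^2$ is itself random, we will apply the inequality on a peeling of the values of the predictable quadratic variation. This peeling costs only a $\log n$ factor, which is absorbed into $L_0$.

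The outcome of Step 2 should be the following. For each pair $(f,f')$, with probability at least $1-e^{-\Delta_\Scal(f)-\Delta_\Scal(f')-\xi}$,
\[
|M_n(f,f')|\le \epsilon\big(\hellinger^2(\density,f)+\hellinger^2(\density,f')\big)+\Constant\,\frac{\Delta_\Scal(f)+\Delta_\Scal(f')+\xi}{n}.
\]
A union bound using $\sum e^{-\Delta_\Scal}\le1$ (Assumption~\ref{assume:model-selection}) makes this uniform over all pairs. Making this step rigorous, in particular handling the random variance proxy without mixing assumptions, is the hardest part of the proof.

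\textbf{Step 3: comparison argument and integration over $\xi$.} Fix any $\bar f\in\Scal$; since the bounds are uniform, $\bar f$ may be chosen data-dependently. On the event of Step 2, and taking $\alpha$ small and $L\ge L_0$ larger than $\Constant$, we bound $\vartheta(\bar f)$ by $\Constant\big(\hellinger^2(\density,\bar f)+L\Delta_\Scal(\bar f)/n+\xi/n\big)$. We also lower-bound $\vartheta(\hat\density)$ by taking $f'=\bar f$ in the supremum, combined with the triangle inequality for $\hellinger$. Using the defining property $\vartheta(\hat\density)\le\vartheta(\bar f)+1/n$ and rearranging gives
\[
\Constant\,\hellinger^2(\density,\hat\density)\le\hellinger^2(\density,\bar f)+L\frac{\Delta_\Scal(\bar f)}{n}+\frac{\xi+1}{n}
\]
with probability at least $1-e^{-\xi}$. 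Taking $\bar f$ to be the (near) minimizer of the bracket, integrating the tail over $\xi\ge0$, and taking expectations yields the stated bound; the additive $1/n$ is absorbed since $\Delta_\Scal>0$ may be normalized to be at least a constant.
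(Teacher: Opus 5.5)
Your architecture matches the paper's. It has four parts:
\begin{itemize}
\item a Baraud-type deterministic inequality $T(f,f')\le(1+\tfrac1{\sqrt2})\hellinger^2(\density,f)-(1-\tfrac1{\sqrt2})\hellinger^2(\density,f')+M_n(f,f')$ (Lemma~\ref{prop:mb-eb});
\item a martingale Bernstein bound;
\item a union bound over pairs using $\sum_f e^{-\Delta_\Scal(f)}\le 1$;
\item the comparison $\vartheta(\hat\density)\le\vartheta(\bar f)+1/n$, followed by integration in $\xi$.
\end{itemize}
In Step 3 you lower-bound $T(\hat\density,\bar f)$ directly via the antisymmetry $T(f,f')=-T(f',f)$. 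That is a legitimate substitute for the paper's case split on the sign of $T(f,\hat\density)+L\Delta_\Scal(f)/n-L\Delta_\Scal(\hat\density)/n$.

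\textbf{Step 2 is a genuine gap.} Peeling over the random predictable variation requires a union bound over a number of shells that grows with $n$. This produces an $n$-dependent additive term: about $\log\log n/n$ for a dyadic peeling, and you budget $\log n$. Such a term cannot be absorbed into $L_0$, because the theorem needs $L_0$ universal and $\Delta_\Scal(f)$ may be of order one. So the conclusion you state for Step 2 does not follow from the mechanism you propose.

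The missing observation is that no peeling is needed. You only need the martingale dominated by a fixed fraction of the variance proxy plus $x/n$, not by the sharp $\sqrt{V_n x}/n$. For that, one fixed-$\lambda$ Chernoff bound suffices:
\begin{itemize}
\item With bounded increments $g_i$, the process $\exp(\lambda S_n-c(\lambda)V_n)$ is a supermartingale for suitable $c(\lambda)$, where $S_n=\sum_i(g_i-\expec[g_i\mid\Fcal_i])$ and $V_n=\sum_i\expec[g_i^2\mid\Fcal_i]$.
\item Its expectation is therefore at most one, however random $V_n$ is.
\item This gives $\prob\big(S_n\ge V_n/(2(\kappa-b))+\kappa x\big)\le e^{-x}$, which is the paper's Lemma~\ref{lemma:bernstein-massart}.
\end{itemize}
Combine it with $V_n/n=\int\psi(f,f')^2\density\,d\lambda_n\le 3[\hellinger^2(\density,f)+\hellinger^2(\density,f')]$ and take $x$ proportional to $\Delta_\Scal(f)+\Delta_\Scal(f')+\xi$. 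This yields your Step 2 statement with no logarithm.

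\textbf{Two smaller slips.}
\begin{itemize}
\item The conditional variance is not bounded by a constant times $\int(\sqrt{f'}-\sqrt f)^2\,d\lambda_n$. Take $f=0$ and $f'=\delta$ on a region where $\density$ is of order one: the integrand $\psi^2\density$ is then of order $\density$, while $(\sqrt{f'}-\sqrt f)^2=\delta$. The bound by $\hellinger^2(\density,f)+\hellinger^2(\density,f')$ has to be proved directly, as in Lemma~\ref{lemma:bernstein-var}, starting from $\density\le 2[(\sqrt{\density}-\sqrt{(f+f')/2})^2+(f+f')/2]$.
\item The left inequality in your Step 1 display has $f$ and $f'$ interchanged. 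As written, combined with the right inequality it forces $\hellinger^2(\density,f')\lesssim\hellinger^2(\density,f)$ for every pair, which fails at $f=\density$. The correct lower bound is $c\,\hellinger^2(\density,f)-C\,\hellinger^2(\density,f')\le T(f,f')-M_n(f,f')$. It follows from the upper bound by antisymmetry of $T$ and $M_n$, and it is exactly what your Step 3 uses.
\end{itemize}
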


{\color{black} Theorem~\ref{prop:emp-risk} is standard in $T$ estimation literature, some key references for which are \citep{baraud_estimator_2011,sart_estimation_2014, baraud_new_2017}. }
Classical approaches such as regularised MLE, kernel density estimation (KDE), or fitted Q-iteration typically rely on strong structural assumptions, e.g.\ stationarity, ergodicity, or smoothness of the transition kernel. As mentioned in Section \ref{sec:introduction}, these assumptions are often violated in contextual MDPs with irregular or non-stationary dynamics, rendering such methods either inconsistent or suboptimal. To the best of our knowledge, $T$-estimators are the only estimators capable of accommodating both non-stationarity and irregularity, and are therefore useful despite their shortcomings and we refer the readers to various literature in adaptive estimation \citep{massart_concentration_2007,baraud_estimating_2009,baraud_estimator_2011,sart_estimation_2014} for more details.


Before providing a sketch of the proof, we discuss some implications of the previous theorem. Observe that the statement of the previous theorem makes no assumption on the data generating process, beyond the fact that $X_i$ is Markovian on $(X_{i-1},a_{i-1},G_{i-1})$. Theorem \ref{prop:emp-risk} can then be interpreted as follows: $\hat \density$ is the best estimator one can obtain for \textit{a given sample $(X_{i-1},a_{i-1},G_{i-1})$}, on a \textit{given model class $\Scal$}. There are various interesting choices for $\Scal$, and some are given in Section \ref{sec:examples_of_s}. This results in us providing settings where our estimation method works and gives finite sample guarantees.

\begin{proof}[Proof sketch for Theorem \ref{prop:emp-risk}]
\textbf{Step I.} 
We analyze the sign of
\[
T(f,\hat \density) + L \frac{\Delta_\Scal(f)}{n} - L \frac{\Delta_\Scal(\hat\density)}{n}.
\]
If $T(f,\hat \density) + L \frac{\Delta_\Scal(f)}{n} - L \frac{\Delta_\Scal(\hat\density)}{n}\geq 0$, a direct comparison yields for some universal constants $\alpha$ and $\epsilon$
\[
\alpha \hellinger^2(\density,\hat\density) 
\;\leq\; (1+\varepsilon)\hellinger^2(\density,f) 
+ \tfrac{2L}{n}\Delta_\Scal(f) + \kappa\xi.
\]
\textbf{Step II.} 
In the complementary case, using a martingale Bernstein inequality, we derive a lemma on the concentration of the risk metric (Lemma~\ref{lemma:concentration}) which ensures that, with probability at least $1-e^{-n\xi}$,
\begin{align*}
    & (1-\varepsilon)\hellinger^2(\density,f') + T(f,f') - L\frac{\Delta_\Scal(f')}{n} \leq (1+\varepsilon)\hellinger^2(\density,f) + L\frac{\Delta_\Scal(f)}{n} + \kappa\xi
\end{align*}
for all $f' \in \Scal$. This further implies where $\nu=(1-\epsilon)/\alpha-1$
    \begin{align*}
        & \alpha \hellinger^2(f,f') 
        \;\leq\; (2+\varepsilon+\nu^{-1})\hellinger^2(f,\density) + \frac{2L}{n}\Delta_\Scal(f)  + \kappa\xi + \frac{1}{n}.
    \end{align*}
\textbf{Step III.} 
Combining the bounds from the two cases with the Hellinger triangle inequality
\[
\alpha \hellinger^2(\density,\hat\density)
\;\leq\; 2\alpha \hellinger^2(\density,f) + 2\alpha \hellinger^2(f,\hat\density),
\]
and applying the above control on $\hellinger^2(f,\hat\density)$, we obtain with probability at least $1-e^{-n\xi}$
\[
\Constant \hellinger^2(\density,\hat\density)
\;\leq\; \hellinger^2(\density,f) 
+ L\frac{\Delta_\Scal(f)}{n} + \xi,
\]
where $\Constant = \alpha / \min\{2(2+\alpha+\varepsilon+\nu^{-1}),2\kappa\}$. Taking complementation and integrating both sides by $\xi$ now yields the final result.
\end{proof}
To state the following corollary we need one further definition concerning the squared loss function.
\begin{align*}
d^2(f_1,f_2):=\int_\Xbb (\sqrt f_1-\sqrt f_2)^2 d\mu_\Xbb.
\end{align*}
The proof of the following corollary can be found in Section \ref{sec:prf-rskbd}
\begin{corollary}\label{thm:risk-bound}
    Let $L\geq L_0$ for some universal constant $L_0$ and assume that the hypothesis in Assumptions \ref{assume:model-selection}, and \ref{assume:density-dominance} holds. Then, the selected estimator as defined in \ref{eq:def-estimator} satisfies 
    \begin{align*}
        \Constant \expec\lb \hellinger^2(\density,\hat \density) \rb\!\leq \!\inf_{f\in\Scal}\!\lc\! d^2(\sqrt{\density},f)+\frac{\Delta_\Scal(f)+\dim(f)\log n}{n}\!\rc
    \end{align*}
    for some large enough constant $\Ccal$ depending only on $\kappa$.
\end{corollary}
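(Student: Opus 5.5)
The plan is to start from Theorem~\ref{prop:emp-risk} and turn its right-hand side, an expectation of an infimum involving the random empirical Hellinger loss, into a deterministic bound involving $d^2$. Since $\expec[\inf_f\{\cdot\}]\le \inf_f \expec[\cdot]$, it is enough to bound, for each fixed $f\in\Scal$, the quantity $\expec[\hellinger^2(\density,f)]$ by a multiple of $d^2(\sqrt{\density},f)$ plus $\dim(f)\log n/n$. The $d^2(\sqrt{\density},f)$ notation in the statement is slightly ambiguous; I read it as the $\mu_\Xbb$-weighted squared distance between $\sqrt{\density}$ and $\sqrt{f}$. Under that reading the first step is a change of measure. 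The key observation is
\[
\expec\big[\hellinger^2(\density,f)\big]=\frac{1}{2n}\sum_{i=0}^{n-1}\expec\int_\chi\big(\sqrt{\density}-\sqrt f\big)^2(X_i,a_i,G_i,y)\,\mu_\chi(dy).
\]
Using Assumption~\ref{assume:density-dominance}, each $(X_i,a_i,G_i)$ has a marginal dominated by $\kappa$ times the corresponding marginal of $\mu_\Xbb$. Each summand is therefore at most $\kappa\, d^2$, and averaging gives $\expec[\hellinger^2(\density,f)]\le \kappa\, d^2(\sqrt\density,f)/2$. The dependence of $\Ccal$ on $\kappa$ enters here.

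This alone would not produce the $\dim(f)\log n$ term. That term reflects that the oracle in Theorem~\ref{prop:emp-risk} ranges over the countable class $\Scal$, while the benchmark in the corollary should be a best approximation in a finite-dimensional linear space. I would therefore use the following construction. For each $f$, the span $V_f$ of dimension $\dim(f)$ is discretized by a $1/n$-net in sup norm on the bounded set $\{0\le g\le 1\}\cap V_f$. Its cardinality is at most $(\Ccal n)^{\dim(f)}$, because finite-dimensional bounded sets admit nets of this size. Enlarging $\Scal$ by these nets with penalty $\Delta_\Scal(f)+\Ccal\dim(f)\log n$ keeps the Kraft-type condition $\sum e^{-\Delta}\le 1$ of Assumption~\ref{assume:model-selection}, up to constants. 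Moving from $f$ to its nearest net point $f_n$ changes the squared Hellinger terms by $O(1/n)$ after using $(\sqrt a-\sqrt b)^2\le|a-b|$ and $\mu_\Xbb(\Xbb)=1$, with $\lambda_n$ having total mass $\mu_\chi(\chi)$, which is finite by compactness. Applying Theorem~\ref{prop:emp-risk} to the enlarged class and combining with the change-of-measure bound gives
\[
\Ccal\,\expec[\hellinger^2(\density,\hat\density)]\le \kappa\,d^2+\frac{L(\Delta_\Scal(f)+\Ccal\dim(f)\log n)+1}{n}.
\]
The final steps are to absorb $L$ and $\kappa$ into the constant, drop the $1/n$ term since $n\ge3$ makes $\log n\ge1$, and take the infimum over $f$.

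The main obstacle is the change-of-measure step. The marginal of $(X_i,a_i,G_i)$ must be dominated by that of $\mu_\Xbb$, and the integration in $y$ must be against $\mu_\chi$ consistently with how $\mu_\Xbb$ is defined. Because $\mu_\Xbb$ is only described later in the paper, I would first check whether $\mu_\Xbb=\mu'\otimes\mu_\chi$ for some measure $\mu'$ on $\chi\times\Ibb\times\Gcal$. If it is, integrating $\varphi_i$ over the last coordinate yields a marginal density bounded by $\kappa\,\mu_\chi(\chi)$, which completes the step. If $\mu_\Xbb$ does not have this product form, I would instead define $d^2$ with the averaged marginal measure, so that the bound holds with equality.
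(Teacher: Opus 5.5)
Your skeleton matches the paper's. You discretize each finite-dimensional model and add a $\dim(f)\log n$ term to the penalty so that $\sum e^{-\Delta}\le 1$ survives. You then re-apply Theorem~\ref{prop:emp-risk} to the enlarged class and use the bound $\kappa$ from Assumption~\ref{assume:density-dominance} to turn $\expec[\hellinger^2(\density,\cdot)]$ into a $\mu_\Xbb$-distance. The real difference is how you resolved $d^2(\sqrt{\density},f)$, and it changes what the discretization is for. In its proof the paper redefines $d^2(f,f')=\int(f-f')^2\,d\mu_\Xbb$, so the target is $\int(\sqrt{\density}-f)^2\,d\mu_\Xbb$: the finite-dimensional models approximate $\sqrt{\density}$, not $\density$. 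The paper therefore discretizes in square-root space. It takes a coefficient grid $T_f$ of mesh $2/\sqrt{n\dim f}$ in an orthonormal basis of the span, truncated to $d(g,0)\le 2$, with at most $(30n)^{\dim(f)/2}$ points by the Birg\'e--Massart count. It then feeds the \emph{squares} $g^2$ to the $T$-estimator and, after the change of measure, uses $d(\sqrt{\density},\sqrt{g^2})=d(\sqrt{\density},|g|)\le d(\sqrt{\density},g)$. That is what the $\dim(f)\log n$ term pays for. A net of densities in $\{0\le g\le 1\}\cap V_f$, as you propose, can only produce $\int(\sqrt{\density}-\sqrt{g})^2\,d\mu_\Xbb$. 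This is not comparable to the paper's quantity. If $\sqrt{\density}\in\Scal$, the paper's bound is $O((\Delta_\Scal(f)+\dim(f)\log n)/n)$ while yours need not be small, and the reverse holds if $\density\in\Scal$. So for the intended statement, the missing idea is to discretize the spans as approximants of $\sqrt{\density}$ and use squared grid points as candidates.

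Under your own reading, by contrast, the discretization does no work. Theorem~\ref{prop:emp-risk} applied to $\Scal$ itself, together with $\expec[\inf_f\{\cdot\}]\le\inf_f\expec[\cdot]$ and your change-of-measure step, already gives
\[
\Ccal\,\expec[\hellinger^2(\density,\hat\density)]\le\inf_{f\in\Scal}\Big\{\tfrac{\kappa\,\mu_\chi(\chi)}{2}\int(\sqrt{\density}-\sqrt{f})^2\,d\mu_\Xbb+L\,\tfrac{\Delta_\Scal(f)}{n}\Big\}.
\]
This is below the stated right-hand side, since $\dim(f)\log n/n\ge 0$; an upper bound does not have to ``produce'' a nonnegative slack term. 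It also holds for the estimator actually defined in \eqref{eq:def-estimator}. Enlarging the class, as you do (and as the paper does with its squared grid points), silently replaces that estimator by a different one. Your net step also compares against $f$ itself, which is already in $\Scal$. Finally, your tracking of the factor $\mu_\chi(\chi)$ from integrating out $y$ is more careful than the paper's. The paper absorbs that factor into $\kappa$, even though the corollary claims a constant depending on $\kappa$ alone.
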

{\color{black} Results in the flavor of Corollary~\ref{thm:risk-bound} are standard in $T$ estimation literature and we point the readers to various previous literature for more details~\citep{sart_estimation_2014,baraud_new_2017,birge_model_2006,sart_estimating_2017}.}

\subsection{Examples}\label{sec:examples_of_s}

At this point, it seems reasonable to ground the abstractions of the previous section into some examples. We begin with some model classes which satisfy the Assumption \ref{assume:model-selection}, beginning with dyadic cuts \citep{devore_degree_1990}.

\begin{example}[Piecewise Constant Estimators on Dyadic Cuts]\label{example:dyadic:cut} Let $\chi=\Ibb=\Gcal=[0,1]$. Then, we define the space of all dyadic cuts recursively. Define $\mathcal{M}_0 := \{[0,1]^4\}$. 
For any $\ell\in\mathbb{N}$, let $m\in\mathcal{M}_\ell$ and $k\in m$. 
Thus $k$ is an element of a partition of $[0,1]^4$, so $k\subseteq\mathbb{R}^4$. 
Let $k_1,k_2,\ldots,k_{2^4}$ be the $2^4$ sets obtained by equally dividing $k$ along each axis. 
Let
\[
S(m,k)\;:=\; m\cup\{k_1,k_2,\ldots,k_{2^4}\}\setminus k .
\]
Then
\[
\mathcal{M}_{\ell+1}
:= \Biggl\{\, \bigcup_{m\in\mathcal{M}_\ell}\;\bigcup_{k\in m} S(m,k) \,\Biggr\}
\;\cup\; \mathcal{M}_\ell .
\]
The class of all piecewise constant estimators on this class of dyadic partitions is defined to be $\Scal :=\bigcup_{m\in\Mcal}\lc \sum_{k\in m} a_k\indicator_{k}:a_k >0 \rc$ with the corresponding penalty to be defined as $\Delta_\Scal (f)=|m|-(\log 3)/2$, where $|m|$ is the number of constant pieces for the piecewise function $f$. Obviously, $\Delta_\Scal (f)\geq 0$. It is now a standard result from \cite{baraud_estimating_2009} (see Section 3) that $\sum_{f\in\Scal}e^{-\Delta_\Scal (f)}\leq 1$.
\end{example}

It is intuitively clear from the previous construction that there is no benefit to considering piecewise constant estimators with more than $n$ distinct bins. Consequently, our optimal estimator will lie in $\Mcal_\ell$ such that $\ell=\ocal(n)$. For a formalisation and proof of this fact, we refer the readers to Proposition 3 in \cite{banerjee_adaptive_2025}. \textcolor{black}{The computational cost of finding the estimator is $O(e^{\ell d})$, and we refer the readers to Proposition A.1 in \cite{sart_estimation_2014} for this fact.}

\begin{example}[Splines]\label{example:splines}
    Let $\phi\pow 1,\phi\pow 2,\dots,$ be the orthonormal polynomial basis with respect to the $L_2$ norm, and its corresponding inner product. Let constants $\constant_i\pow \ell\in\Rbb$ induce a triangular array such that for each $\ell$, $\sum\constant_i\pow \ell=1$ and consider 
    \begin{align*}
    \Mcal_\ell = \lc  \sum_{i=1}^\ell a_i\phi\pow i : a_i \in \cup_i\{\constant_i\pow l\}\rc    
    \end{align*}
    We now $\Scal = \bigcup_\ell\Mcal_\ell$ and $\Delta_\Scal(f)=\ell+a$ for $a=\log (\tfrac{e-2}{2})$ whose specifics are given in Section \ref{sec:prf-splineass}.
\end{example}
The following proposition highlights that the functions in Example \ref{example:splines} satisfy Assumption \ref{assume:model-selection}. Its proof is deferred to Section \ref{sec:prf-splineass}
\begin{proposition}\label{prop:spline-assume}
    Consider the class of functions given by $\Scal$ in \cref{example:splines} with corresponding penalty $\Delta_\Scal(f)$. Then,
    \begin{align*}
        \dim f <\infty \text{ and } \sum_{f\in \Scal} e^{-\Delta_\Scal(f)}\leq 1.
    \end{align*}
\end{proposition}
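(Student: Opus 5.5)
The plan is to prove the two claims separately, both directly from the construction in \cref{example:splines}. For the first claim, finite dimensionality is immediate. Every $f\in\Scal$ lies in some $\Mcal_\ell$, so it has the form $\sum_{i=1}^\ell a_i\phi\pow i$. It therefore belongs to the linear span of $\{\phi\pow 1,\dots,\phi\pow \ell\}$, which has $\Lbb^2$-dimension at most $\ell$. Thus $\dim f\le \ell<\infty$. For definiteness I would set $\dim(f)$ to be the smallest $\ell$ with $f\in\Mcal_\ell$. This also fixes $\Delta_\Scal(f)=\ell+a$ unambiguously, since a function may appear in several $\Mcal_\ell$ and assigning it its minimal index only decreases the sum below.

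For the summability, the key step is the counting bound
\begin{align*}
\sum_{f\in\Scal}e^{-\Delta_\Scal(f)}\le \sum_{\ell\ge1}|\Mcal_\ell|\,e^{-\ell-a}.
\end{align*}
It then remains to bound the cardinality $|\Mcal_\ell|$. The coefficients $a_i$ range over the finite set $\{\constant_i\pow \ell\}_i$ attached to row $\ell$ of the triangular array. I will read the construction as one in which the coefficient vector of an element of $\Mcal_\ell$ is essentially determined by the row $\ell$; the normalization $\sum_i\constant_i\pow \ell=1$ pins down a single admissible choice per level. Under this reading $|\Mcal_\ell|$ is bounded by a constant, and in the cleanest interpretation $|\Mcal_\ell|=1$.

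With $|\Mcal_\ell|\le1$, the sum is a geometric series:
\begin{align*}
e^{-a}\sum_{\ell\ge1}e^{-\ell}=e^{-a}\frac{1}{e-1}.
\end{align*}
The constant $a$ is then chosen so that this is at most $1$. The stated value $a=\log\frac{e-2}{2}$ has to be matched against this computation. It may also reflect a series starting at a different index, or a factor of $2$ from the count of admissible coefficient choices, giving a sum like $e^{-a}\cdot\frac{\text{something}}{e-1}$. I would adjust the count so that the final inequality reads exactly $\le1$ with the paper's $a$.

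The main obstacle I anticipate is this counting step. The definition of $\Mcal_\ell$ is ambiguous: taken literally, with $a_i\in\cup_i\{\constant_i\pow \ell\}$ chosen independently for each $i$, one gets $|\Mcal_\ell|\le \ell^\ell$, and $\sum_\ell \ell^\ell e^{-\ell}$ diverges. So the proof must fix an interpretation in which $|\Mcal_\ell|$ grows at most like $c^\ell$ with $c<e$. I would first try the interpretation in which the coefficients are tied to the row, i.e.\ $a_i=\constant_i\pow \ell$. If that fails, I would bound $|\Mcal_\ell|\le 2^{\ell}$ and replace the penalty by $\ell\log(2e)+a$, noting that this changes only constants in the later results.
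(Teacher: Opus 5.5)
Your skeleton matches the paper's: bound the sum by $\sum_{\ell\ge1}|\Mcal_\ell|e^{-\ell-a}$, sum a geometric series, normalize with $a$. But the one step with real content, the size of $\Mcal_\ell$, is never settled, and the reading you lead with is not the construction. The definition lets each coefficient $a_i$ range over the set $\cup_i\{\constant_i\pow \ell\}$. Nothing ties $a_i$ to $\constant_i\pow \ell$, so $|\Mcal_\ell|\le 1$ is an extra assumption, and it does not reproduce the constant in $a$. The paper's proof works with $|\Mcal_\ell|=2^\ell$. It asserts this rather than deriving it, so your remark about the literal $\ell^\ell$ reading is fair. The factor $e-2$ in $a$ comes from exactly that count: $\sum_{\ell\ge1}(2/e)^\ell=\frac{2}{e-2}$. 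Your own criterion (growth $c^\ell$ with $c<e$) already covers $2^\ell$. So the fallback of changing the penalty to $\ell\log(2e)+a$ is unnecessary, and it would prove a different statement. Finally, the plan to ``adjust the count'' until the inequality comes out with the given $a$ runs backwards: the count is fixed by the construction, and $a$ is what you choose.

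Your suspicion about the printed $a$ is justified, but no count can rescue it. Since $e^{-\Delta_\Scal(f)}=e^{-a}e^{-\ell}$, the paper's proof, which writes $e^{a}e^{-\ell}$, has the sign of $a$ flipped. Take $a=\log\frac{e-2}{2}<0$ and any $f\in\Mcal_1$; $\Mcal_1$ is nonempty since it contains $\constant_1\pow 1\phi\pow 1$. Under your minimal-index convention, $\Delta_\Scal(f)=1+a<0$, which violates positivity of the penalty. That single term is already $e^{-1-a}=\frac{2}{e(e-2)}\approx 1.02>1$, whatever interpretation of the $\Mcal_\ell$ you adopt. With a justified bound $|\Mcal_\ell|\le 2^\ell$ and $a=\log\frac{2}{e-2}$, the argument closes exactly:
\[
\sum_{f\in\Scal}e^{-\Delta_\Scal(f)}\le\sum_{\ell\ge1}2^\ell e^{-\ell-a}=\frac{e-2}{2}\sum_{\ell\ge1}\left(\frac{2}{e}\right)^{\ell}=1 .
\]
So two things are missing. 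First, you need a committed, justified bound $|\Mcal_\ell|\le c^\ell$ with $c<e$. Second, $a$ must be chosen from the resulting series rather than tuning the count to fit a prescribed $a$.
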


\subsection{Optimality of the Risk Bound}\label{sec:optimality}

A natural subsequent question is now the optimality of the risk bounds derived in the previous section, which is what we dedicate this section towards. To derive the optimality bounds, we first introduce some notation. 
\begin{definition}~\label{def:holder-space}
    We call a function $f:A\rightarrow\Rbb$ to belong to the H\"older space $\Hbb_\sigma(A)$ with parameter $\sigma\in(0,1]$ and finite norm $\|f\|_\sigma>0$ if $|f(x)-f(y)|\leq \|f\|_\sigma \|x-y\|^\sigma\ \forall x,y\in A$.
\end{definition}
 Recall that $\Hbb^1(A)$ is the space of all Lipschitz smooth functions, and that elements of $\Hbb^\sigma(A)$ are constant functions when $\sigma>1$.

 \begin{definition}~\label{def:besov-space}
   Given a function $f\in L_p(\Xbb),0<p\leq\infty$, and any integer $r$, we defne its
modulus of smoothness of order $r$ as
$$\omega_r(f,t)_p:=\sup_{0<|h|\leq t}\|\Delta_h^r(f,\cdot)\|_{L_p(\Omega)},\quad t>0,$$
where $h\in\mathbb{R}^d$ and $|h|$ is it Euclidean norm. Here, $\Delta_h^r$, is the $r$-th difference
operator,  defined by
$$\Delta_h^r(f,x):=\sum_{k=0}^r(-1)^{r-k}\binom{r}{k}f(x+kh),\quad x\in\Omega\subset\mathbb{R}^d,$$
where this difference is set to zero whenever one of the points $x+kh$ is not in the support of $f$. It is easy to see that for any $f\in L_p(\Omega)$, we have $\omega_{\boldsymbol{r}}(f,t)_p\to0$, Then, for any $\sigma\in(0,1)$, Besov space $\Bbba_q^\sigma(L_p(A))$ consists of all $f$ such functions such that
    \begin{align*}
        |f|_{\Bbba_q^\sigma(L_p(A))} := \begin{cases}
            \int_{t>0} t^{q\sigma-1}(\omega_r(f,t)_p)^q dt \quad 0<q<\infty\\
            \sup_{t\geq 0} t^{q\sigma-1}(\omega_r(f,t)_p)^q \quad q=\infty
        \end{cases}
    \end{align*}
    is finite.
    Then, we define $\Bbba^\sigma(L_p(A))$ as
    \begin{align*}
    \Bbba^\sigma(L_p(A)):= 
        \begin{cases}
            \Bbba_p^\sigma(L_p(A)), \qquad p\in(1,2)\\
            \Bbba_\infty^\sigma(L_p(A)), \qquad p \geq 2
        \end{cases}
    \end{align*}
    with the attached norm $\lV\cdot\rV_{\sigma,p}$.
\end{definition}
\begin{remark}
    With $\sigma\in(0,1)$, we restrict ourselves to isotropic Besov spaces.
\end{remark}
Without losing generality, let $\Xbb=[0,1]^4$. The following corollary (which follows similarly to Corollary 4.3 in \citeauthor{sart_estimation_2014}) establishes the optimality of $\hat \density$. 
\begin{corollary}
    Consider the class of models in Example \ref{example:dyadic:cut}, and the Assumptions \ref{assume:model-selection}, and \ref{assume:density-dominance}. Then, if $\sqrt \density\in\Hbb^\sigma([0,1]^4)$ (or $\sqrt \density\in\Bbba^\sigma(L_p([0,1]^4))$ ), there exists a constant $\Ccal$ depending only on $\kappa,p$, and $\|\sqrt{\density}\|_\sigma$, (respectively $\|\sqrt{\density}\|_{\sigma,p}$) such that
    \begin{align*}
       \Ccal \expec[\hellinger^2(\density,\hat\density)]\leq \lp\frac{\log n}{n}\rp^{\frac{\bar \sigma}{4+\bar\sigma}}
    \end{align*}
    where $\sigma>4(1/p-1/2)\indicator[(1/p-1/2)>0]$.
\end{corollary}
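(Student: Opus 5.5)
The plan is to feed Corollary \ref{thm:risk-bound} with an approximation-theoretic bound for the dyadic class of Example \ref{example:dyadic:cut}, and then balance bias against the penalty-plus-dimension term. For $f\in\Scal$ piecewise constant on a partition $m\in\Mcal_\ell$ we have $\dim(f)=|m|$ and $\Delta_\Scal(f)\asymp|m|$. Corollary \ref{thm:risk-bound} therefore reduces the problem to exhibiting, for each target size $D$, some $f$ with $|m|\lesssim D$ such that
\begin{align*}
d^2(\sqrt{\density},f)\lesssim \Constant(\sqrt{\density})\,D^{-\bar\sigma/4}.
\end{align*}
Here $\bar\sigma$ is the (effective) smoothness index and $\Constant(\sqrt{\density})$ depends only on $\|\sqrt{\density}\|_\sigma$ (resp. $\|\sqrt{\density}\|_{\sigma,p}$). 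Since $\Scal$ contains only nonnegative piecewise constants bounded by $1$, I will apply the bound to $f=g^2$, where $g$ approximates $\sqrt{\density}$; I read the $d^2(\sqrt{\density},f)$ in the corollary as the $L^2(\mu_\Xbb)$ distance between $\sqrt{\density}$ and $\sqrt f=g$. I will also truncate $g$ to $[0,\sqrt{\kappa}]$, which only decreases the error, and rescale so that $f\le 1$. The constant absorbs $\kappa$.

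\textbf{Hölder case.} Take the uniform dyadic partition at depth $j$, so $|m|=2^{4j}=D$. Let $g$ be the value of $\sqrt{\density}$ at a point of each cell, or its cell average. Each cell has diameter $\lesssim 2^{-j}$, so $|\sqrt{\density}-g|\le\|\sqrt{\density}\|_\sigma 2^{-j\sigma}$ uniformly. Hence $d^2\lesssim 2^{-2j\sigma}=D^{-2\sigma/4}$. Any bias exponent of this type gives an optimal choice of $D$: balancing $D^{-2\sigma/4}$ against $D\log n/n$ yields the rate
\begin{align*}
\left(\frac{\log n}{n}\right)^{2\sigma/(4+2\sigma)},
\end{align*}
which matches the stated form with $\bar\sigma=2\sigma$, the normalization I expect the paper intends. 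The $\mu_\Xbb$ versus Lebesgue issue is handled by Assumption \ref{assume:density-dominance}: the density of $\mu_\Xbb$ relative to the product measure is controlled by $\kappa$, so $L^2(\mu_\Xbb)$ errors are dominated by uniform or Lebesgue $L^2$ errors up to constants.

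\textbf{Besov case.} This is the main obstacle. When $p<2$, uniform partitions lose, and I need adaptive (nonlinear) dyadic approximation. I would invoke the DeVore--Yu / Birgé--Massart result (DeVore \& Lorentz; \citealp{devore_degree_1990}): for $h\in\Bbba^\sigma(L_p([0,1]^d))$ with $\sigma>d(1/p-1/2)_+$, there exists a dyadic partition $m$ with $|m|\lesssim D$, obtained by greedy refinement of the cells where the local $L_p$ modulus is large, such that
\begin{align*}
\|h-\Pi_m h\|_{L_2}\lesssim \|h\|_{\sigma,p}D^{-\sigma/d}.
\end{align*}
Here $\Pi_m$ is the $L_2$ projection onto functions constant on the cells of $m$. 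The condition $\sigma>4(1/p-1/2)\indicator[\cdot]$ in the statement is exactly the embedding condition this requires with $d=4$. I must check that the refined partitions lie in $\Mcal_\ell$ for some $\ell$; they do, since each refinement step is an $S(m,k)$ operation. I must also check that the count of such partitions is compatible with $\Delta_\Scal$, which Assumption \ref{assume:model-selection} already encodes.

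With the approximation bound in hand, the Besov case closes exactly like the Hölder case. Plug the approximant into Corollary \ref{thm:risk-bound}, take $D\asymp (n/\log n)^{4/(4+\bar\sigma)}$, and note that $D\le n$ is admissible by the remark following Example \ref{example:dyadic:cut}. This yields $\Ccal\,\expec[\hellinger^2(\density,\hat\density)]\le (\log n/n)^{\bar\sigma/(4+\bar\sigma)}$.
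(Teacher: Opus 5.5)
Your plan matches the paper's proof. The paper gives no separate argument and simply defers to Corollary 4.3 of \cite{sart_estimation_2014}, which does what you propose: feed a dyadic piecewise-constant approximation bound for $\sqrt{\density}$ (uniform partitions in the H\"older case, adaptive DeVore--Yu partitions in the Besov case with $p<2$) into Corollary \ref{thm:risk-bound}, then balance $D^{-2\sigma/4}$ against $D\log n/n$. Your reading $\bar\sigma=2\sigma$ is the natural one, and it also implies the weaker bound if $\bar\sigma=\sigma$ was meant, since $\log n/n<1$.

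Drop the step of rescaling the approximant to force $f\le 1$. Dividing by $\kappa$ moves $\sqrt f$ away from $\sqrt{\density}$, and the step is unnecessary because Corollary \ref{thm:risk-bound} is applied through the discretized linear spans $T_f$ of its proof, where no such bound is imposed. Likewise, $\kappa$ bounds the joint data density with respect to $\mu_\Xbb$, which is itself the product (Lebesgue) measure; it does not bound $\density$ or a density of $\mu_\Xbb$. So $\kappa$ enters only through that corollary, and your truncation at $\sqrt{\kappa}$ is not justified — though it is also not needed, since cell averages of $\sqrt{\density}$ are automatically nonnegative.
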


\section{Optimal Control Determination}\label{sec:cost}
One objective in obtaining the estimates of the transition probabilities is to learn the optimal control for MDPs bearing a cost for state transitions, where the optimality is considered with respect to this cost of transition between states given a control and a context, $L : \mathbb{X} \to \mathbb{R}_{+}$. {\color{black} Therefore,
given a context $G$, we wish to find the solution}
\begin{align}\label{optimization:objective}
    a^*(G) =  \underset{a \in \Ibb}{\text{argmin}} \ C_n(a,G)
\end{align}
where $C_n(a,G)$ is as defined below in \eqref{cost:defn}.  Recall that, 
\[
\lambda_n := \frac{1}{n}\sum_{i=0}^{n-1} \delta_{X_i,a_i,G_i} \otimes \mu_\chi,
\]
and let
\begin{small}
   \begin{align*}
    \lambda_n\pow 2 = \int_{\chi\times\chi}\lambda_n=\frac{1}{n}\sum_{i=1}^n\delta_{a_i,G_i} \, \text{and} \, \lambda_n\pow 1 = \lambda_n/\lambda_n\pow 2. \numberthis\label{eq:lambda1lambda2}
\end{align*}   
\end{small}
Intuitively, $\lambda_n^{(2)}$ is the empirical distribution of the context--action pairs $(a_i,G_i)$ observed in the dataset, while $\lambda_n^{(1)}$ represents the corresponding conditional distribution of state transitions given a particular context--action pair. In other words, $\lambda_n^{(1)}$ captures how states evolve conditional on $(a_i,G_i)$, and $\lambda_n^{(2)}$ records how often each $(a_i,G_i)$ is observed. This decomposition allows us to express the cost functional as an empirical expectation under the observed data distribution. 

Note that $a^*(G)$ can be a set of controls and hence a single optimal control might not be best and in such a scenario any control in these sets are equivalent for our purposes.

Next, as in the introduction, $L:\Xbb\rightarrow \Rbb_+$ denotes the cost function and $C_n(l,G)$ denotes the cost of control $l$ for context $G$ annealed over $\lambda_n\pow 2$. Formally,
\begin{small}
    \begin{align}\label{cost:defn}
    C_n(l,g) = \int L(x,l,g,y) \density(x,l,g,y) \lambda_n\pow 1(dx,dy).
    \end{align}
\end{small}
This is the \textit{empirical} cost of allocating control $l$ for context $G$.
We denote the estimated counterpart of $C$ with $\hat C$, i.e.
\begin{small}
    \begin{align}\label{cost:est}
    \hat C_n(l,g) = \int L(x,l,g,y)\hat \density(x,l,g,y) \lambda_n\pow 1(dx,dy)
\end{align}
\end{small}
and define the following optimization problem 
\begin{align}\label{optimization:objective:approx}
     \hat a^*(G) =  \underset{a \in \Ibb}{{\argmin}} \ \hat C_n(a,G).
\end{align}
which is the estimated analogue of the true optimization problem as defined in~\eqref{optimization:objective}. {\color{black} For usual contextual RL MDPs, one can use any algorithm (we point readers to standard texts like \cite{sutton_reinforcement_2018})  to find $\hat a^{*}$ like UCB, V-iteration, Q-iteration, policy gradient etc.} Our objective will be to exhibit that under certain simple assumptions on the cost function, the recovered control provides the optimal cost under the following mild assumption.
{\color{black}
\begin{assumption}\label{assume:cost:cont}
We assume that the true density $\density \in \Scal$, all densities in $\Scal$ are linear in $a$ and that the cost function $L$ is bounded by a positive constant $\|L\|_{\infty}$. Furthermore, we assume that $\Delta(s)<\infty$
\end{assumption}
}
{\color{black} We briefly note that the assumption $\density \notin \Scal$ may be violated in practice by extrinsic factors like distribution shift, and we deal with this case separately in Section \ref{sec:distribution-shift}. }
We now have the following theorem.

{\color{black}
\begin{theorem}\label{prop:cost:bound:nogap}
Let Assumptions~\ref{assume:model-selection}, and \ref{assume:cost:cont} hold. Then for any $\hat a_n \in \hat a^*(G)$ and $a \in a^*(G)$, one has
\begin{align*}
   & \mathbb{E}\left[\int\lp C_n(\hat a_n,G) - C_n(a,G) \rp d\lambda_n \pow 2\right] = O\left(\sqrt{\frac \rn n}\right)
\end{align*}
\end{theorem}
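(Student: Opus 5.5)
The plan is the standard plug-in decomposition followed by a Cauchy--Schwarz reduction to the Hellinger risk controlled in Theorem~\ref{prop:emp-risk}. Fix $G$ and write $\hat a_n\in\hat a^*(G)$, $a\in a^*(G)$. By optimality of $\hat a_n$ for $\hat C_n$ we have $\hat C_n(\hat a_n,G)\le \hat C_n(a,G)$. Hence
\begin{align*}
C_n(\hat a_n,G)-C_n(a,G) \le \bigl[C_n(\hat a_n,G)-\hat C_n(\hat a_n,G)\bigr]+\bigl[\hat C_n(a,G)-C_n(a,G)\bigr] \le 2\sup_{l\in\Ibb}\bigl|\hat C_n(l,G)-C_n(l,G)\bigr|.
\end{align*}
Integrating against $\lambda_n\pow 2$, it suffices to bound $\expec\bigl[\int \sup_l |\hat C_n-C_n|\,d\lambda_n\pow 2\bigr]$.

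For a fixed $(l,g)$, boundedness of $L$ gives
\[
|\hat C_n(l,g)-C_n(l,g)|\le \|L\|_\infty\int |\hat\density-\density|\,d\lambda_n\pow 1 .
\]
Next factor $|\hat\density-\density|=|\sqrt{\hat\density}-\sqrt{\density}|\,(\sqrt{\hat\density}+\sqrt{\density})$ and apply Cauchy--Schwarz. The first factor yields $\bigl(\int(\sqrt{\hat\density}-\sqrt{\density})^2\,d\lambda_n\pow 1\bigr)^{1/2}$. The second factor is bounded by a constant, since $0\le f\le1$ on $\Scal$, $\density\in\Scal$, and $\Xbb$ is compact.

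The supremum over $l$ is where the assumption that all densities in $\Scal$ are linear in $a$ enters. The map $l\mapsto \hat C_n(l,g)-C_n(l,g)$ is then affine on the compact set $\Ibb$, so its supremum is attained at extreme points (or is controlled by the value on the observed design points, after rewriting via the disintegration $\lambda_n=\lambda_n\pow 1\otimes\lambda_n\pow 2$). This step is what lets us replace the pointwise-in-$l$ quantity by the $\lambda_n$-integrated one. I expect this to be the main obstacle: one must argue that the $\lambda_n\pow 2$-average of $\sup_l$ of the square-root discrepancy is dominated, up to a constant, by $\int(\sqrt{\hat\density}-\sqrt{\density})^2 d\lambda_n=2\hellinger^2(\density,\hat\density)$. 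I would first try to use linearity in $a$ to express the value at any $l$ as a fixed convex combination of values at observed $a_i$, and then apply Jensen. Granting this, a further application of Jensen ($\expec\sqrt{Z}\le\sqrt{\expec Z}$) gives
\[
\expec\Bigl[\int\bigl(C_n(\hat a_n,G)-C_n(a,G)\bigr)d\lambda_n\pow 2\Bigr]\le \Constant\|L\|_\infty\sqrt{\expec[\hellinger^2(\density,\hat\density)]}.
\]

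Finally, invoke Theorem~\ref{prop:emp-risk} with the oracle choice $f=\density\in\Scal$, which is allowed by Assumption~\ref{assume:cost:cont}. The bias term $\hellinger^2(\density,\density)$ vanishes, leaving $\expec[\hellinger^2(\density,\hat\density)]\le \Constant L\Delta_\Scal(\density)/n$ with $\Delta_\Scal(\density)<\infty$. If instead one wants a uniform form, Corollary~\ref{thm:risk-bound} gives $(\Delta_\Scal(\density)+\dim(\density)\log n)/n$. Since $\Delta_\Scal(\density)$ and $\dim(\density)$ are fixed constants, this is $O(\log n/n)$, and taking the square root gives the claimed $O(\sqrt{\log n/n})$.
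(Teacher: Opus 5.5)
Your route is the paper's: the plug-in decomposition via $\hat C_n(\hat a_n,G)\le \hat C_n(a,G)$, the bound by $\|L\|_\infty\int|\density-\hat\density|$, the factorisation with Cauchy--Schwarz, and the oracle inequality at $f=\density\in\Scal$. The gap is the step you flag and then grant. You need the discrepancy between $\density$ and $\hat\density$ at the selected controls $a(G),\hat a_n(G)$ (or at $\sup_l$), integrated against the conditional measure $\lambda_n\pow 1$, to be dominated by $\int(\sqrt{\hat\density}-\sqrt{\density})^2\,d\lambda_n$. That integral only sees the logged triples $(X_i,a_i,G_i)$, each weighted by its empirical frequency, and your sketched mechanism does not bridge the two. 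The map $l\mapsto \hat C_n(l,g)-C_n(l,g)$ is not affine even when $\density,\hat\density$ are. This is because $L(x,l,g,y)$ is only assumed bounded in $l$, and $\lambda_n\pow 1$, the conditional of $\lambda_n$ given $(l,g)$, changes with $l$. More fundamentally, writing the value at $l$ as a convex combination of values at logged controls and applying Jensen needs two things. The logged controls at that context must surround $l$, and the convex weights must be bounded by a constant multiple of their logged frequencies. Nothing in Assumptions~\ref{assume:model-selection} and \ref{assume:cost:cont} provides either.

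A two-point example shows the step is false, even when every control is logged. Take $\Gcal=\{g_0\}$ and $\Ibb=\{0,1\}$, so linearity in $a$ is vacuous. Let $\chi=\{1,2\}$ with counting measure, $L=\indicator[y=1]$, and logged controls $a_i=0$ for $i<n-1$ and $a_{n-1}=1$. Set $\density(\cdot,0,g_0,1)=\hat\density(\cdot,0,g_0,1)=\tfrac14$, $\density(\cdot,1,g_0,1)=\tfrac34$ and $\hat\density(\cdot,1,g_0,1)=0$. Then $\hellinger^2(\density,\hat\density)=1/(2n)$. However, $\lambda_n\pow 1$ at $(1,g_0)$ carries no factor $1/n$, so $a^*=0$, $\hat a_n=1$, and the cost gap is of order one rather than $O(n^{-1/2})$.

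Hence any argument that passes only through $\hellinger^2(\density,\hat\density)$ needs an extra coverage (concentrability) hypothesis. In the discrete case, one form is a uniform bound $C$ on $\lambda_n\pow 2(\Ibb\times\{g\})/\lambda_n\pow 2(\{(l,g)\})$ over the pairs $(l,g)$ at which costs are evaluated; in general, a Radon--Nikodym analogue. With it, your Cauchy--Schwarz/Jensen chain goes through with $\sqrt{C}$ in the constant.

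Be aware that the paper's proof does not supply this step either. It moves from $a^*$ to the logged control by its equality (i), attributed to linearity in $a$, and that equality fails in the same example. So this is not an idea you missed from the paper, but it is a genuine hole in the argument as stated.
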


Theorem~\ref{prop:cost:bound:nogap} implies that upon using our scheme, on average, we select the optimal treatment in most realizations, with the probability of the sub-optimal control(s) choice diminishing to $0$ with increasing sample size. In the following paragraphs, we outline the key steps in the proof. 

\begin{proof}[Proof sketch of Theorem~\ref{prop:cost:bound:nogap}]
    \textbf{Step I.} In the first step, we redefine the problem in terms of the value functions, which are considered as negative cost functions. Then we show 
\begin{align*}
    &-C_n(a^*,G) +C_n(\hat a^*,G)\le  -C_n(a^*,G) +C_n(\hat a^*,G) + \hat C_n(a^*,G) -\hat C_n(\hat a^*,G).
\end{align*}
\textbf{Step II.} 
For the first term we leverage the definition of cost and the Hellinger distance with the Cauchy-Schwarz inequality to establish
\begin{align*}
    \mathbb{E}\left[\int \left(-C_n(a^*,G) + \hat C_n(a^*,G)\right) d\lambda_n\pow 2\right] \le  \|L\|_{\infty} \sqrt{2\,} \frac{\left(1+\text{dim}(s)\right)}{\Ccal}\sqrt{\frac{\log n}{n}}.
\end{align*}
\textbf{Step III.} We establish the same bound for the second term and establish the proof.
\end{proof}
}
{\color{black} At this point, we note that the performance guarantee on $\hat a^{*}$ in Theorem \ref{prop:cost:bound:nogap} is derived using a simple plug-in estimator based on the model $\hat \density$. It is widely known that plug-in approach works well for reinforcement learning tasks \citep{ agarwal_model-based_2020,zhu_uncertainty_2024}. We now show that the rate function derived in Theorem \ref{prop:cost:bound:nogap} is optimal. To that end, we define the minimax risk. Let $\Mcal$ be the set of all contextual MDPs. Then the minimax risk is defined as
\newcommand{\mdp}{\operatorname{mdp^*}}
\begin{align*}
    \Rcal_n := \min_{\hat a^*\in \Ibb}\max_{\mdp\in\Mcal}\expec\lb\int C_n(\hat a^*,G)-C_n(a^*,G)\lambda_n\pow 2\rb
\end{align*}

Observe that the rate of Theorem \ref{prop:cost:bound:nogap} matches the known upper bounds for non-stationary MDPs like contextual bandits \cite{lattimore2020bandit}, which has a known lower bound $\sqrt{n}$. On the other hand, bandits (even in the offline setting) do not involve transition functions, and the lower bound proofs do not directly apply. Theorem \ref{thm:minimax} improves upon existing literature by showing that the optimal cost cannot be improved over $\sqrt{n}$. This does so by directly linking the estimation problem for the transition kernel with choosing the optimal control and shows that a mistake is made if the transition kernel is not estimated correctly. 

\begin{theorem}\label{thm:minimax}
   Let Assumptions \ref{assume:cost:cont} hold. Then, minimax risk $\Rcal_n$ satisfies
    \begin{align*}
        \Rcal_n\geq \Omega\lp\sqrt{\frac 1n}\rp 
    \end{align*}
\end{theorem}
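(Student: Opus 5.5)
We will prove the lower bound with a two-point (Le Cam) argument, using the fact that the minimax risk is at least the risk over any sub-family of contextual MDPs. We restrict to a family in which the context and covariate design play no role. Take $\Gcal$ to be a single point $g_0$, $\chi=[0,1]$ with $\mu_\chi$ Lebesgue, $\Ibb=[0,1]$, and let the logged controls $a_i$ be drawn i.i.d.\ uniform on $\{0,1\}$ independently of everything else, so that $\lambda_n\pow 2$ is supported on $\{0,1\}\times\{g_0\}$. We pick a bounded cost, e.g.\ $L(x,l,g,y)=\indicator[y\le 1/2]$, so that $\|L\|_\infty=1$. Then $C_n(l,g_0)=\int_0^{1/2}\density(x,l,g_0,y)\,dy$ averaged over the empirical $x$'s.

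The two hypotheses are densities linear in the control, as Assumption~\ref{assume:cost:cont} requires. For $\theta\in\{+1,-1\}$ we set
\[
\density_\theta(x,l,g_0,y)=1+\theta\,\delta\,(2l-1)\,h(y),\qquad h=\indicator_{[0,1/2]}-\indicator_{(1/2,1]},
\]
with $\delta\in(0,1/2)$; the first step is to check that each $\density_\theta$ is a valid transition density. Both are bounded by $2$, so Assumption~\ref{assume:density-dominance} holds. They can be included in a finite class $\Scal$ (with, say, $\Delta=\log 2$), so Assumption~\ref{assume:cost:cont} holds, with the cost bound verified as above. Under $\density_\theta$ we have $C_n(l,g_0)=\tfrac12+\tfrac{\theta\delta}{2}(2l-1)$. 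Hence the optimal control is $l=0$ when $\theta=+1$ and $l=1$ when $\theta=-1$. For any control $\hat a$ the regret is $\tfrac{\delta}{2}\cdot|2\hat a-1-(-\theta)|$, which is at least $\tfrac{\delta}{2}\,\indicator[\text{$\hat a$ on the wrong side of }1/2]$. The second step is therefore a reduction: any estimator $\hat a^*$ induces a test $\hat\theta$ of $\theta$, and
\[
\max_\theta \expec_\theta[\text{regret}]\ \ge\ \tfrac{\delta}{4}\bigl(1-\mathrm{TV}(P_{+1}^{n},P_{-1}^{n})\bigr).
\]
Here $P_\theta^n$ is the law of the whole trajectory.

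The third step bounds the total variation by Hellinger distance. Because the transitions factor and the controls are chosen independently of $\theta$, the chain rule for Hellinger affinity gives $1-H^2(P_{+1}^n,P_{-1}^n)=\bigl(1-H^2(\density_{+1},\density_{-1})\bigr)^n$, where the one-step distance is averaged over the control. One computes $H^2(\density_{+1},\density_{-1})\asymp\delta^2$. Choosing $\delta=c/\sqrt n$ keeps the affinity bounded below, so $\mathrm{TV}\le\sqrt2\,H\le 1/2$, say. Substituting into the reduction gives $\Rcal_n\ge c/(8\sqrt n)$, i.e.\ $\Omega(n^{-1/2})$.

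The main obstacle is to make the reduction rigorous given the paper's somewhat informal definition of $\Rcal_n$. Two issues arise. First, $\hat a^*$ must be allowed to depend on the data; we interpret it as a measurable map from the trajectory to $\Ibb$. Second, $\lambda_n\pow1$ and $\lambda_n\pow2$ are random. We handle the second by choosing $\density_\theta$ and $L$ so that $C_n$ does not depend on the empirical $x$'s, as above. Making the design independent of $\theta$ ensures that the likelihood ratio involves only the transition factors, and this is what the tensorization in the third step needs.
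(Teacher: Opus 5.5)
Your two-point argument is correct, and it takes a genuinely different route from the paper's.

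\textbf{The paper's route.} It works on a finite chain ($\chi=\{1,\dots,d+1\}$, $\Ibb=\{1,2\}$) and perturbs the row of state $d+1$ under control $1$ along a $d/2$-dimensional sign hypercube. It then imports the Wolfer--Kontorovich lower bound for estimating a Markov transition matrix in $\ell_1$. Finally, a majority-of-signs step argues that an estimation error of size $16\epsilon$ makes the plug-in rule pick the wrong control with probability at least $1/4$, and it takes $\epsilon\asymp n^{-1/2}$.

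\textbf{Your route.} You make the transitions independent of the current state, so the log is effectively $n$ i.i.d.\ pairs $(a_i,X_{i+1})$. For each control the one-step affinity is the constant $\sqrt{1-\delta^2}$, so the Hellinger affinity tensorizes exactly. Also $C_n$ is a deterministic function of $(\theta,l)$, so the randomness of $\lambda_n^{(1)},\lambda_n^{(2)}$ never enters.

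\textbf{What each buys.} Your argument applies Le Cam's bound to every measurable $\hat a^*$. The paper's reduction, by contrast, effectively treats only plug-in rules built from a transition estimate drawn from the hypercube family. It also assumes the control-$2$ kernel is known and bounds the conditional probability of a mistake heuristically. Its literal choice $\epsilon=1/\sqrt n$ makes the factor $1-n\epsilon^2$ vanish, so a small constant is needed there. What the paper's route aims at, and your bandit-like instance does not show, is a direct link between control selection and transition-kernel estimation in a genuinely Markovian, $d$-dimensional family, which could in principle track dependence on $d$ and $p_\star$. For the $n^{-1/2}$ rate the theorem actually claims, your instance suffices and gives explicit constants.

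\textbf{One detail to tighten.} You take $\Ibb=[0,1]$ but log controls only in $\{0,1\}$. Your formula $C_n(l,g_0)=\tfrac12+\tfrac{\theta\delta}{2}(2l-1)$ for $l\in(0,1)$ then depends on reading $\lambda_n^{(1)}$ as the empirical $x$-marginal times $\mu_\chi$. Under the conditional reading $\lambda_n/\lambda_n^{(2)}$, or the paper's own convention of multiplying by $\indicator[l\in\Ical]$, the cost of an unlogged control is undefined or zero. That would hand the learner a zero-regret choice.

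The fix is to take $\Ibb=\{0,1\}$ with counting measure; the paper likewise uses a two-point control set. Both controls appear in the log with probability $1-2^{1-n}$ under either hypothesis, and on that event $C_n$ equals your value under any reading. Since regret is nonnegative, discarding the complement costs only an exponentially small additive term in the testing bound.
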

Observe in contrast with Theorem \ref{prop:cost:bound:nogap} that Assumption \ref{assume:model-selection} is not relevant towards proving the lower bound. It is only relevant towards proving the upper bound since it places a selection procedure on the class of models. Finally, we make note that Theorem \ref{thm:minimax} shows that Theorem \ref{prop:cost:bound:nogap} is \textit{rate optimal}, but it is unclear whether it is also optimal on the model parameters. It remains an important open question for future studies in this direction. 

\begin{remark}
We remark that the minimax risk is achieved by a plug-in estimator which was previously known in finite state-control spaces \citep{agarwal_model-based_2020,li_settling_2022-1}. Our results therefore, extend this important body of literature by extending it to compact (but possibly infinite) state-control spaces.  
\end{remark}

\section{Offline Policy Evaluation and Distribution Shift}\label{sec:distribution-shift}

Offline policy evaluation (OPE) is a key problem in offline RL settings. In this section, we show how the results of the previous section can be used for offline policy evaluation and then extend those results to the setting where the data faces distribution shift. Let $\Delta(\mathbb{I})$ denote the probability simplex on the control space, and suppose $\pi : \chi\times\Gcal \to \Delta(\mathbb{I})$ is a given stationary stochastic policy. We only consider discounted MDPs so that a stationary policy stays optimal \citep{bertsekas_dynamic_2011}. Let $\lambda_n^\Gcal:=\tfrac{1}{n}\sum \delta_{G_i}$, and $\lambda_n^\chi:=\sum\delta_{X_i}$ .
Recall the definition of integral operators \citep{bakry_analysis_2014} and observe that under policy \(\pi\), the true transition operator and the plug-in transition operator on the space of bounded continuous functions are given by
\[
P_\pi f(x)
:=
\int \int f(y)\,\density(x,\pi(x,g),g,y)\,d\lambda_n^\Gcal d\mu_\chi,
\]
and
\[
\hat P_\pi f(x)
:=
\int \int f(y)\,\hat{\density}(x,\pi(x,g),g,y)\,d\lambda_n^\Gcal d\mu_\chi,
\]
where $f$ is any bounded continuous function. For simplicity, we assume that $\hat \density$ is a density, which implies that $P_\pi$ and $\hat P_\pi$ are Markov operators. In particular, they have bounded operator norms $\|P_\pi\|_{\mathrm{op}}\leq 1$ and $\|\hat P_\pi\|_{\mathrm{op}}\leq 1$. Let $\beta$ be the discount factor. Finally, let $r_\pi$ be the expected cost function corresponding to policy $\pi$.

Recall from \cite{bertsekas_dynamic_2011} that the stationary equation for the infinite horizon total value of a policy.
\begin{align*}
    V_\pi = r_\pi +\beta P_\pi V_\pi
\end{align*}
and its plug in counterpart
\begin{align*}
   \hat V_\pi = r_\pi + \beta \hat P_\pi \hat V_\pi.
\end{align*}
Then we have the following proposition
\begin{proposition}\label{prop:value-bound}
    Let $\|\cdot\|_{L_2(\nu)}$ denote the $L_2$ norm with respect to a generic measure $\nu$, and assume a bounded expected reward function with sup-norm $\|r\|_{\infty}$. Then, the offline policy evaluation for a given policy $\pi$ satisfies the following error bound
    \[
        \expec\lb\|V_\pi-\hat V_\pi\|_{L_2(\lambda_n^\chi)}\rb
        \le
        \frac{2\sqrt{2}\beta}{(1-\beta)^2}\,
        \|r_\pi\|_\infty\expec[\Hcal^2(\density,\hat \density)].
    \]
\end{proposition}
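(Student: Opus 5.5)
The plan is the standard simulation-lemma argument, followed by a conversion of a total-variation-type discrepancy into the empirical Hellinger loss via Cauchy--Schwarz. Subtracting the two Bellman equations and adding and subtracting $\beta P_\pi\hat V_\pi$ gives
\begin{align*}
V_\pi-\hat V_\pi = \beta P_\pi(V_\pi-\hat V_\pi) + \beta (P_\pi-\hat P_\pi)\hat V_\pi ,
\end{align*}
so that $V_\pi-\hat V_\pi=\beta(I-\beta P_\pi)^{-1}(P_\pi-\hat P_\pi)\hat V_\pi$. Since $P_\pi$ is a Markov operator with $\|P_\pi\|_{\mathrm{op}}\le 1$, the Neumann series gives $\|(I-\beta P_\pi)^{-1}\|_{\mathrm{op}}\le (1-\beta)^{-1}$. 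Similarly, $\|\hat P_\pi\|_{\mathrm{op}}\le1$ gives $\|\hat V_\pi\|_\infty\le \|r_\pi\|_\infty/(1-\beta)$. Together these produce the factor $\beta/(1-\beta)^2$ times $\|r_\pi\|_\infty$, multiplied by a bound on $\sup_{\|f\|_\infty\le 1}\|(P_\pi-\hat P_\pi)f\|$ in the relevant norm. Here I would take the operator bound for $(I-\beta P_\pi)^{-1}$ in sup norm and pass to $L_2(\lambda_n^\chi)$ at the end, since the sup norm dominates.

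The second step bounds $|(P_\pi-\hat P_\pi)f(x)|\le \|f\|_\infty\int\!\int |\density-\hat\density|(x,\pi(x,g),g,y)\,d\lambda_n^\Gcal d\mu_\chi$. Factoring $|\density-\hat\density|=|\sqrt\density-\sqrt{\hat\density}|\,(\sqrt\density+\sqrt{\hat\density})$ and applying Cauchy--Schwarz bounds this by $\bigl(\int(\sqrt\density-\sqrt{\hat\density})^2\bigr)^{1/2}\bigl(\int(\sqrt\density+\sqrt{\hat\density})^2\bigr)^{1/2}$. The second factor is at most $2$ because both are densities in $y$, which accounts for the $2\sqrt2$ constant together with the $1/2$ in the definition of $\hellinger^2$. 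Squaring, averaging over $x\sim\lambda_n^\chi$, and identifying the resulting empirical measure over $(x,\pi(x,g),g)$ with $\lambda_n$ gives $\|(P_\pi-\hat P_\pi)f\|_{L_2(\lambda_n^\chi)}\lesssim \|f\|_\infty\,\hellinger(\density,\hat\density)$. Taking expectations completes the chain.

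The main obstacle is the final form of the bound. The argument naturally yields $\expec[\hellinger(\density,\hat\density)]$, not the $\expec[\hellinger^2(\density,\hat\density)]$ stated in the proposition. Jensen gives $\expec[\hellinger]\le \sqrt{\expec[\hellinger^2]}$, which is the wrong direction for a squared-loss statement. I would therefore try to establish the bound as written by working with the squared $L_2$ norm, showing $\expec\|V_\pi-\hat V_\pi\|^2_{L_2(\lambda_n^\chi)}\lesssim \expec[\hellinger^2]$, and then reading the stated inequality in that form. Failing that, I would record that the honest conclusion carries $\sqrt{\expec[\hellinger^2(\density,\hat\density)]}$ in place of $\expec[\hellinger^2(\density,\hat\density)]$. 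A second, minor obstacle is matching the evaluation measure. The policy-induced pairs $(x,\pi(x,g),g)$ need not coincide with the logged $(X_i,a_i,G_i)$, so relating the integral to $\lambda_n$ may require a concentrability-type identification, which I would assume implicitly as the paper's notation suggests.
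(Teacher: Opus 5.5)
Your skeleton (subtract the Bellman equations, invert $I-\beta P_\pi$ by a Neumann series, bound $\|\hat V_\pi\|_\infty\le\|r_\pi\|_\infty/(1-\beta)$) is exactly the paper's. Your kernel-level factorization $|\density-\hat\density|=|\sqrt{\density}-\sqrt{\hat\density}|\,(\sqrt{\density}+\sqrt{\hat\density})$ followed by Cauchy--Schwarz is the correct version of what the paper attempts with ``operator square roots''. Your diagnosis of the final step is also right, but it means the proposal does not prove the statement. No proof can, because as written the inequality is false: its left side is first order in the perturbation $\hat\density-\density$, while $\hellinger^2$ is second order.

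For a counterexample, take $\chi=\{1,2\}$ with counting measure, $\Ibb$ and $\Gcal$ singletons, $\density(x,\cdot,\cdot,1)=\tfrac12$ and $\hat\density(x,\cdot,\cdot,1)=\tfrac12+\varepsilon$ for both $x$, and $r_\pi=\indicator_{\{1\}}$. Choosing $\Scal=\{\hat\density\}$ makes the estimator deterministic. Then $V_\pi-\hat V_\pi\equiv-\beta\varepsilon/(1-\beta)$, while $\hellinger^2(\density,\hat\density)\le\varepsilon^2$ for $\varepsilon\le\tfrac14$. So the claimed bound fails for every small $\varepsilon>0$.

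Consequently your first rescue is not available. The inequality $\expec\|V_\pi-\hat V_\pi\|^2_{L_2(\lambda_n^{\chi})}\lesssim\expec[\hellinger^2]$ is a different statement. It only yields $\expec\|V_\pi-\hat V_\pi\|_{L_2(\lambda_n^{\chi})}\lesssim\sqrt{\expec[\hellinger^2]}$, which is your fallback and the correct form of the result.

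The paper reaches $\hellinger^2$ only through invalid steps:
\begin{itemize}
\item The factorization $P_\pi-\hat P_\pi=(P_\pi^{1/2}-\hat P_\pi^{1/2})(P_\pi^{1/2}+\hat P_\pi^{1/2})$ requires commuting square roots, and the operator with kernel $\sqrt{\density}$ is not a square root of $P_\pi$.
\item Its displayed inequality bounds the \emph{squared} norm $\|(P_\pi-\hat P_\pi)\hat V_\pi\|^2$ by a product of unsquared factors. Taken at face value this gives a bound linear in $\hellinger$; the square is then silently dropped.
\item It identifies $\|P_\pi^{1/2}-\hat P_\pi^{1/2}\|$ with $\hellinger^2$, whereas the $L_2(\lambda_n)$ size of $\sqrt{\density}-\sqrt{\hat\density}$ is $\sqrt2\,\hellinger$.
\end{itemize}

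Even for the square-root version your plan has a real gap, and the measure issue you call minor is the heart of it. You bound $(I-\beta P_\pi)^{-1}$ in sup norm but $(P_\pi-\hat P_\pi)\hat V_\pi$ only in $L_2(\lambda_n^{\chi})$, and these do not compose. The resolvent $\sum_t\beta^tP_\pi^t$ propagates the one-step error from every state reachable from the logged $X_i$, while $L_2(\lambda_n^{\chi})$ sees only values at the $X_i$. For non-atomic $\mu_\chi$, $P_\pi$ is not even well defined on $L_2(\lambda_n^{\chi})$.

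The two available routes each need something $\hellinger$ does not give:
\begin{itemize}
\item The sup-norm route needs $\sup_x\int\!\!\int|\density-\hat\density|(x,\pi(x,g),g,y)\,d\lambda_n^{\Gcal}\,d\mu_\chi$, which the empirical Hellinger loss does not control.
\item The $L_2$ route needs a domination condition. The discounted occupancy measure of $P_\pi$ started from $\lambda_n^{\chi}$, pushed forward to triples $(x,\pi(x,g),g)$ with $g\sim\lambda_n^{\Gcal}$, must be dominated by the data measure defining $\hellinger$.
\end{itemize}
This domination already fails at $t=0$ in general. Your averaging produces atoms at $(X_i,\pi(X_i,G_j),G_j)$ for all pairs $(i,j)$, while $\lambda_n$ charges only the logged $(X_i,a_i,G_i)$. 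With a continuum of controls, the policy triples typically have $\lambda_n$-measure zero.

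You therefore need an explicit hypothesis: either a concentrability constant $C_\pi$, or a uniform-in-$x$ Hellinger bound, which the paper's oracle inequalities do not provide. With $C_\pi$, apply Jensen to the weights $(1-\beta)\beta^t$ and the Markov operators $P_\pi^t$, and normalize $\lambda_n^{\chi}$ by $1/n$ (the paper's definition omits this factor). Your argument then gives
\[
\expec\|V_\pi-\hat V_\pi\|_{L_2(\lambda_n^{\chi})}\le\frac{2\sqrt2\,\beta}{(1-\beta)^2}\,\|r_\pi\|_\infty\sqrt{C_\pi}\,\sqrt{\expec[\hellinger^2(\density,\hat\density)]}.
\]
The paper's proof has the same hole: it combines a Neumann-series bound in an unspecified operator norm with $L_2(\lambda_n^{\chi})$ estimates.
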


\subsection{Presence of Distribution Shift}
In the field of Machine Learning (ML) and data-driven applications, one of the significant challenges is the change in data distribution between the training and deployment stages, commonly known as distribution shift. Most relevant to our setting is \textit{covariate shift} \citep{tamang_handling_2025} where the features of the underlying model shift between the training and deployment. In this section, we show how our results can be extended to the presence of distribution shifts. 

We will start with making appropriate assumptions. Assume that the true data (given by $\{(X_i,a_i,G_i)\}_{i=1}^n$) generating distribution is $\density_0$ while for the test dataset (given by $\{(X_i\pow\star,a_i\pow\star,G_i\pow\star)\}_{i=1}^m$), has the data generating distribution $\density_\star$. That is,
\begin{align*}
        \prob\lp X_{i+1}\in dy\mid X_i = x,G_i=g, a_i=l\rp &=\density_0(x,l,y)\mu_\chi(dx)\qquad \qquad  \text{ and } \\
        \prob\lp X_{i+1}\pow \star\in dy\mid X_i\pow \star = x, G_i\pow\star=g,a_i\pow\star=l\rp & =\density_\star(x,l,y)\mu_\chi(dx)
\end{align*}
For the purposes of this section, we will make the simplifying assumption that the controls are Markovian and that the contexts are i.i.d. (as is often the case in practice). Formally, we make the following assumption. 
\begin{assumption}\label{assume:misspec}
Let $x,y\in\chi$, $l\in \Ibb$, $g\in\Gcal$. This assumption is stated in parts.
\begin{itemize}
    \item $G_i$ are i.i.d. with distribution $\gamma$ and for $\Ical\subset\Ibb$, the control distribution satisfies for some policy distribution $\pi$
    \begin{align*}
        \prob(a_i\in dl\mid X_i=x,G_i=g) = \prob(a_i\pow\star\in dl\mid X_i\pow\star=x,G_i\pow \star=g)= \pi(x,l,g)\mu_\Ibb(dl)
    \end{align*}
    Observe that the previous assumption implies that $(X_i,a_i,G_i)$ jointly forms a Markov chain with transition density $\density\pi\gamma$ and $(X_i\pow\star,a_i\pow\star,G_i\pow\star)$ does the same with transition density $\density_\star\pi\gamma$. Let $\nu$ and $\nu_\star$ be the corresponding invariant distributions. Our second assumptions will be on the gap of the distribution shift.
    \item We assume that the distance of the distribution shift is non-negative in the Hellinger metric. Formally,
    \begin{align*}
       \hcal(\density_\star) := \int\lb\lp\sqrt{\density_0}-\sqrt{\density_\star}\rp^2\nu_\star\rb\mu_\chi\mu_\chi\mu_\Ibb\mu_\Gcal> 0
    \end{align*}
    where we have suppressed the arguments of the functions for notational convenience.
\end{itemize}
\end{assumption}
We briefly discuss the previous assumption. The first part of Assumption \ref{assume:misspec} is to simplify the analysis, whereas the second part assumes a baseline signal in the shift of the distribution. Note that a distribution shift can both be stationary and non-stationary. The distribution shift is said to be stationary if $\nu=\nu_\star$. Since invariant distributions are not unique to transition kernels this can happen even if $\density_0\neq \density_\star$. The distribution shift is said to be non-stationary if $\nu\neq \nu_\star$. We derive our key result under non-stationarity and stationary serves as a special case.

\begin{proposition}\label{prop:distshift-nonstationary}
Assume the conditions of Assumptions \ref{assume:model-selection} and \ref{assume:misspec}. Then, one has the selected estimator $\hat \density$ from \cref{eq:def-estimator} satisfying
    \begin{align*}
          \Constant \expec\!\left[ \hellinger^2(\density_\star, \hat \density) \right] \leq \expec\!\left[ \inf_{f \in \Scal} \left\{ \hellinger^2(\density, f) + L \frac{\Delta_\Scal(f)}{n} \right\} \right]+\hcal(\density_\star)+\|\nu-\nu_\star\|_{TV}.
    \end{align*}
\end{proposition}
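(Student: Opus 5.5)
The plan is to reduce Proposition~\ref{prop:distshift-nonstationary} to Theorem~\ref{prop:emp-risk} through a triangle-type decomposition of the empirical Hellinger loss. Throughout, $\density$ on the right-hand side is the training density $\density_0$. Theorem~\ref{prop:emp-risk} already controls $\expec[\hellinger^2(\density_0,\hat\density)]$ by the oracle term, since it needs only Assumption~\ref{assume:model-selection} and the training sample. It therefore suffices to bound the extra cost of replacing $\density_0$ by $\density_\star$.

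Because $\hellinger$ is (half of) an $L_2(\lambda_n)$ distance between square roots, the triangle inequality and $(u+v)^2\le 2u^2+2v^2$ give
\begin{align*}
\hellinger^2(\density_\star,\hat\density)\le 2\hellinger^2(\density_0,\hat\density)+2\hellinger^2(\density_0,\density_\star).
\end{align*}
Taking expectations, the first term is handled by Theorem~\ref{prop:emp-risk}, and the constant $2$ is absorbed into $\Constant$.

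The remaining task is to show that $\expec[\hellinger^2(\density_0,\density_\star)]\lesssim \hcal(\density_\star)+\|\nu-\nu_\star\|_{TV}$. Writing out $\lambda_n$ gives
\begin{align*}
\expec[\hellinger^2(\density_0,\density_\star)]=\frac{1}{2n}\sum_{i=0}^{n-1}\expec\Big[h(X_i,a_i,G_i)\Big],\qquad h(x,l,g):=\int_\chi\big(\sqrt{\density_0(x,l,g,y)}-\sqrt{\density_\star(x,l,g,y)}\big)^2\mu_\chi(dy).
\end{align*}
By Assumption~\ref{assume:misspec}, $(X_i,a_i,G_i)$ is a Markov chain with kernel $\density_0\pi\gamma$. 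I would replace the law of $(X_i,a_i,G_i)$ by $\nu_\star$ and bound the error by the total-variation gap. The function $h$ is bounded, since $h\le \int(\density_0+\density_\star)\,d\mu_\chi\le 2$ for conditional densities. Hence
\begin{align*}
\expec[h(X_i,a_i,G_i)]\le \int h\,d\nu_\star+2\|\mathcal L(X_i,a_i,G_i)-\nu_\star\|_{TV}.
\end{align*}
The term $\int h\,d\nu_\star$ is exactly $\hcal(\density_\star)$, up to identifying $\nu_\star$ with its density against $\mu_\chi\mu_\Ibb\mu_\Gcal$.

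I expect the main obstacle to be the step $\|\mathcal L(X_i,a_i,G_i)-\nu_\star\|_{TV}\le \|\nu-\nu_\star\|_{TV}$. The statement has no mixing term, so I would first try to justify this by initializing the training chain at stationarity, so that $\mathcal L(X_i,a_i,G_i)=\nu$ for every $i$. This is the natural reading of ``invariant distribution'' in Assumption~\ref{assume:misspec}, and under it the bound follows immediately from the triangle inequality in TV. If the chain is not started at stationarity, the fallback is to note that the TV distance to $\nu$ is nonincreasing under the Markov kernel, so the bound holds with $\|\mathcal L(X_0,a_0,G_0)-\nu\|_{TV}$ added. One would then either assume this initial term is zero or fold it into $\|\nu-\nu_\star\|_{TV}$. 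Averaging over $i$ and collecting constants into $\Constant$ finishes the argument.
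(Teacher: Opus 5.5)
Your proposal is correct and follows the paper's proof essentially step for step. The paper likewise uses $\hellinger^2(\density_\star,\hat\density)\le 2\hellinger^2(\density,\hat\density)+2\hellinger^2(\density,\density_\star)$, bounds the first term by the oracle inequality, and controls $\int(\sqrt{\density_0}-\sqrt{\density_\star})^2(\nu-\nu_\star)$ by $2\|\nu-\nu_\star\|_{TV}$ using that the inner $\mu_\chi$-integral is at most $2$. It also writes $\expec[\hellinger^2(\density,\density_\star)]$ as an integral against $\nu$, so it implicitly makes your stationary-initialization assumption; your alternative of folding $\|\mathcal{L}(X_0,a_0,G_0)-\nu\|_{TV}$ into $\|\nu-\nu_\star\|_{TV}$ would not be justified, so keep the stationarity reading.
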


\begin{corollary}\label{coro:distr:shift}
    Assume the conditions of Assumptions \ref{assume:model-selection} and \ref{assume:misspec}. Further assume $\nu=\nu_*$ Then, one has the selected estimator $\hat \density$ from \cref{eq:def-estimator} satisfying
    \begin{align*}
          \Constant \expec\!\left[ \hellinger^2(\density_\star, \hat \density) \right] \leq \expec\!\left[ \inf_{f \in \Scal} \left\{ \hellinger^2(\density, f) + L \frac{\Delta_\Scal(f)}{n} \right\} \right]+\hcal(\density_\star).
    \end{align*}
\end{corollary}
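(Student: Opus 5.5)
The corollary follows from Proposition \ref{prop:distshift-nonstationary} by specialising to $\nu=\nu_\star$. In that bound the only term depending on the invariant distributions is $\|\nu-\nu_\star\|_{TV}$, and it vanishes when $\nu=\nu_\star$. Since Proposition \ref{prop:distshift-nonstationary} is stated under exactly Assumptions \ref{assume:model-selection} and \ref{assume:misspec}, which the corollary also assumes, the plan is to invoke it directly with $\|\nu-\nu_\star\|_{TV}=0$. The constant $\Constant$ is unchanged.

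A self-contained route, in case the total-variation term is not handled cleanly in the proposition, is as follows.

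\begin{enumerate}
\item Split the error with the Hellinger triangle inequality:
\[
\hellinger^2(\density_\star,\hat\density)\le 2\hellinger^2(\density_\star,\density_0)+2\hellinger^2(\density_0,\hat\density).
\]
Throughout, $\density=\density_0$ denotes the training kernel.

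\item Control the second term by Theorem \ref{prop:emp-risk}. This yields the oracle term $\expec[\inf_{f}\{\hellinger^2(\density,f)+L\Delta_\Scal(f)/n\}]$ after absorbing constants.

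\item For the first term, write
\[
\hellinger^2(\density_\star,\density_0)=\tfrac12\int(\sqrt{\density_\star}-\sqrt{\density_0})^2\,d\lambda_n .
\]
Take expectations, so that $\expec[\lambda_n]$ is the average of the marginal laws of $(X_i,a_i,G_i)$ tensored with $\mu_\chi$. Under Assumption \ref{assume:misspec} these laws are $\nu_i\,\pi\,\gamma$.
\end{enumerate}

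The key step, and the main obstacle, is replacing the empirical marginals $\nu_i$ by $\nu_\star$. The integrand $(\sqrt{\density_\star}-\sqrt{\density_0})^2$ is bounded by $2$ because $0\le f\le 1$, so each replacement costs at most a constant times $\|\nu_i-\nu_\star\|_{TV}$. With $\nu=\nu_\star$, this should reduce to the gap between the chain's marginals and its invariant law. There are two options. One is to assume, as is implicit in the proposition, that the chain is started from stationarity, so that $\nu_i=\nu=\nu_\star$ and the term is exactly $\hcal(\density_\star)$. The other is to note that the proposition already packages this discrepancy into $\|\nu-\nu_\star\|_{TV}$. I would follow whichever convention the proof of Proposition \ref{prop:distshift-nonstationary} uses, which makes the corollary an immediate substitution.
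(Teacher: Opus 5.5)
Your primary plan is exactly the paper's argument: the paper states that the corollary is immediate from Proposition \ref{prop:distshift-nonstationary} because the $\|\nu-\nu_\star\|_{TV}$ term vanishes when $\nu=\nu_\star$, and your backup route (triangle inequality, Theorem \ref{prop:emp-risk}, then identifying the bias term with $\hcal(\density_\star)$) is the paper's own proof of that proposition in the stationary case. Your stationarity caveat is well placed: the paper's step $\expec[\hellinger^2(\density,\density_\star)]=\hcal(\density_\star)$ silently requires the marginals of $(X_i,a_i,G_i)$ to equal $\nu$, and the $\|\nu-\nu_\star\|_{TV}$ term compares invariant laws, so it cannot absorb a non-stationary start and only your first option is consistent; also, the bound by $2$ comes from $\density_0(x,l,g,\cdot)$ and $\density_\star(x,l,g,\cdot)$ being conditional densities in $y$ (an integrated bound), not from $0\le f\le 1$, which constrains only elements of $\Scal$.
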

Corollary~\ref{coro:distr:shift} can be immediately seen from Proposition~\ref{prop:distshift-nonstationary}.

\subsection{Impact of Distribution Shift on Offline Policy Evaluation}
In the case where the data is distributionally shifted policy evaluation is impacted by the shift in essence that obtaining the optimal policy with regards to the value function is dependent on the magnitude of the shift. We present the following result quantifying this rigorously.
\begin{corollary}
    Assume the conditions of Assumptions \ref{assume:model-selection} and \ref{assume:misspec}. Then for the true and estimated value functions $V_{\pi}$ and $\hat V_{\pi}$, one has
\[
\mathbb{E}\left[\|V_\pi-\hat V_\pi\|_{L^2(\lambda_n)}\right]
\le
\frac{2\sqrt{2}\beta}{(1-\beta)^2}\,
        \|r_\pi\|_\infty \,
\left(\expec\!\left[ \inf_{f \in \Scal} \left\{ \hellinger^2(\density, f) + L \frac{\Delta_\Scal(f)}{n} \right\} \right]+\hcal(\density_\star)+\|\nu-\nu_\star\|\right).
\]
\end{corollary}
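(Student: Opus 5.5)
The plan is to chain Proposition~\ref{prop:value-bound} with Proposition~\ref{prop:distshift-nonstationary}, keeping the value-function estimate on the test chain since the target now involves $\density_\star$. Let $V_\pi$ be the value function under the true (shifted) kernel, so that
\begin{align*}
V_\pi-\hat V_\pi=\beta (I-\beta \hat P_\pi)^{-1}(P_\pi-\hat P_\pi)V_\pi,
\end{align*}
which is obtained by subtracting the two Bellman equations. The argument then runs in four steps.

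\textbf{Step 1 (resolvent and value bounds).} Since $\hat P_\pi$ is a Markov operator, the Neumann series gives $\|(I-\beta\hat P_\pi)^{-1}\|_{\mathrm{op}}\le (1-\beta)^{-1}$. By the same argument, $\|V_\pi\|_\infty\le \|r_\pi\|_\infty/(1-\beta)$. These produce the factor $\beta/(1-\beta)^2$.

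\textbf{Step 2 (one-step operator error).} For bounded $f$, the factorisation $\density-\hat\density=(\sqrt\density-\sqrt{\hat\density})(\sqrt\density+\sqrt{\hat\density})$ and Cauchy--Schwarz bound $\|(P_\pi-\hat P_\pi)f\|_{L^2(\lambda_n)}$ by $\|f\|_\infty$ times $2\sqrt2\,\hellinger(\density_\star,\hat\density)$, up to the normalising mass terms. This is exactly the computation behind Proposition~\ref{prop:value-bound}, applied with $\density_\star$ in place of $\density$, and I will take it as given from that proof.

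\textbf{Step 3 (combine).} Steps 1 and 2 give
\begin{align*}
\expec\|V_\pi-\hat V_\pi\|_{L^2(\lambda_n)}\le \frac{2\sqrt2\beta}{(1-\beta)^2}\|r_\pi\|_\infty\,\expec[\hellinger^2(\density_\star,\hat\density)].
\end{align*}
This bound states the loss in terms of $\hellinger^2$, matching the paper's Proposition~\ref{prop:value-bound}. Passing from $\hellinger$ to $\hellinger^2$ is legitimate only up to constants when $\hellinger\le 1$, which holds since the densities are bounded by Assumption~\ref{assume:model-selection}. Alternatively, one can simply follow the form of Proposition~\ref{prop:value-bound} verbatim.

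\textbf{Step 4 (shift).} Insert Proposition~\ref{prop:distshift-nonstationary} to bound $\expec[\hellinger^2(\density_\star,\hat\density)]$ by the oracle term plus $\hcal(\density_\star)+\|\nu-\nu_\star\|_{TV}$. The constant $\Constant$ can be absorbed, or placed on the left-hand side, to match the displayed statement.

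The main obstacle is Step 2 under shift: the empirical measure $\lambda_n$ is built from training covariates, while the comparison is to $\density_\star$. Proposition~\ref{prop:distshift-nonstationary} already absorbs this mismatch via the $\|\nu-\nu_\star\|_{TV}$ term, so I will make sure the norm used in Step 2 is the same empirical Hellinger norm appearing in that proposition, rather than one weighted by $\nu_\star$.
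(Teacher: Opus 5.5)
Your route is the paper's: its proof of this corollary simply applies Proposition~\ref{prop:value-bound} and then Proposition~\ref{prop:distshift-nonstationary}. You are more careful on one point. The paper quotes Proposition~\ref{prop:value-bound} in terms of $\expec[\hellinger^2(\density,\hat\density)]$ and then plugs in a bound on $\expec[\hellinger^2(\density_\star,\hat\density)]$. You instead make explicit that $V_\pi$ must be the value under the shifted kernel for the two propositions to match. Your identity $V_\pi-\hat V_\pi=\beta(I-\beta\hat P_\pi)^{-1}(P_\pi-\hat P_\pi)V_\pi$ mirrors the paper's $\beta(I-\beta P_\pi)^{-1}(P_\pi-\hat P_\pi)\hat V_\pi$ and needs the same hypotheses (both operators Markov).

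The step that fails is your justification in Step 3. From $\hellinger\le 1$ you get $\hellinger^2\le\hellinger$, which is the wrong direction. A bound $c\,\hellinger$ cannot be traded for $c'\hellinger^2$, because $\hellinger/\hellinger^2\to\infty$ as $\hellinger\to 0$.

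What Steps 1--2 give is a bound in terms of $\expec[\hellinger(\density_\star,\hat\density)]\le(\expec[\hellinger^2(\density_\star,\hat\density)])^{1/2}$. After Proposition~\ref{prop:distshift-nonstationary}, that is $\Constant^{-1/2}$ times the \emph{square root} of the oracle-plus-shift term. This is weaker than the display whenever that term is below $1$. The displayed form is reachable only through your ``alternatively'' branch, citing Proposition~\ref{prop:value-bound} verbatim, which is exactly what the paper does. So drop the Step 3 justification and rest the argument on that citation. Note that the paper's own proof of that proposition ends by identifying a norm of a square-root difference with $\Hcal^2(\density,\hat\density)$, the same slip. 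The display is therefore only as reliable as that proposition.

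You share two further liberties with the paper:
\begin{itemize}
\item The factor $\Constant^{-1}$ from Proposition~\ref{prop:distshift-nonstationary} cannot be ``absorbed'' into the explicit constant of the display. The $\Constant$ produced in the proof of Theorem~\ref{prop:emp-risk} is well below $1$.
\item The ``normalising mass terms'' in Step 2 hide a real measure mismatch. The one-step error is integrated at $(X_i,\pi(X_i,g),g)$ with $g$ ranging over $\lambda_n^\Gcal$, not at the logged $(X_i,a_i,G_i)$ that define $\hellinger^2$. Your plan to ``make sure'' Step 2 uses the same empirical Hellinger norm as Proposition~\ref{prop:distshift-nonstationary} would need a concentrability-type assumption. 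Neither the $\|\nu-\nu_\star\|_{TV}$ term nor the paper supplies one.
\end{itemize}
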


\begin{proof}
    From Proposition~\ref{prop:value-bound} we know that
    \[
        \expec\lb\|V_\pi-\hat V_\pi\|_{L_2(\lambda_n^\chi)}\rb
        \le
        \frac{2\sqrt{2}\beta}{(1-\beta)^2}\,
        \|r_\pi\|_\infty \expec[\Hcal^2(\density,\hat \density)].
    \]
The proof is then a direct application of Proposition~\ref{prop:distshift-nonstationary}.
\end{proof}

\section{Numerical Results}\label{sec:simulations}

In this section, we briefly investigate the empirical performance of the proposed density estimator using three simulation models, which we refer to as the \emph{linear Gaussian-type model}, the \emph{additive Gaussian control model}, and the \emph{multiplicative control model}. In all three settings, $X_k$ denotes the state, $U_k$ denotes the context, and $W_k$ denotes the control. The control is generated according to
\[
W_k \mid X_k=x \sim N\bigl(0,\lambda(x)\bigr),
\]
where $\lambda(x)$ is a sigmoid function of $x$. The three transition models are given by

\begin{align*}
    & \text{(Model I)} \qquad X_{k+1}=0.5X_k+\frac{1+U_k}{4}+W_k,\\
    & \text{(Model II)} \qquad X_{k+1}=0.5(X_k+U_k)+W_k,\\
    & \text{(Model III)} \qquad X_{k+1}=\frac{X_k}{50X_k+1}+X_kU_k+W_k.
\end{align*}

For each model, we simulate trajectories from the corresponding transition mechanism and estimate the associated transition density. The entire procedure is repeated over $50$ independent replications for each model with $1000$ samples from each. The model class we select is the one for the dyadic histograms (example \ref{example:dyadic:cut}) and the one for the splines (example \ref{example:splines}). For dyadic histograms, $\ell$ corresponds to the depth of cuts, whereas for splines, $\ell$ corresponds to the maximum degree of the basis polynomials. The reported results are obtained by averaging the estimation performance $\tfrac{1}{50}\sum_{replications} \Hcal^2(\density,\hat\density)$ as the proxy for $\expec[\Hcal^2(\density,\hat\density)]$ across 50 runs.
\begin{table}[ht]
\centering
\begin{tabular}{c|ccc|ccc}
\hline
$\ell$ & Model I (H) & Model II (H) & Model III (H) &  Model I (S) & Model II (S) & Model III (S) \\
\hline
1  & 0.037 & 0.291 & 0.389 & 0.651 & 0.639 & 0.654 \\
2  & 0.012 & 0.172 & 0.256 & 0.479 & 0.486 & 0.441 \\
3  & 0.011 & 0.068 & 0.167 & 0.267 & 0.282 & 0.227 \\
4  & 0.012 & 0.050 & 0.118 & 0.112 & 0.120 & 0.096 \\
5  & 0.012 & 0.056 & 0.103 & 0.044 & 0.047 & 0.040 \\
6  & 0.012 & 0.056 & 0.075 & 0.020 & 0.021 & 0.018 \\
7  & 0.012 & 0.055 & 0.049 & 0.009 & 0.010 & 0.008 \\
8  & 0.012 & 0.055 & 0.044 & 0.005 & 0.006 & 0.006 \\
9  & 0.012 & 0.055 & 0.043 & 0.005 & 0.005 & 0.011 \\
10 & 0.012 & 0.055 & 0.044 & 0.006 & 0.005 & 0.025 \\
\hline
\end{tabular}
\caption{Estimation error results for Model I, II, and III across different values of $\ell$. H stands for histogram, S stands for splines.}
\label{tab:sim_results_selected}
\end{table}
}
\section{Conclusions}\label{sec:conclusions}
We propose a general estimator for the transition functions of contextual MDPs on continuous state spaces and exhibit finite sample oracle bounds in the randomised Hellinger and deterministic $L_2$ metrics. In addition, we demonstrate the effectiveness of the estimator by theoretically showing deriving cost optimality for the corresponding empirical cost. This gives a general framework towards deriving an optimal control with minimal assumptions on the data generating process; thereby acquitting contextual MDPs from its usual flaws of non-stationarity and irregularity. This method is robust and is independent of structure of the MDP involved, and should therefore be considered as an extremely effective tool for inferential purposes on MDPs and should be readily useful in optimal policy determination in offline setting---a rudimentary version of which is the optimal control determination as presented in our work. 

We also introduce the problem of optimal cost determination along with the optimal control which has applications in multiple fields like healthcare, economics, etc. Again future directions in these areas lie in the inferential aspects of cost functions and optimal policy determination. 

\paragraph{Limitations and future outlooks} The main bottleneck of $T$-estimators developed in the paper is computation. In fact, it is known for the i.i.d. case, the objective function in \cref{eq:contrast} can be computed in $O(ne^{d})$ time for dimension $d$ (see Section 3.2.4 \cite{baraud_estimating_2009} for more details). However, to the best of our knowledge, $T$-estimators are the only estimators capable of accommodating both non-stationarity and irregularity, and are therefore useful despite its shortcomings.

For future work, given $\density$ and $\hat \density$, we can do policy optimisation using approximate Bellman operators. This has been previously used in the offline RL setting to great effect \citep{li_settling_2022-1,li_settling_2022,banerjee_off-line_2025}, but the contextual question remains open. Furthermore, the question of additional structures on the cost function, like those determined by transport maps between policies (as described in Section \ref{sec:introduction}) remain unanswered, and we plan to explore this question in a future work.

\bibliography{biblio,references,ref_np_cost}
\bibliographystyle{tmlr}
\newpage
\appendix
\section{Proofs}
\subsection{Proof of Theorem \ref{prop:cost:bound:nogap}}

{\color{black}

The proof follows by first observing that 
\begin{align*}
    &-C_n(a^*,G) + C_n(\hat a^*,G)\\
    &= -C_n(a^*,G) + C_n(\hat a^*,G) +\hat C_n(a^*,G) -\hat C_n(a^*,G) +\hat C_n(\hat a^*, G) -\hat C_n(\hat a^*,G).
\end{align*}
Noting that 
\[-\hat{C}_n(a^*.G) +\hat C_n(\hat a^*,G) \le 0\]
by definition one has 
\begin{align*}
    &-C_n(a^*,G) +C_n(\hat a^*,G)\\
    &\le  -C_n(a^*,G) + C_n(\hat a^*,G) +\hat C_n(a^*,G) -\hat C_n(\hat a^*,G).
\end{align*}
This implies 
\begin{align*}
    &\mathbb{E}\left[ \int \left(-C_n(a^*,G) + C_n(\hat a^*,G)\right) \, d\lambda_n\pow 2\right]\\
    &\le  \mathbb{E}\left[\int \left(-C_n(a^*,G) +\hat C_n(a^*,G)\right) d\lambda_n\pow 2\right] +\mathbb{E}\left[\int \left(-\hat C_n(\hat a^*,G) + C_n(\hat a^*,G)\right) d\lambda_n\pow 2\right].
\end{align*}
For the first term note that 
\begin{align*}
    &\mathbb{E}\left[\int \left(-C_n(a^*,G) + \hat C_n(a^*,G)\right) d\lambda_n\pow 2\right]\\
    & \le \mathbb{E}\left[\int \left|-C_n(a^*,G) + \hat C_n(a^*,G)\right| d\lambda_n\pow 2\right]\\
    & \le \mathbb{E}\left[\int \left|\int L(x,y,a^*,G) \density(x,y,a^*,G) d\lambda_n\pow 1 -\int L(x,y,a^*,G) \hat\density(x,y,a^*,G) d\lambda_n\pow 1 \right| d\lambda_n\pow 2\right].
\end{align*}
Using Assumption~\ref{assume:cost:cont}, one has 
\begin{align*}
    &\mathbb{E}\left[\int \left(-C_n(a^*,G) + \hat C_n(a^*,G)\right) d\lambda_n\pow 2\right]\\
    & \le \mathbb{E}\left[\int \left|\int L(x,y,a^*,G) \density(x,y,a^*,G) d\lambda_n\pow 1 -\int L(x,y,a^*,G) \hat\density(x,y,a^*,G) d\lambda_n\pow 1 \right| d\lambda_n\pow 2\right]\\
    & \le \|L\|_{\infty} \mathbb{E}\left[\int \int  \left| \density(x,y,a^*,G) - \hat\density(x,y,a^*,G)  \right| d\lambda_n\right]\\
    & \overset{(i)}{=} \|L\|_{\infty} \mathbb{E}\left[\int \int  \left| \density(x,y,a,G) - \hat\density(x,y,a,G)  \right| d\lambda_n\right]\\
    & \le \|L\|_{\infty} \mathbb{E}\left[\int  \int \left|\sqrt{\density (x, a, G, y)}+\sqrt{\hat\density (x, a, G, y)}\right|\left|\sqrt{\density (x, a, G, y)}-\sqrt{\hat\density (x, a, G, y)}\right| \, d\lambda_n\right]\\
    & \le \|L\|_{\infty} \left(\mathbb{E}\left[\int  \int \left|\sqrt{\density (x, a, G, y)}+\sqrt{\hat\density (x, a, G, y)}\right|^2  d\lambda_n\right]\right)^{1/2} \,\left(\mathbb{E}\left[\int  \int \left|\sqrt{\density (x, a^*, G, y)}-\sqrt{\hat\density (x, a^*, G, y)}\right|^2  d\lambda_n\right]\right)^{1/2}\\
    & \le \|L\|_{\infty} \sqrt{2\, } \, (\mathbb{E}\hellinger(\density, \hat \density))^{1/2}.
\end{align*}
$(i)$ follows from the linearity assumption in Assumption \ref{assume:cost:cont}.
Since $s \in \Scal$, using Corollary~\ref{thm:risk-bound}, we have 
\begin{align*}
    \mathbb{E}\left[\int \left(-C_n(a^*,G) + \hat C_n(a^*,G)\right) d\lambda_n\pow 2\right] \le  \|L\|_{\infty} \sqrt{2\,} \frac{\left(1+\text{dim}(s)\right)}{\Ccal}\sqrt{\frac{\log n}{n}}.
\end{align*}
We can similarly show that 
\[\mathbb{E}\left[\int \left( - \hat C_n(\hat a^*,G) + C_n(\hat a^*,G)\right) d\lambda_n\pow 2\right] \le \|L\|_{\infty} \sqrt{2\, } \frac{\left(1+\text{dim}(s)\right)}{\Ccal}\sqrt{\frac{\log n}{n}}\]
by an identical argument.
}
\subsection{Proof of Theorem \ref{prop:emp-risk}}\label{sec:prf-emprisk}

This section is dedicated towards the proof of Theorem \ref{prop:emp-risk}.
Our first objective will be to prove that with probability at least $1-e^{-n\xi}$
\begin{align}
    \Constant\hellinger^2(\density,\hat \density)\leq \hellinger^2(\density,f)+L\frac{\Delta_\Scal(f)}{n}+\xi.
\end{align}
To that end, we consider two cases. 

\textbf{Case I} $\lp T(f, \hat \density) + L \frac{\Delta_\Scal(f)}{n} - L \frac{\Delta_\Scal(\hat \density)}{n} \geq 0\rp$: {\color{black} Let $\alpha=\tfrac{\sqrt2-1}{2\sqrt{2}}$ and $\varepsilon = (2 + 3\sqrt{2})/8$}. 
Since $\alpha<(1-\epsilon)$, it follows under this case that, 
\begin{align*}
\alpha \hellinger^2(\density, \hat \density) &\leq (1-\varepsilon) \hellinger^2(\density, \hat \density) + T(f, \hat \density) - L \frac{\Delta_\Scal(\hat \density)}{n} + L \frac{\Delta_\Scal(f)}{n} \\
&\leq (1+\varepsilon) \hellinger^2(\density, f) + 2L \frac{\Delta_\Scal(f)}{n} + 22\xi.\numberthis
\end{align*}

\textbf{Case II }$\lp T(f, \hat \density) + L \frac{\Delta_\Scal(f)}{n} - L \frac{\Delta_\Scal(\hat \density)}{n} \leq 0\rp$: To analyse this case we require the following Proposition which is (by now) a standard fare in this literature, and is proved for completeness in Section \ref{sec:prf-concentration}.
\begin{proposition}\label{lemma:concentration}
    Set $\varepsilon = (2 + 3\sqrt{2})/8$. Under assumptions of Theorem \ref{prop:emp-risk}, there exists a universal constant $L_0 > 0$ such that for all $L \geq L_0$ and $\xi > 0$,
    \[
    \forall f, f' \in \Scal, \quad 
    (1 - \varepsilon) \hellinger^2(\density, f') + T(f, f')-L\frac{\Delta_\Scal(f')}{n} 
    \leq (1 + \varepsilon) \hellinger^2(\density, f) 
    + L \frac{\Delta_\Scal(f)}{n} 
    + 22\xi
    \]
with probability larger than $1 - e^{-n\xi}$.\end{proposition}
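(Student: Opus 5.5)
The approach is the standard Baraud/Sart route. Write $T(f,f')$ as a deterministic part plus a centred martingale part, control the deterministic part by Hellinger quantities, and control the martingale part uniformly over pairs $(f,f')\in\Scal^2$ through a Bernstein inequality and a union bound weighted by $e^{-\Delta_\Scal(f)-\Delta_\Scal(f')}$.

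\textbf{Step 1 (compensator).} Term $A$ of \eqref{eq:Tn} is $n^{-1}\sum_i \psi(X_i,a_i,G_i,X_{i+1})$ with $\psi=\phi(f,f')/\sqrt2$, and $|\psi|\le 1/(2\sqrt2)$. Its conditional expectation given $\Fcal_i$ is $\int \psi(X_i,a_i,G_i,y)\,\density(X_i,a_i,G_i,y)\,\mu_\chi(dy)$. Summing over $i$ gives $\int \psi\,\density\,d\lambda_n$. So $T(f,f')=D(f,f')+n^{-1}M_n$, where $M_n=\sum_i\bigl(\psi_i-\expec[\psi_i\mid\Fcal_i]\bigr)$ is a martingale and
\[
D(f,f')=\int \psi\,\density\,d\lambda_n+\int\sqrt{\tfrac{f+f'}{2}}(\sqrt{f'}-\sqrt f)\,d\lambda_n+\int(f-f')\,d\lambda_n .
\]
Everything in $D$ is an integral against the random but $\Fcal$-measurable measure $\lambda_n$. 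The key deterministic inequality I will prove pointwise in $(x,l,g)$ is the one from Baraud (2011, Prop.~2) adapted to unnormalised functions:
\[
(1-\varepsilon)\hellinger^2(\density,f')+D(f,f')\le(1+\varepsilon)\hellinger^2(\density,f)-c_0\hellinger^2(f,f')\quad\text{up to constants}.
\]
To get it, write $\sqrt{\density}=\sqrt{(f+f')/2}+r$. Then use the polarization identity to express $\int \sqrt{\density}\,(\sqrt{f'}-\sqrt f)$ in terms of $\hellinger^2(\density,f)-\hellinger^2(\density,f')$, and bound the remainder by Cauchy--Schwarz. Term $C$ is exactly what cancels the mass terms when $f,f'$ are not densities.

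\textbf{Step 2 (Bernstein).} The increments are bounded by $1/\sqrt2$. Their conditional variances sum to at most $\int\psi^2\density\,d\lambda_n$. Since $\phi^2(f,f')\le(\sqrt{f'}-\sqrt f)^2/(4(f+f'))$, this is bounded by $\kappa$-free multiples of $\hellinger^2(\density,f)+\hellinger^2(\density,f')$, using $\density\le 2(\sqrt\density-\sqrt{g})^2+2g$ for the relevant $g$. A Freedman-type martingale Bernstein inequality then gives, for each pair and each $x>0$,
\[
\frac{M_n}{n}\le \sqrt{\frac{2V x}{n^2}}+\frac{x}{3\sqrt2\,n}
\]
with probability $1-e^{-x}$. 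There is one subtlety: $V$ is random. I would handle this by applying the self-normalised form, $\Pbb(M_n\ge y,\ \langle M\rangle_n\le v)\le e^{-y^2/(2(v+by/3))}$, on a geometric grid of $v$-values, which costs only an extra constant. I expect this to be the main obstacle; the grid (peeling) argument is what I would try first.

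\textbf{Step 3 (union bound and absorption).} Take $x=n\xi+\Delta_\Scal(f)+\Delta_\Scal(f')$. Assumption~\ref{assume:model-selection} makes the total failure probability at most $e^{-n\xi}$. Use $\sqrt{ab}\le \eta a+b/(4\eta)$ to absorb the $\sqrt{V x}/n$ term into $\varepsilon\bigl(\hellinger^2(\density,f)+\hellinger^2(\density,f')\bigr)$, leaving $C(\Delta_\Scal(f)+\Delta_\Scal(f'))/n+C\xi$. Choosing $L_0$ larger than these constants absorbs the penalty terms into $\pm L\Delta_\Scal/n$. Combining with Step 1 and discarding the nonpositive $-c_0\hellinger^2(f,f')$ then yields the claimed inequality with the numeric constants $\varepsilon=(2+3\sqrt2)/8$ and $22$, after bookkeeping.
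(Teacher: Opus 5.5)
Your outline matches the paper's: split term $A$ into its compensator plus a martingale, prove a deterministic Hellinger inequality for the compensated part (Lemma~\ref{prop:mb-eb}), bound the predictable variation by $3[\hellinger^2(\density,f)+\hellinger^2(\density,f')]$ via $\density\le 2(\sqrt{\density}-\sqrt{\bar f})^2+2\bar f$ (Lemma~\ref{lemma:bernstein-var}), and finish with a penalty-weighted union bound.

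\textbf{The gap is the concentration step you flag yourself.} Peeling Freedman's $\sqrt{Vx}$ bound does not cost only a constant. A geometric grid for $V_n\in(0,n/8]$ needs on the order of $\log(n/x)$ levels. The union over levels then costs either that factor in probability or, after stratifying the confidence levels, an additive $\log\log(n/x)$ in $x$. This is the iterated-logarithm obstruction for self-normalised martingales, and in general it cannot be absorbed into universal constants uniformly in $n$ and $\xi$.

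You never need the square root, because Step~3 immediately linearises it. The paper goes straight to the linear form (Lemma~\ref{lemma:bernstein-massart}). With $V_k=\sum_{i<k}\expec[\psi_i^2\mid\Fcal_i]$ and a fixed $\lambda\in(0,1/b)$, the process $\exp\bigl(\lambda M_k-\tfrac{\lambda^2}{2(1-\lambda b)}V_k\bigr)$ is a supermartingale. Markov's inequality at $\lambda=1/\kappa$ then gives $\Pbb\bigl(M_n\ge V_n/(2(\kappa-b))+\kappa x\bigr)\le e^{-x}$. The randomness of $V_n$ is now harmless: on the complement you bound $V_n\le 3n[\hellinger^2(\density,f)+\hellinger^2(\density,f')]$ pathwise and absorb it into the Hellinger terms, with no grid.

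\textbf{Constants.} To get the stated constants, prove the deterministic inequality with $1\pm\tfrac{1}{\sqrt2}$, not already with $1\pm\varepsilon$ as your Step~1 is phrased; otherwise no room is left for the fluctuation. The paper gets $1\pm\tfrac{1}{\sqrt2}$ from an exact identity rather than from Cauchy--Schwarz, which would leave a cross term and worse constants. Once the mass terms cancel, the compensated combination is (a multiple of) $\int\frac{\sqrt{f'}-\sqrt{f}}{\sqrt{\bar f}}(\sqrt{\bar f}-\sqrt{\density})^2\,d\lambda_n$. This is controlled by $|\sqrt{f'}-\sqrt f|\le\sqrt{2\bar f}$ and the convexity of $x\mapsto(\sqrt x-\sqrt{\density})^2$.

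With $b=1/\sqrt2$ and $\kappa=12+\tfrac{13}{2}\sqrt2$, one has $\tfrac{3}{2(\kappa-b)}=\tfrac14(1-\tfrac{1}{\sqrt2})$. This yields exactly $1-\varepsilon=\tfrac34(1-\tfrac1{\sqrt2})$ and $1+\varepsilon=1+\tfrac1{\sqrt2}+\tfrac14(1-\tfrac1{\sqrt2})$; the $22$ is just an upper bound on $\kappa$.

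\textbf{Union bound.} Your weighting $x=n\xi+\Delta_\Scal(f)+\Delta_\Scal(f')$, with $L_0\ge\kappa$, is correct and gives failure probability exactly $e^{-n\xi}$. By contrast, the paper's choice $x=(\Delta_\Scal(f)+\Delta_\Scal(f'))/\kappa+n\zeta$ would need $\sum_f e^{-\Delta_\Scal(f)/\kappa}<\infty$. Assumption~\ref{assume:model-selection} does not supply this, since $\kappa>1$.
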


By using the above lemma, with probability larger than $1 - e^{-n \xi}$, for all $f \in \mathcal{S}$,
\begin{align*}
\sup_{f' \in \mathcal{S}} \Bigg\{ (1-\varepsilon) \hellinger^2(\density, f') + T(f, f') - L \frac{\Delta_\Scal(f')}{n} \Bigg\}
&\leq (1+\varepsilon) \hellinger^2(\density, f) + L \frac{\Delta_\Scal(f)}{n} + 22\xi.
\end{align*}

Therefore, 
\begin{align*}
\alpha \hellinger^2(\density, \hat \density) &\leq \alpha \hellinger^2(\density, \hat \density) + T(\density, \hat \density) - L \frac{\Delta_\Scal(f)}{n} + L \frac{\Delta_\Scal(\hat \density)}{n} \\
&\leq \sup_{f' \in \mathcal{S}} \Bigg\{ \alpha \hellinger^2(\hat \density, f') + T(\hat \density, f') - L \frac{\Delta_\Scal(f')}{n} \Bigg\} + L \frac{\Delta_\Scal(\hat \density)}{n} \\
&\overset{(i)}{\leq} \vartheta(\hat \density) \\
&\overset{(ii)}{\leq} \vartheta(f) + \frac{1}{n} \\
&\leq \sup_{f' \in \mathcal{S}} \Bigg\{ \alpha \hellinger^2(f, f') + T(f, f') - L \frac{\Delta_\Scal(f')}{n} \Bigg\} + L \frac{\Delta_\Scal(f)}{n} + \frac{1}{n}
\end{align*}
where $(i)$ follows from \cref{eq:contrast}, and $(ii)$ follows from \cref{eq:def-estimator}. Now observe that, with $\nu=(1-\varepsilon)/\alpha-1>0,$ for any pair $f,f'\in \Scal$ with probability at least $1-e^{-n\xi}$
\begin{small}
    \begin{align*}
        \alpha \hellinger^{2}(f, f')
        &\le (1+\nu^{-1})\,\hellinger^{2}(f,\density)
           + \sup_{f'\in \Scal}\Big\{(1-\varepsilon)\hellinger^{2}(\density,f')
           + T(f,f') - L\frac{\Delta_\Scal(f')}{n}\Big\}
           + L\frac{\Delta_\Scal(f)}{n} + \frac{1}{n} \\
        &\overset{(i)}{\le} (1+\nu^{-1})\,\hellinger^{2}(f,\density)
           + \Big[(1+\varepsilon)\hellinger^{2}(\density,f)
           + L\frac{\Delta_\Scal(f)}{n} + 22\xi\Big]
           + L\frac{\Delta_\Scal(f)}{n} + \frac{1}{n} \\
        &\le (2+\varepsilon+\nu^{-1})\,\hellinger^{2}(f,\density)
           + 2L\frac{\Delta_\Scal(f)}{n} + 22\xi + \frac{1}{n}.
    \end{align*}
\end{small}
where $(i)$ follows from Lemma \ref{lemma:concentration} with probability at least $1-e^{-n\xi}$.

This leads to
\begin{align*}
    \alpha \hellinger^{2}(\density,\hat \density)
    &\le 2\alpha \hellinger^{2}(\density,f) + 2\alpha \hellinger^{2}(f,\hat \density) \\
    &\le 2\big(2+\alpha+\varepsilon+\nu^{-1}\big)\hellinger^{2}(f,\density)
       + 4L\frac{\Delta_\Scal(f)}{n} + 44\xi + \frac{2}{n}.
\end{align*}

Recall that $L>1$ and the penalty satisfies $\Delta_\Scal(f)\geq 1\ \forall \ f\in\Scal$. Therefore, with $\alpha/\min\lc 2(2+\alpha+\epsilon+\nu^{-1}),44 \rc$ to be $\Constant$ we have, with probability at least
$1-e^{-n\xi}$, for all $f\in S$,
\[
\Constant\,\hellinger^{2}(\density,\hat\density)
\;\le\; \hellinger^{2}(f,\density) + L\frac{\Delta_\Scal(f)}{n} + \xi.
\]
 Integrating both sides by $\xi$ completes the proof of Theorem \ref{prop:emp-risk}.

{\color{black}
\subsection{Proof of Theorem \ref{thm:minimax}}\label{sec:prf-minimax}
This section is dedicated to the proof of Theorem \ref{thm:minimax}. Our first step will be to use Tsybakov's reduction scheme \cite{tsybakov_introduction_2009} on the minimax risk function. Note that for any $\Mcal'\subset \Mcal$, 
\newcommand{\mdp}{\operatorname{mdp^*}}
\begin{align*}
    \Rcal_n & =\min_{\hat a^*\in \Ibb}\max_{\mdp\in\Mcal} \expec\lb\int C_n(\hat a^*,G)-C_n(a^*,G)\lambda_n\pow 2\rb\\
    &\geq  \min_{\hat a^*\in \Ibb}\max_{\mdp\in\Mcal'}\expec\lb\int C_n(\hat a^*,G)-C_n(a^*,G)\lambda_n\pow 2\rb.
\end{align*}

Now, our objective will be to carefully choose a subclass $\Mcal'$. We will choose in order, (i) state and control space, (ii) loss function, (iii) transition probability function.

\textbf{State Space and Control Space:} It will be enough to restrict completely to the finite case. Let for $g_0\in \Gcal$, and $G_i$ be i.i.d. $\delta_{g_0}$. 
Observe that this implies $\lambda_n\pow 2= \delta_{g_0}\frac{1}{n} \sum\delta_{l_i}$ and $\lambda_n = \delta_{g_0}\frac{1}{n}\sum_{i=1}^n\delta_{x_i,l_i}\mu_\chi$. We introduce the notation $\Ical:=\{l_1,\dots,l_n\}$ and assume that $\Ibb=\{1,2\}$, $\chi = \{1,\dots,d+1\}$, with $\mu_\chi,\mu_\Ibb$ being counting measures.

Consequently,
\begin{align*}
    C_n(l,g) = C_n(l) = \frac{1}{n}\sum_{i=0}^{n-1} L(l,g_0,x_i,y_i)\density(x_i,l,g_0,y_i)\indicator[l\in\Ical]
\end{align*}
and 
\begin{align*}
    \hat C_n(l,g) = \hat C_n(l) =\frac{1}{n}\sum_{i=0}^{n-1} L(l,g_0,x_i,y_i)\hat \density(x_i,l,g_0,y_i)\indicator[l\in \Ical].
\end{align*}

In what follows, we suppress all dependence on $g_0$ from notations for convenience.

\textbf{Loss Function:} We further assume that range of the function $L$ is $\{0,1\}$. Observe that this does not violate the positivity assumption since one can always make $L$ positive via translation without changing the chosen control. We will further assume that $L(\cdot,x,\cdot)=0$ is constant for all $x\in\{1,\dots,d\}$, $L(\cdot,d+1,d+1)=0$ and $1$ elsewhere.

Consequently,
\begin{align*}
     C_n(l) = \frac{1}{n}\sum_{i:x_i=d+1,y_i\neq d+1} \density(d+1,l,y_i)\indicator[l\in\Ical]
\end{align*}
and 
\begin{align*}
    \hat C_n(l) = \frac{1}{n}\sum_{i:x_i=d+1,y_i\neq d+1} \hat \density(d+1,l,y_i)\indicator[l\in\Ical]
\end{align*}












\textbf{ Transition probabilities:} 
Fix integers $d\ge 2$ and $n\ge 4$. For notational convenience we assume $d$ is even
(the odd case follows by a minor modification).
Let $p_\star^{(2)}\in(0,1)$ satisfy
\[
p_\star^{(2)} < \frac{1}{d+2}.
\]
We choose 
\[
p_\star\pow 2 = p_\star\pow 1+\frac{15\epsilon}{d}.
\]
For any probability vector
\[
\eta^{(l)}=\bigl(\eta^{(l)}_1,\ldots,\eta^{(l)}_d,p_\star^{(l)}\bigr)\in\Delta_{d+1},
\qquad
\Delta_{d+1}:=\Bigl\{x\in\R_+^{d+1}:\sum_{k=1}^{d+1}x_k=1\Bigr\},
\]
consider the Markov chain on the state space $\{1,2,\ldots,d,d+1\}$ whose
transition matrix is
\begin{align}\label{smat:defn}
\density^{(l)}_{\eta^{(l)}}
=
\begin{pmatrix}
p^{(l)}_1 & \cdots & p^{(l)}_d & p_\star^{(l)}\\
\vdots & & \vdots & \vdots\\
p^{(l)}_1 & \cdots & p^{(l)}_d & p_\star^{(l)}\\
\eta^{(l)}_1 & \cdots & \eta^{(l)}_d & p_\star^{(l)}
\end{pmatrix},
\qquad
\text{where } p^{(l)}_k=\frac{1-p_\star^{(l)}}{d}\ \text{ for }k\in[d].
\end{align}
That is, from any state $i\in\{1,\dots,d\}$ the next state is distributed as
$\bigl(p^{(l)}_1,\ldots,p^{(l)}_d,p_\star^{(l)}\bigr)$, whereas for state $d+1$
the chain transitions according to $\bigl(\eta^{(l)}_1,\ldots,\eta^{(l)}_d,p_\star^{(l)}\bigr)$.
For the ease of analysis, we assume that we have access to $M^{(2)}_{\eta^{(2)}}$ with $\eta^{(2)}_k = (1-p^{(2)}_*)/d$ for all $k \in [d]$.

By checking stationarity conditions, it is straightforward to verify that the stationary distribution of
$M^{(l)}_{\eta^{(l)}}$ is $\pi^{(l)}=\pi^{(l)}(\eta^{(l)})$ where
\begin{align}\label{stat:distr}
    \pi^{(l)}_k=\frac{(1-p_\star^{(l)})^2}{d}+\eta^{(l)}_k\,p_\star^{(l)},
\qquad k\in[d],
\qquad\text{and}\qquad
\pi^{(l)}_{d+1}=p_\star^{(l)}.
\end{align}

Let $(X^{(l)}_1,\ldots,X^{(l)}_m)\sim (M^{(l)}_{\eta^{(l)}},p^{(l)})$ denote a trajectory of length $m$
started from the initial distribution
\[
p^{(l)}=\bigl(p^{(l)}_1,\ldots,p^{(l)}_d,p_\star^{(l)}\bigr).
\]

Let
\[
\sigma=(\sigma_1,\ldots,\sigma_{d/2})\in\{-1,1\}^{d/2}
\]
and define a parameter vector $\eta^{(1)}(\sigma)\in\Delta_{d+1}$ by
\[
\eta^{(1)}(\sigma)
=
\Bigl(
\frac{1-p_\star^{(1)}+16\sigma_1\varepsilon}{d},\,
\frac{1-p_\star^{(1)}-16\sigma_1\varepsilon}{d},\,
\ldots,\,
\frac{1-p_\star^{(1)}+16\sigma_{d/2}\varepsilon}{d},\,
\frac{1-p_\star^{(1)}-16\sigma_{d/2}\varepsilon}{d},\,
p_\star^{(1)}
\Bigr),
\]
where $\varepsilon$ is small enough so that $\eta^{(l)}(\sigma)$ is a valid probability vector. The truth will be given by
\[
\eta^{(l)}_0=\Bigl(\frac{1-p_\star^{(l)}}{d},\ldots,\frac{1-p_\star^{(l)}}{d},p_\star^{(l)}\Bigr),
\qquad
M^{(l)}_0:=M^{(l)}_{\eta^{(l)}_0}
\]
for $l=1,2$.
Note that this choice is consistent with the stationary distribution as defined in \eqref{stat:distr}.
We require the following general result on risk lower bounds for estimating Markovian transition matrices which is derived from \cite{wolfer_statistical_2021} by observing (see the last line in page 544) that upon $\Sigma$, the minimax estimation problem satisfies
\begin{align}
    \min_{\hat \density\pow l}\max_{\sigma\in\Sigma}\prob\lp \|\hat \density\pow l-\density\pow l\|_1 >16\epsilon \rp\geq \Omega\lp 1-{n\epsilon^2} \rp.\label{lemma:kontorovich}
\end{align}
We are interested in showing only the rate optimality and not those of the associated constants; and have thus suppressed all parameters from the notation in \cref{lemma:kontorovich}. Observe that, any estimator we consider will necessarily be chosen uniformly from the class of transition matrices $\density\pow l_{\eta\pow l(\sigma)}$. 

\textbf{Control Selection:} We can finally write down the costs. Observe that under the current setting
\begin{align*}
     \hat C_n(2) = C_n(2) = \frac{1}{n}|\{i:x_i=d+1,y_i\neq d+1\}| \frac{1-p_\star\pow 2}{d}\indicator[2\in\Ical]
\end{align*}
since $\hat \density =\density$ in the case $l=2$
and 
\begin{align*}
    \hat C_n(1) & = \frac{1}{n}\lp|\{\substack{i:x_i=d+1,\\y_i\neq d+1}\}| \frac{1-p_\star\pow 1}{d}+\sum_{\{\substack{i:x_i=d+1,\\y_i\neq d+1}\}}\frac{(-1)^{\indicator[y_i \text{ is odd}]}16\sigma_{y_i}\epsilon}{d^2}\rp\indicator[1\in\Ical]\\
    & =  \frac{1}{n}\lp|\{\substack{i:x_i=d+1,\\y_i\neq d+1}\}|\lp \frac{1-p_\star\pow 2}{d}+\frac{15\epsilon}{d^2}\rp+\sum_{\{\substack{i:x_i=d+1,\\y_i\neq d+1}\}}\frac{(-1)^{\indicator[y_i \text{ is odd}]}16\sigma_{y_i}\epsilon}{d^2}\rp\indicator[1\in\Ical]
\end{align*}
where $\sigma_{y_i}$ is the $\sigma$ corresponding to $y_i$. Whereas
\begin{align*}
      C_n(2) = \frac{1}{n}|\{i:x_i=d+1,y_i\neq d+1\}| \frac{1-p_\star\pow 2}{d}\indicator[2\in\Ical]
\end{align*}
and 
\begin{align*}
    C_n(1) = \frac{1}{n}|\{i:x_i=d+1,y_i\neq d+1\}| \frac{1-p_\star\pow 1}{d}\indicator[1\in\Ical]
\end{align*}
with the optimal control being $2$, and the cost of making a mistake being $\tfrac{16\epsilon}{d^2}$. In other words, 
\begin{align*}
    \int \lp C_n(1,G)-C_n(2,G)\rp\lambda_n\pow 2 \geq \frac{16\epsilon}{d^2}.
\end{align*}

\textbf{Probability of Mistake:} To translate an estimation error in the transition matrix to an error in the choice for the optimal control, one needs to ensure
\begin{align*}
    \hat C_n(1)<\hat C_n(2)
\end{align*}
which happens if $\sum(-1)^{\indicator[y_i \text{ is odd}]}\sigma_{y_i}\geq 1$. Since the minimax risk bound ensures that the estimator for $\density\pow 1$ shall be chosen randomly among $\density_{\eta\pow 1}\pow 1$ as defined in \eqref{smat:defn}, this is akin to choosing whether there are more negative $\sigma_{y_i}$'s than positive $\sigma_{y_i}$'s. Else, by the definition of $\density_{\eta\pow 1}\pow 1$, one shall have $\hat C_n (1) > \hat C_n(2)$. Also, since $\sigma$'s are given values $\pm1$ at random ($(-1)^{\indicator[y_i \text{ is odd}]}$ does not matter here since they deterministically flip the sign). This has a probability
\begin{align*}
    \sum_{j=0}^{{\{i:x_i=d+1,y_i\neq d+1\}}/{2}-1}{\{i:x_i=d+1,y_i\neq d+1\} \choose j} \lp\frac{1}{2}\rp^{\{i:x_i=d+1,y_i\neq d+1\}}\geq \frac{1}{4}
\end{align*}
for large enough $d$. Note that we have no data then choosing the control reduces to a random coin flip with half probability of selecting the incorrect control; thus lower bounding the probability at $1/2$. This is higher than $1/4$ and hence our lower bound for this event captures the no data setting.

\textbf{Final Calculations:} We now have everything to prove the lower bound. 
\begin{align*}
     &\expec\lb\int C_n(\hat a^*,G)-C_n(a^*,G)\lambda_n\pow 2\rb\\
     &\qquad \geq  \expec\lb\indicator[\{|\density-\hat\density|_1>16\epsilon\}\bigcap \{\sum(-1)^{\indicator[y_i \text{ is odd}]}\sigma_{y_i}\leq -1]\lp\int C_n(\hat a^*,G)-C_n(a^*,G)\lambda_n\pow 2\rp\rb\\
     & \qquad =\frac{16\epsilon}{d}\prob\lp \{|\density-\hat\density|_1>16\epsilon\}\bigcap \{\sum(-1)^{\indicator[y_i \text{ is odd}]}\sigma_{y_i}\leq -1 \rp
\end{align*}
Therefore,
\begin{align*}
   &\min_{\hat a^*\in \Ibb}\max_{\mdp\in\Mcal'} \expec\lb\int C_n(\hat a^*,G)-C_n(a^*,G)\lambda_n\pow 2\rb &\\
   & \qquad \geq  \frac{16\epsilon}{d}\min_{\hat a^*\in \Ibb}\max_{\mdp\in\Mcal'}\prob\lp \{|\density-\hat\density|_1>16\epsilon\}\bigcap \{\sum(-1)^{\indicator[y_i \text{ is odd}]}\sigma_{y_i}\leq -1 \rp\\
   & \qquad = \frac{16\epsilon}{d^2}\min_{\hat a^*\in \Ibb}\max_{\mdp\in\Mcal'}\prob\lp |\density-\hat\density|_1>16\epsilon\rp\prob\lp\sum(-1)^{\indicator[y_i \text{ is odd}]}\sigma_{y_i}\leq -1 \mid|\density-\hat\density|_1>16\epsilon \rp \\
   & \qquad = \frac{16\epsilon}{d^2}\times \frac{1}{4}\times \Omega(1-n\epsilon^2).
\end{align*}
 Setting $\epsilon=1/\sqrt{{n}}$, we have 
 \begin{align*}
     \min_{\hat a^*\in \Ibb}\max_{\mdp\in\Mcal'} \expec\lb\int C_n(\hat a^*,G)-C_n(a^*,G)\lambda_n\pow 2\rb \geq \Omega\lp\frac{1}{\sqrt{n}}\rp.
 \end{align*}
 We then have the minimax risk 
 \begin{align*}
     \Rcal_n\geq \Omega\lp\sqrt{\frac 1n}\rp.
 \end{align*}




}
{\color{black}
\subsection{Proofs for Offline Policy Evaluation Results}\label{proofs:off:pol:eva}

\begin{proof}[Proof of Proposition~\ref{prop:value-bound}]
    Since \(V_\pi\) and \(\hat V_\pi\) satisfy the Bellman evaluation equations
\[
V_\pi = r_\pi + \beta P_\pi V_\pi,
\qquad
\hat V_\pi = r_\pi + \beta \hat P_\pi \hat V_\pi,
\]
subtracting yields
\[
V_\pi-\hat V_\pi
=
\beta P_\pi(V_\pi-\hat V_\pi)
+
\beta (P_\pi-\hat P_\pi)\hat V_\pi.
\]
Equivalently,
\[
(I-\beta P_\pi)(V_\pi-\hat V_\pi)
=
\beta (P_\pi-\hat P_\pi)\hat V_\pi,
\]
and since $\|I-\beta P_\pi\|_{\mathrm{op}}>0$, we can take the operator inverse as, 
\[
V_\pi-\hat V_\pi
=
\beta (I-\beta P_\pi)^{-1}(P_\pi-\hat P_\pi)\hat V_\pi.
\]
Taking \(L_2(\lambda_n^\chi)\)-norms gives
\[
\|V_\pi-\hat V_\pi\|_{L^2(\lambda_n^\chi)}
\le
\beta \,\|(I-\beta P_\pi)^{-1}\|_{\mathrm{op}}\,
\|(P_\pi-\hat P_\pi)\hat V_\pi\|_{L^2(\lambda_n^\chi)}.
\]
By the Neumann series bound,
\[
\|(I-\beta P_\pi)^{-1}\|_{\mathrm{op}}
\le
\sum_{t=0}^\infty \beta^t \|P_\pi\|_{\mathrm{op}}^t
=
\frac{1}{1-\beta\|P_\pi\|_{\mathrm{op}}},
\]
since \(\beta\|P_\pi\|_{\mathrm{op}}<1\). It remains to bound \(\|(P_\pi-\hat P_\pi)\hat V_\pi\|_{L_2(\lambda_n^\chi)}\). Note that $$\|fg\|_{L_2(\lambda_n^\chi)}=\sum f(X_i)g(X_i)\leq \sqrt{(\sum f^2(X_i))(\sum g^2(X_i))} =\sqrt{\|f^2\|_{L_2(\lambda_n^\chi)}\|g^2\|_{L_2(\lambda_n^\chi)}}\,.$$
Therefore,
\begin{align*}
    \|(P_\pi-\hat P_\pi)\hat V_\pi\|_{L_2(\lambda_n^\chi)}^2& = \|((P_\pi)^{1/2}-(\hat P_\pi)^{1/2})((P_\pi)^{1/2}+(\hat P_\pi)^{1/2})\hat V_\pi\|_{L_2(\lambda_n^\chi)}^2\\
    &\leq\|((P_\pi)^{1/2}+(\hat P_\pi)^{1/2})\hat V_\pi\|_{L_2(\lambda_n^\chi)}\|((P_\pi)^{1/2}-(\hat P_\pi)^{1/2})\|_{L_2(\lambda_n^\chi)}.
\end{align*}
Since $(\sqrt{a}+\sqrt{b})^2\leq 2 (a+b)$, we have 
\begin{align*}
    \|((P_\pi)^{1/2}+(\hat P_\pi)^{1/2})\hat V_\pi\|_{L_2(\lambda_n^\chi)}& \leq \|\hat V\|_{\infty}\sqrt{2}\|P_\pi+\hat P_\pi\|_{L_2(\lambda_n^\chi)}\\
    & = 2\sqrt{2} \|\hat V\|_{\infty}.
\end{align*}
Next, since \(\hat V_\pi=r_\pi+\beta \hat P_\pi \hat V_\pi\) and \(\hat P_\pi\) is a Markov operator,
\[
\|\hat V_\pi\|_\infty
\le
\|r_\pi\|_\infty+\beta\|\hat P_\pi \hat V_\pi\|_\infty
\le
\|r_\pi\|_\infty+\beta\|\hat V_\pi\|_\infty.
\]
Hence
\[
\|\hat V_\pi\|_\infty
\le
\frac{\|r_\pi\|_\infty}{1-\beta}.
\]
The proof is now complete by observing that $\|((P_\pi)^{1/2}-(\hat P_\pi)^{1/2})\|_{L_2(\lambda_n^\chi)}=\Hcal^2(\density,\hat\density)$.
\end{proof}

\begin{proof}[Proof of Proposition~\ref{prop:distshift-nonstationary}]
We first establish that if $\nu^* =\nu$, that is stationarity is satisfied, then the proof is a straightforward application of triangle inequality. Observe that $\Hcal^2(\density_\star,\hat \density)\leq 2\Hcal^2(\density,\hat \density)+2\Hcal^2(\density,\density_\star)$. It follows from an application of Theorem \ref{thm:risk-bound} that 
    \begin{align*}
         \Constant \expec\!\left[ \hellinger^2(\density_\star, \hat \density) \right] \leq \expec\!\left[ \inf_{f \in \Scal} \left\{ \hellinger^2(\density, f) + L \frac{\Delta_\Scal(f)}{n} \right\} \right].
    \end{align*}
    Since $\nu=\nu_\star$, we observe that $\expec[\Hcal^2(\density,\density_\star)]=\int\lb\lp\sqrt{\density_0}-\sqrt{\density_\star}\rp^2\nu\rb\mu_\chi\mu_\chi\mu_\Ibb\mu_\Gcal=\hcal(\density_\star)$.
    Taking expectation on $\Hcal^2(\density_\star,\hat \density)$, and using the previous bounds, we now have the result.
    Thus, to establish the result, we only need to show that 
    \begin{align*}
        \int\lb\lp\sqrt{\density_0}-\sqrt{\density_\star}\rp^2\nu\rb\mu_\chi\mu_\chi\mu_\Ibb\mu_\Gcal & \leq \int\lb\lp\sqrt{\density_0}-\sqrt{\density_\star}\rp^2\nu_\star\rb\mu_\chi\mu_\chi\mu_\Ibb\mu_\Gcal + \|\nu-\nu_\star\|_{TV}.
    \end{align*}
    We observe the following facts
    \begin{align*}
       \int\lb\lp\sqrt{\density_0}-\sqrt{\density_\star}\rp^2\rb\mu_\chi\leq 2 \text{ and } \nu = \nu_\star+\nu-\nu_\star.
    \end{align*}
    Therefore,
    \begin{align*}
         \int\lb\lp\sqrt{\density_0}-\sqrt{\density_\star}\rp^2\nu\rb\mu_\chi\mu_\chi\mu_\Ibb\mu_\Gcal =  \int\lb\lp\sqrt{\density_0}-\sqrt{\density_\star}\rp^2\nu_\star\rb\mu_\chi\mu_\chi\mu_\Ibb\mu_\Gcal+ \int\lb\lp\sqrt{\density_0}-\sqrt{\density_\star}\rp^2(\nu-\nu_\star)\rb\mu_\chi\mu_\chi\mu_\Ibb\mu_\Gcal
    \end{align*}
    Observe that 
    \begin{align*}
        \int\lb\lp\sqrt{\density_0}-\sqrt{\density_\star}\rp^2(\nu-\nu_\star)\rb\mu_\chi\mu_\chi\mu_\Ibb\mu_\Gcal & \leq \int_{\nu-\nu_\star>0}\lb\lp\sqrt{\density_0}-\sqrt{\density_\star}\rp^2(\nu-\nu_\star)\rb\mu_\chi\mu_\chi\mu_\Ibb\mu_\Gcal \\
        & \leq 2 \int_{\nu-\nu_\star>0}(\nu-\nu_\star)\\
        & \leq 2\|\nu-\nu_\star\|_{TV}
    \end{align*}
\end{proof}

}

\subsection{Proof of Corollary \ref{thm:risk-bound}}\label{sec:prf-rskbd}

\begin{proof}
Recall from the introduction that $\Xbb=\chi\times\Ibb\times\Gcal\times \chi$ and let $\mu_\Xbb$ be a measure on $\Xbb$ formed by taking the canonical products of $\mu_\chi,\mu_\Ibb,\mu_\Gcal$ etc. Let $L^2(\Xbb,\mu_\Xbb)$, be the space of all square--integrable functions on $\Xbb$ with respect to the product measure $\mu_\Xbb$ according to the natural $L_2$ metric
\[
d^2(f,f') = \int_A \big(f - f'\big)^2 \, d\mu_\Xbb.
\]

Observe that the space of all models $\Scal\subset L^2(\Xbb,\mu_\Xbb)$. We have assumed that $\Scal$ has a finite dimension, and thus admits a canonical basis expansion. Let that orthonormal basis be given by $f\pow b_1,f\pow b_2,\dots$. Observe that, there is no restriction on $L^2(\Xbb,\mu_\Xbb)$ to be finite dimensional. 

We require the following approximation Lemma. A version of this Lemma appears in \cite{sart_estimation_2014}, who refer to Lemma 5 in \cite{birge_model_2006} for its proof, whose proof, in turn, points to Lemma 2 in \cite{birge_minimum_1998}, which is where we also refer the reader for proof. 
\begin{lemma}~\label{lemma:dimensional_complexity}
    For any $f\in \Scal$, let the basis expansion of $f$ be given by $\{f\pow b_1,f\pow b_2,\dots, f\pow b_{\dim (f)}\}$ and let 
    \[
T_f = \left\{ \sum_{i=1}^{\dim f} \alpha_i f_i : \alpha_i \in \frac{2j}{\sqrt{n \dim f}}, j\in \Zbb \right\}, \quad \Mcal_f = \lc g^2: g\in T_f, d(g,0)\leq 2 \rc.
\]
Then, the cardinality of $\Mcal_f$, can be bounded as $|\Mcal_f|\leq (30n)^{\dim(f)/2}$.
\end{lemma}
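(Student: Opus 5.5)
The plan is to reduce the claim to a volumetric count of a lattice inside a ball. This is the classical argument of \cite{birge_minimum_1998}.

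First, since $g\mapsto g^2$ maps onto $\Mcal_f$, we have $|\Mcal_f|\le |\{g\in T_f: d(g,0)\le 2\}|$, so it suffices to count lattice points of $T_f$ in a ball. Write $D=\dim(f)$ and $\delta = 2/\sqrt{nD}$. Because $\{f\pow b_1,\dots,f\pow b_D\}$ is orthonormal in $L^2(\Xbb,\mu_\Xbb)$, the map $\sum_i \alpha_i f\pow b_i\mapsto (\alpha_1,\dots,\alpha_D)$ is an isometry onto $\Rbb^D$ with the Euclidean norm. The set to be counted is therefore exactly $\delta\Zbb^D\cap B_D(0,2)$, the points of the scaled integer lattice in the closed Euclidean ball of radius $2$.

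Second, we compare volumes. Place disjoint cubes of side $\delta$ centred at each lattice point. Each such cube has diameter $\delta\sqrt D = 2/\sqrt n\le 2$, so all of them lie in $B_D(0,2+\delta\sqrt D/2)\subseteq B_D(0,3)$. Hence
\[
|\Mcal_f|\le \frac{\mathrm{vol}(B_D(0,3))}{\delta^D}=\frac{\pi^{D/2}3^D}{\Gamma(D/2+1)}\Big(\frac{nD}{4}\Big)^{D/2}.
\]

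Third, we control the Gamma function using $\Gamma(D/2+1)\ge (D/(2e))^{D/2}$. Substituting gives
\[
|\Mcal_f|\le \Big(\pi\cdot 9\cdot \tfrac{nD}{4}\cdot\tfrac{2e}{D}\Big)^{D/2}=\Big(\tfrac{9\pi e}{2}n\Big)^{D/2}.
\]
Here the factor $D$ cancels, which is exactly why the mesh was chosen proportional to $1/\sqrt{D}$.

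The main obstacle is the constant. This crude bound gives $\tfrac{9\pi e}{2}\approx 38>30$, so the margin has to be tightened. I would first try tightening the enlargement radius: use $2+\delta\sqrt D/2 = 2+1/\sqrt n$ instead of $3$. Since $n\ge 3$, this radius is about $2.58$, which gives roughly $\pi e\cdot 6.65/2\cdot n\approx 28.4\,n\le 30n$. The Stirling-type bound on $\Gamma(D/2+1)$ is valid for every $D\ge 1$, so no small-dimension cases need separate treatment. If a sharper constant were ever needed, one could instead follow the exact lattice-counting bookkeeping of Lemma 2 in \cite{birge_minimum_1998}, to which the paper defers.
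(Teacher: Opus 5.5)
Your argument is correct. The isometry reduces the count to the points of $\tfrac{2}{\sqrt{nD}}\Zbb^D$ in a ball of radius $2$; the refined radius $2+1/\sqrt{n}$, the bound $\Gamma(D/2+1)\ge (D/(2e))^{D/2}$ and $n\ge 3$ then give $\bigl(\tfrac{\pi e}{2}(2\sqrt{n}+1)^2\bigr)^{D/2}\le (30n)^{D/2}$. The paper gives no argument of its own and defers through \cite{sart_estimation_2014} and \cite{birge_model_2006} to Lemma 2 of \cite{birge_minimum_1998}; to my knowledge that result is this same volumetric lattice-point count, so your route is essentially the one the citation relies on.
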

Now let, $\Mcal = \bigcup_g \Mcal_g$, and let $\Delta_\Mcal (f)=\inf_{g\in \Mcal_f}\lp \Delta_\Scal(g)+4\dim g \log(n)\rp$.
Observe that, by an application of Theorem \ref{thm:risk-bound} using the penalty $\Delta_\Mcal$ and the class of functions to be $\Mcal$, we have :
\begin{align*}
     \Constant \expec\!\left[ \hellinger^2(\density, \hat \density) \right] \leq \expec\!\left[ \inf_{f \in \Mcal} \left\{ \hellinger^2(\density, f) + L \frac{\Delta_\Mcal(f)}{n} \right\} \right].
\end{align*}
Now, an application of Fatou's lemma for exchanging the infimum and the expectation, along with Assumption \ref{assume:density-dominance}, we get that
\begin{align*}
    \expec\!\left[ \inf_{f \in \Mcal} \left\{ \hellinger^2(\density, f) + L \frac{\Delta_\Mcal(f)}{n} \right\} \right] \leq \kappa\inf_{f \in \Mcal} \lb d^2(\sqrt{\density},\sqrt{f})+L\frac{\Delta_\Mcal(f)}{n}\rb\numberthis\label{eq:corprf-eq1} 
\end{align*}
At this point we observe that $d^2(\sqrt{\density},0)=1$ (by the virtue of $\density$ being a density). Using the triangle inequality $2d^2(\sqrt{\density},0)+ 2d^2(\sqrt{f},0)\geq d^2(\sqrt{\density},\sqrt{f})$, it is therefore sufficient to consider $f$ such that $d^2(\sqrt{f},0)\leq 2$, since otherwise one can always find a different $f$ such that $d^2({f},\sqrt{\density})$ is less than $4$. Therefore, one can rewrite the right hand side of the equation in \cref{eq:corprf-eq1} as
\begin{align*}
    \kappa\inf_{f \in \Mcal} \lb d^2(\sqrt{\density},\sqrt{f})+L\frac{\Delta_\Mcal(f)}{n}\rb & = \kappa\inf_{\substack{f \in \Scal\\g\in T_f}} \lb d^2(\sqrt{\density},g)+L\frac{\Delta_\Mcal(f)}{n}\rb\\
    & \leq  \kappa\inf_{\substack{f \in \Scal}} \lb \inf_{g\in T_f}d^2(\sqrt{\density},g)+L\frac{\Delta_\Scal(g)+4\dim g \log(n)}{n}\rb
\end{align*}
Now, by construction, for all $g\in \Scal$ one can find $f\in T_f$ such that $d^2(g,f)\leq 1/n$. Therefore, 
\begin{align*}
    \inf_{g\in T_f}d^2(\sqrt{\density},g) \leq \inf_{g\in \Scal}d^2(\sqrt{\density},g)+\frac{1}{n}.
\end{align*}
This completes the proof.



  
\end{proof}

\section{Proofs of Auxillary Results}
\subsection{Proof of Proposition \ref{lemma:concentration}}\label{sec:prf-concentration}
\begin{proof}
{

{\color{black} We need three requisite lemmata. For notational clarity, we introduce two intermediate objects, $\psi(c_1,c_2)$ and $\bar{f}$, defined by

\begin{small}
\begin{align*}
    \psi(c_1,c_2) & :=\frac{1}{\sqrt{2}}\frac{\sqrt{c_2}-\sqrt{c_1}}{\sqrt{c_2+c_1}}\numberthis\label{eq:psi}\\
    \bar f(x,l,y) & := \frac{f_1(x,l,y)+f_2(x,l,y)}{2}.
\end{align*}    
\end{small}

\begin{lemma}\label{lemma:bernstein-var}
\begin{align*}
        \int \psi(f_1,f_2)^2 \density \ d\lambda_n \;\leq\; 3\left[ \hellinger^2\left(\density ,f_2\right)+\hellinger^2\left(\density ,f_1\right) \right].
\end{align*}
\end{lemma}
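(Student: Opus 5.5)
This is a pointwise inequality, integrated against $\lambda_n$. The plan is to prove, for nonnegative reals $s,a,b$ (standing for the values of $\density,f_1,f_2$ at a point),
\[
\psi(a,b)^2\, s \;\le\; \tfrac{3}{2}\Big[(\sqrt s-\sqrt a)^2+(\sqrt s-\sqrt b)^2\Big],
\]
with $\psi(0,0):=0$. Integrating against $\lambda_n$ and recalling that $\hellinger^2(f_1,f_2)=\tfrac12\int(\sqrt{f_1}-\sqrt{f_2})^2\,d\lambda_n$ then gives exactly the factor $3$ in the statement.

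\textbf{Reduction.} Write $u=\sqrt a$, $v=\sqrt b$, $t=\sqrt s$. Then
\[
\psi(a,b)^2 s=\frac{(v-u)^2\,t^2}{2(u^2+v^2)} .
\]
If $u=v=0$ the left side is zero, so assume $u^2+v^2>0$.

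\textbf{Case split on $t$.} The key elementary facts are $(v-u)^2\le 2(u^2+v^2)$, which gives $|\psi|\le 1/\sqrt2$, and the triangle-type bound $(v-u)^2\le 2[(t-u)^2+(t-v)^2]$.

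\begin{itemize}
\item If $t^2\le 3(u^2+v^2)$, combine the two facts:
\[
\frac{(v-u)^2t^2}{2(u^2+v^2)}\le \frac{3}{2}(v-u)^2\cdot\frac12\cdot 2 ,
\]
which still has to be compared with $\tfrac32[(t-u)^2+(t-v)^2]$. To make this tight, I would instead split on $t^2\le \tfrac32(u^2+v^2)$. In that range the left side is at most $\tfrac34(v-u)^2\le \tfrac32[(t-u)^2+(t-v)^2]$.

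\item If $t^2>\tfrac32(u^2+v^2)$, use $|\psi|^2\le\tfrac12$, so the left side is at most $t^2/2$. Since then $t\ge \sqrt{3/2}\,\max(u,v)$, we have $t-u\ge (1-\sqrt{2/3})\,t$, and likewise for $v$. Hence
\[
(t-u)^2+(t-v)^2\ge 2(1-\sqrt{2/3})^2t^2\approx 0.067\,t^2 ,
\]
which is too weak.
\end{itemize}

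\textbf{Main obstacle: the large-$t$ regime.} The crude bound above fails, so this case needs the sharper form of the left side. I would use $(v-u)^2/(u^2+v^2)\le 2$ only together with the smallness of $u,v$ relative to $t$, in the following way. When $u,v$ are both small relative to $t$, the right side is of order $t^2$. More precisely, I would minimize
\[
\frac{(t-u)^2+(t-v)^2}{(v-u)^2t^2/(u^2+v^2)}
\]
directly. By homogeneity, set $t=1$ and parametrize $u=r\cos\theta$, $v=r\sin\theta$. For fixed $\theta$, the numerator is minimized at $r=(\cos\theta+\sin\theta)/2$, which gives the value $2-(\cos\theta+\sin\theta)^2/2$. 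The denominator is $(\sin\theta-\cos\theta)^2=2-(\cos\theta+\sin\theta)^2$. The ratio is therefore
\[
\frac{2-w/2}{2-w},\qquad w=(\cos\theta+\sin\theta)^2\in[1,2]\ \text{ on the quadrant},
\]
and this is always $\ge 1$. So in fact the left side is at most $\tfrac12[(t-u)^2+(t-v)^2]$, which is stronger than needed.

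This one-line optimization replaces the case split entirely. The only real work is verifying the minimization over $r$ and the quadrant constraint $u,v\ge 0$; I expect that check to be where care is needed, in particular that the unconstrained minimizer in $r$ stays admissible. After that, integrating the pointwise bound against $\lambda_n$ gives the lemma with room to spare, since the pointwise constant $\tfrac12$ integrates to a factor $1$ in front of $\hellinger^2(\density,f_1)+\hellinger^2(\density,f_2)$, well below the claimed $3$.
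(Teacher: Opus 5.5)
\textbf{There is a computational error at the heart of your final argument, but it is not fatal.} Your last step is the direct minimization that you say replaces the case split; the two exploratory bullets should simply be deleted. That step reaches a true conclusion, but the computation as written is wrong.

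With $t=1$ the numerator is $2-2r(\cos\theta+\sin\theta)+r^2$. Its minimizer over $r$ is $r=\cos\theta+\sin\theta$, not $(\cos\theta+\sin\theta)/2$, and its minimum value is $2-w$, not $2-w/2$. The denominator is also $(\sin\theta-\cos\theta)^2=2-w$. So the minimal ratio is exactly $1$ for every $\theta$, rather than $(2-w/2)/(2-w)$. Your conclusion ``ratio $\ge 1$'' therefore holds, but with no room to spare: the pointwise constant $\tfrac12$ is attained.

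The clean way to record this is the identity
\[
(u^2+v^2)\big[(t-u)^2+(t-v)^2\big]-(v-u)^2t^2=\big[t(u+v)-(u^2+v^2)\big]^2\;\ge\;0 .
\]
Dividing by $2(u^2+v^2)$ gives $\psi(f_1,f_2)^2\density\le\tfrac12\big[(\sqrt{\density}-\sqrt{f_1})^2+(\sqrt{\density}-\sqrt{f_2})^2\big]$ pointwise. Equality holds exactly when $\sqrt{\density}\,(\sqrt{f_1}+\sqrt{f_2})=f_1+f_2$. This form also covers the degenerate cases $t=0$ and $u=v$, which your normalization $t=1$ and your division by $(\sin\theta-\cos\theta)^2$ silently exclude. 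Your worry about admissibility of the minimizer is moot: a minimum over all $r\in\mathbb{R}$ is automatically a lower bound for the minimum over $r>0$.

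\textbf{With that correction, your route is genuinely different from the paper's.} The paper never optimizes; it chains crude bounds:
\begin{itemize}
\item $\density\le 2\big[(\sqrt{\density}-\sqrt{\bar f})^2+\bar f\big]$;
\item $(\sqrt{f_2}-\sqrt{f_1})^2\le 2\bar f$;
\item Jensen for the convex map $x\mapsto(\sqrt{x}-\sqrt{\density})^2$, giving $(\sqrt{\density}-\sqrt{\bar f})^2\le\tfrac12\big[(\sqrt{\density}-\sqrt{f_1})^2+(\sqrt{\density}-\sqrt{f_2})^2\big]$;
\item $(\sqrt{f_2}-\sqrt{f_1})^2\le 2\big[(\sqrt{\density}-\sqrt{f_1})^2+(\sqrt{\density}-\sqrt{f_2})^2\big]$.
\end{itemize}
Tracking constants, this yields $\psi^2\density\le\tfrac32[\cdots]$ pointwise, which integrates to exactly the constant $3$ in the statement. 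The paper's displayed intermediate target, normalized by $\bar f$, is a factor $2$ stronger than what its chain actually delivers, but the lemma as stated is unaffected.

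The paper's argument is modular and reuses elementary Hellinger facts, notably the Jensen step, that also drive the next lemma. Yours is an exact sum-of-squares computation, shorter once written as the identity above. It improves the lemma's constant from $3$ to $1$, and the equality case shows $1$ is best possible.
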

\begin{proof}
    It is enough to prove
    \begin{align*}
        \lp \frac{\sqrt{f_2}-\sqrt{f_1}}{\sqrt{\bar f}} \rp^2 \density\leq 3\lb\lp\sqrt{\density}-\sqrt{f_2}\rp^2+\lp\sqrt{\density}-\sqrt{f_1}\rp^2\rb
    \end{align*}
    after which the proof follows by integrating both sides with respect to $\lambda_n$.
    This is equivalent to proving
    \begin{align*}
        \lp \sqrt{f_2}-\sqrt{f_1} \rp^2 \density\leq 3\bar f\lb\lp\sqrt{\density}-\sqrt{f_2}\rp^2+\lp\sqrt{\density}-\sqrt{f_1}\rp^2\rb.
    \end{align*}
    It holds by algebra that $\density\leq 2\lb(\sqrt{\density}-\sqrt{\bar f})^2+\bar f\rb$. The left hand side can now be rewritten as
    \begin{align*}
         \lp \sqrt{f_2}-\sqrt{f_1} \rp^2 \density & \leq 2  \lp \sqrt{f_2}-\sqrt{f_1} \rp^2\lb (\sqrt{\density}-\sqrt{\bar f})^2+\bar f \rb\\
         & = 2\bar f \lp \sqrt{f_2}-\sqrt{f_1} \rp^2\lb \frac{(\sqrt{\density}-\sqrt{\bar f})^2}{\bar f}+1 \rb\\
         & = 2\bar f \lb \frac{(\sqrt{\density}-\sqrt{\bar f})^2}{\bar f}\lp \sqrt{f_2}-\sqrt{f_1} \rp^2+\lp \sqrt{f_2}-\sqrt{f_1} \rp^2\rb \numberthis\label{eq:bervb-eq1}
    \end{align*}
    Observe that $\lp \sqrt{f_2}-\sqrt{f_1} \rp^2/\bar f\leq (\sqrt{\max\{f_1,f_2\}})^2/\bar f$ which in turn can be upper bounded by 2.
    Thus,
    \begin{align*}
         \frac{(\sqrt{\density}-\sqrt{\bar f})^2}{\bar f}\lp \sqrt{f_2}-\sqrt{f_1} \rp^2&\leq 2(\sqrt{\density}-\sqrt{\bar f})^2\\
         & \leq 2\frac{(\sqrt{f_2}-\sqrt{s})^2+(\sqrt{f_1}-\sqrt{s})^2}{2},
    \end{align*}
    where the second inequality follows from the convexity of the function $x\rightarrow (\sqrt{x}-\sqrt{\density})^2$ and Jensen's inequality. 
    Since the fact $\lp \sqrt{f_2}-\sqrt{f_1} \rp^2\leq 2\lb \lp \sqrt{f_2}-\sqrt{s} \rp^2+\lp \sqrt{f_1}-\sqrt{s} \rp^2\rb $ holds algebraically, we now have
    \begin{align*}
         \frac{(\sqrt{\density}-\sqrt{\bar f})^2}{\bar f}\lp \sqrt{f_2}-\sqrt{f_1} \rp^2+\lp \sqrt{f_2}-\sqrt{f_1} \rp^2\leq 3\lb (\sqrt{f_2}-\sqrt{s})^2+(\sqrt{f_1}-\sqrt{s})^2\rb.
    \end{align*}
    This, when combined with \cref{eq:bervb-eq1} completes the proof of our lemma.
\end{proof}

Next, for two functions $f_1$ and $f_2$, define $Z_i$ by
\begin{small}
\begin{align*}
    Z_i(f_1,f_2) & := \psi\lp f_1(X_i,a_i,X_{i+1}),f_2(X_i,a_i,X_{i+1})\rp - \expec[\psi\lp f_1(X_i,a_i,X_{i+1}),f_2(X_i,a_i,X_{i+1})\rp \mid X_i,a_i].\numberthis\label{def:Zi}
\end{align*}
\end{small}

We now state our second lemma:
\begin{lemma}\label{prop:mb-eb}
Recall from \cref{eq:psi} that $\phi(c_1,c_2)=(\sqrt{c_2}-\sqrt{c_1})/\sqrt{2(c_1+c_2)}$. Then
\begin{align*}
    \left(1-\tfrac{1}{\sqrt{2}}\right) \hellinger^2\left(\density ,f_2\right)+ T\left(f_1,f_2\right) \;\leq\;  \left(1+\tfrac{1}{\sqrt{2}}\right)\hellinger^2\left(\density ,f_1\right)+\frac{1}{n}\sum_{i=0}^{n-1} Z_i\left(f_1,f_2\right).
\end{align*}
\end{lemma}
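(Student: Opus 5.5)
The plan is to reduce the statement to a deterministic pointwise inequality, after separating out the martingale part of the data term $A$ in \cref{eq:Tn}. First I will rewrite $A$ using \cref{def:Zi}. For each $i$, I split the summand of $A$, namely the $\psi$-type quantity $\psi(f_1,f_2)$ evaluated at $(X_i,a_i,G_i,X_{i+1})$, into $Z_i(f_1,f_2)$ plus its conditional expectation given $(X_i,a_i,G_i)$.

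By the Markov property of $X_{i+1}$ given $(X_i,a_i,G_i)$, that conditional expectation equals $\int \psi(f_1,f_2)(X_i,a_i,G_i,y)\,\density(X_i,a_i,G_i,y)\,\mu_\chi(dy)$. Averaging over $i$ and using the definition of $\lambda_n$ gives
\begin{align*}
T(f_1,f_2)=\frac1n\sum_{i=0}^{n-1}Z_i(f_1,f_2)+\int\Big[\psi(f_1,f_2)\,\density+\sqrt{\bar f}\,(\sqrt{f_2}-\sqrt{f_1})+(f_1-f_2)\Big]d\lambda_n .
\end{align*}
Here I normalize $\psi$ so that $\psi(f_1,f_2)=(\sqrt{f_2}-\sqrt{f_1})/(2\sqrt{\bar f})$, up to the harmless constant discrepancy between $\phi/\sqrt2$ and $\psi$ in the paper's notation. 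It therefore suffices to show that the deterministic integral is at most $(1+\tfrac1{\sqrt2})\hellinger^2(\density,f_1)-(1-\tfrac1{\sqrt2})\hellinger^2(\density,f_2)$.

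For the deterministic bound I will follow the classical computation of \cite{baraud_estimator_2011}. Write $\density=(\sqrt\density-\sqrt{\bar f})^2+2\sqrt{\bar f}\sqrt\density-\bar f$ and substitute this into $\psi\,\density$. The middle piece contributes $\int(\sqrt{f_2}-\sqrt{f_1})\sqrt{\density}\,d\lambda_n$. By the polarization identity $\int\sqrt{g}\sqrt{\density}\,d\lambda_n=\tfrac12\int(g+\density)\,d\lambda_n-\hellinger^2(\density,g)$, this equals $\hellinger^2(\density,f_1)-\hellinger^2(\density,f_2)+\tfrac12\int(f_2-f_1)\,d\lambda_n$.

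The piece $-\bar f\,\psi=-\tfrac12\sqrt{\bar f}(\sqrt{f_2}-\sqrt{f_1})$ combines with the term $B$. Expanding $\sqrt{\bar f}(\sqrt{f_2}-\sqrt{f_1})$ against $\tfrac12(f_2-f_1)=\tfrac12(\sqrt{f_2}-\sqrt{f_1})(\sqrt{f_2}+\sqrt{f_1})$ and the term $C$, I expect the linear mass terms to cancel exactly. What remains should be a nonpositive or controlled term of the form $-c\int(\sqrt{f_2}-\sqrt{f_1})^2\,d\lambda_n$, coming from $\sqrt{\bar f}\le(\sqrt{f_1}+\sqrt{f_2})/\sqrt2$-type comparisons. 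I will check this cancellation carefully, since the constants in $B$ and $C$ were designed for it.

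The remaining term is $R:=\int(\sqrt\density-\sqrt{\bar f})^2\,\psi\,d\lambda_n$. I bound it using $|\psi|\le \tfrac1{\sqrt2}$, which holds because $|\sqrt{f_2}-\sqrt{f_1}|\le\sqrt{2\bar f}$, the same fact used in Lemma~\ref{lemma:bernstein-var}. I then control $(\sqrt\density-\sqrt{\bar f})^2$ via concavity of the square root, $\sqrt{\bar f}\ge(\sqrt{f_1}+\sqrt{f_2})/2$. The difficulty is that this yields an upper bound on $(\sqrt{\density}-\sqrt{\bar f})^2$ by $\tfrac12[(\sqrt\density-\sqrt{f_1})^2+(\sqrt\density-\sqrt{f_2})^2]$ only on the set where $\sqrt\density\le\sqrt{\bar f}$. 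On the complementary set a direct bound is needed.

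To get the sharp constants $1\pm\tfrac1{\sqrt2}$, I would rather use the sign of $\psi$. Where $f_2\ge f_1$ we have $\psi\ge0$, and one compares with $\hellinger^2(\density,f_1)$; where $f_2<f_1$ the term has the favourable sign against $\hellinger^2(\density,f_2)$. This gives $R\le\tfrac1{\sqrt2}[\hellinger^2(\density,f_1)+\hellinger^2(\density,f_2)]$. Adding this to the polarization contribution $\hellinger^2(\density,f_1)-\hellinger^2(\density,f_2)$ produces exactly the claimed constants.

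I expect the main obstacle to be this last step, namely bounding $R$ with the precise constant $\tfrac1{\sqrt2}$ rather than a cruder one. If the sign-splitting argument does not close, I will fall back on the pointwise inequality $(\sqrt\density-\sqrt{\bar f})^2\,|\sqrt{f_2}-\sqrt{f_1}|/\sqrt{\bar f}\le\sqrt2\max_j(\sqrt\density-\sqrt{f_j})^2$, and verify it by reducing to a one-variable inequality after scaling $\bar f=1$.
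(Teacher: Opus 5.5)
Your reduction is the same as the paper's. You split the data term into $\frac1n\sum_i Z_i(f_1,f_2)$ plus its compensator $\int\psi(f_1,f_2)\,\density\,d\lambda_n$, and your polarization bookkeeping is equivalent to the paper's factorisation $(\sqrt{f_2}-\sqrt{f_1})\bigl[\sqrt{\bar f}-2\sqrt{\density}+\density/\sqrt{\bar f}\,\bigr]=\tfrac{\sqrt{f_2}-\sqrt{f_1}}{\sqrt{\bar f}}\bigl(\sqrt{\bar f}-\sqrt{\density}\bigr)^2$. The gap is the step that fixes the constants, $R\le\tfrac{1}{\sqrt2}\bigl[\hellinger^2(\density,f_1)+\hellinger^2(\density,f_2)\bigr]$.

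The missing idea is that $x\mapsto(\sqrt{x}-\sqrt{\density})^2=x-2\sqrt{\density}\sqrt{x}+\density$ is convex on $[0,\infty)$. Jensen then gives $(\sqrt{\density}-\sqrt{\bar f})^2\le\tfrac12\bigl[(\sqrt{\density}-\sqrt{f_1})^2+(\sqrt{\density}-\sqrt{f_2})^2\bigr]$ at every point, with no case split. Incidentally, the concavity-of-$\sqrt{\cdot}$ route you describe works on $\{\sqrt{\density}\ge\sqrt{\bar f}\}$, not on its complement. Combine the Jensen bound with $\psi\le 1/\sqrt2$ and $(\sqrt{\density}-\sqrt{\bar f})^2\ge 0$, and the bound follows in one line; the sign of $\psi$ plays no role. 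This is exactly how the paper concludes.

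Neither of your substitutes closes the gap:
\begin{itemize}
\item On $\{f_2\ge f_1\}$ you cannot compare with $\hellinger^2(\density,f_1)$ alone. For constant functions with $\density=f_1<f_2$ one has $R>0=\hellinger^2(\density,f_1)$.
\item The $\max_j$ fallback, after $\int\max_j\le\int\sum_j$, only yields $R\le\sqrt2\bigl[\hellinger^2(\density,f_1)+\hellinger^2(\density,f_2)\bigr]$. For $\density=f_1$ it gives $\sqrt2\,\hellinger^2(\density,f_2)$ where $\tfrac{1}{\sqrt2}\hellinger^2(\density,f_2)$ is needed. The coefficient of $\hellinger^2(\density,f_2)$ on the left then becomes $1-\sqrt2<0$, so the lemma is not obtained.
\end{itemize}

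The mass-term cancellation also has to be pinned down rather than expected. You take $\psi=(\sqrt{f_2}-\sqrt{f_1})/(2\sqrt{\bar f})$ and keep the factor $\tfrac12$ in $\hellinger^2$. Then $B$ and $C$ must enter with weight $\tfrac12$. With that normalisation the remainder is exactly zero, with no residual $-c\int(\sqrt{f_2}-\sqrt{f_1})^2\,d\lambda_n$, and the deterministic part equals $R+\hellinger^2(\density,f_1)-\hellinger^2(\density,f_2)$.

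If you keep $B$ and $C$ with weight $1$, as literally written in the definition of $T$, the remainder is $\tfrac12\int(\sqrt{f_2}-\sqrt{f_1})(\sqrt{\bar f}-\sqrt{f_1}-\sqrt{f_2})\,d\lambda_n$. This has the sign of $\sqrt{f_1}-\sqrt{f_2}$. The inequality then genuinely fails, e.g.\ for constant $\density=f_1=1$, $f_2=\tfrac14$ with $\mu_\chi(\chi)=1$, where $Z_i\equiv 0$.

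The paper's own computation implicitly uses $\int(\sqrt{f}-\sqrt{\density})^2\,d\lambda_n$ for the Hellinger terms and $(\sqrt{f_2}-\sqrt{f_1})/\sqrt{\bar f}$ for the data term. That normalisation is consistent, being twice the correct one. You should fix one consistent normalisation explicitly before claiming the cancellation.
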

\begin{proof}
    The proof of this Lemma share similarities with the proofs of Propositions 2 and 3 in \cite{baraud_estimator_2011} or that of Claim B3 in \cite{sart_estimation_2014}. To begin, observe that it is enough to show
    \begin{align*}
         &\hellinger^2(\density ,f_2)+ T(f_1,f_2)-\hellinger^2(\density ,f_1)\leq \frac{1}{\sqrt{2}}\lp \hellinger^2(\density ,f_2)+\hellinger^2(\density ,f_1) \rp +\frac{1}{n}\sum_{i=0}^{n-1} Z_i(f_1,f_2).
    \end{align*}
   
    Starting from the left hand side, we substitute the expression for $T$ from \cref{eq:Tn}, expand all squares, and cancel relevant terms. To be precise, we can write,
    \begin{align*}
         \text{L.H.S} & = \int \lp\sqrt{f_2}-\sqrt{\density}\rp^2 d\lambda_n-\int \lp\sqrt{f_1}-\sqrt{\density}\rp^2 d\lambda_n+ \frac{1}{n}\sum_{i=0}^{n-1}\psi\lp f_1(X_i,a_i,X_{i+1}),f_2(X_i,a_i,X_{i+1})\rp\\
         &\qquad +\int \sqrt{\bar f}\lp \sqrt{f_2}-\sqrt{f_1} \rp d\lambda_n+\int \lp f_1-f_2 \rp d\lambda_n.\\
         & = -2\rho(f_2,\density)+2\rho(f_1,\density)+ \frac{1}{n}\sum_{i=0}^{n-1}\psi\lp f_1(X_i,a_i,X_{i+1}),f_2(X_i,a_i,X_{i+1})\rp\\
         & \qquad +\int \sqrt{\bar f}\lp \sqrt{f_2}-\sqrt{f_1} \rp d\lambda_n\\
         & = -2\rho(f_2,\density)+2\rho(f_1,\density) + \frac{1}{n}\sum_{i=0}^{n-1} Z_i(f_1,f_2)+\int \psi(f_1,f_2)\ \density \ d\lambda_n+\int \sqrt{\bar f}\lp \sqrt{f_2}-\sqrt{f_1} \rp d\lambda_n
    \end{align*}
    All that is now left to show is
    \[
    -2\rho(f_2,\density)+2\rho(f_1,\density)+\int \psi(f_1,f_2)d\lambda_n+\int \sqrt{\bar f}\lp \sqrt{f_2}-\sqrt{f_1} \rp d\lambda_n
    \]
    can be bounded above by $0.5^{0.5}\lp \hellinger^2(\density ,f_2)+\hellinger^2(\density ,f_1)  \rp  $. As before, we start with the left hand side and observe that
    \begin{align*}
         & -2\rho(f_2,\density)+2\rho(f_1,\density) + \int \psi(f_1,f_2)\ \density \ d\lambda_n+\int \sqrt{\bar f}\lp \sqrt{f_2}-\sqrt{f_1} \rp d\lambda_n \\
         & = \int \lb -2\sqrt{f_2\density} + 2\sqrt{f_1\density}+ \frac{\sqrt{f_2}-\sqrt{f_1}}{\sqrt{\bar f}}\density +\sqrt{\bar f}\lp \sqrt{f_2}-\sqrt{f_1} \rp\rb d\lambda_n\\
         & = \int \lb \sqrt{\frac{f_2}{\bar f}}\lp \sqrt{\bar f}-\sqrt{\density} \rp^2-\sqrt{\frac{f_1}{\bar f}}\lp \sqrt{\bar f}-\sqrt{\density} \rp^2 \rb d\lambda_n\\
         & \leq \int\sqrt{\frac{f_2}{\bar f}}\lp \sqrt{\bar f}-\sqrt{\density} \rp^2 d\lambda_n\\
         & \leq \sqrt{2}\hellinger^2(\bar f, \density).
    \end{align*}
    The first inequality follows trivially. The second inequality follows from the fact that $f_2/\bar f\leq 2$. 
    Now, observe that the function $x\rightarrow (\sqrt{x}-\sqrt{\density})^2$ is convex in $x$ when $x>0$. Therefore, using Jensen's inequality, we can write $\sqrt{2}\hellinger^2(\bar f, \density)\leq \lb\hellinger^2( f_1, \density)+\hellinger^2(f_2, \density)\rb/\sqrt{2}$. 
    This completes the proof.
\end{proof}

}
Finally, we adopt from \cite{sart_estimation_2014} (see also \cite[Chapter 2]{massart_concentration_2007}) the following iteration of Bernstein's inequality. As before, let $\{\Fcal_0^i\}_{i\geq 0}$ be a filtration and $|g_i|\leq b$ be a bounded random variable adapted to it. 
Then we have the following lemma.
\begin{lemma}~\label{lemma:bernstein-massart}
     Define the sum $\density_n:=\sum_{i=0}^n \lp g_i-\expec[g_i|\Fcal_0^i]\rp$ and $V_n := \sum_{i=0}^n\expec[g_i^2|\Fcal_0^i]$. Then 
     \begin{align}~\label{eq:bernstein-massart}
         \prob\lp \density_n\geq \frac{V_n}{2(\kappa-b)}+x\kappa \rp\leq \exp\lp-x\rp
     \end{align}
     for all $\kappa>b$, and $x>0$.
\end{lemma}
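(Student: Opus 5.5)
The plan is the classical exponential-supermartingale (Chernoff) argument, applied conditionally at each step and then chained by the tower property. Throughout, I read the statement in the only way that makes it non-trivial: $g_i$ is measurable with respect to the next $\sigma$-field $\Fcal_0^{i+1}$, and the conditioning in $\density_n$ and $V_n$ is on the predictable $\sigma$-field $\Fcal_0^i$. In the paper's application this means $g_i$ is a function of $(X_i,a_i,G_i,X_{i+1})$, conditioned on $\Fcal_0^i$. I will set $\lambda := 1/\kappa$, which lies in $(0,1/b)$ because $\kappa>b$.

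\textbf{Step 1 (one-step conditional mgf bound).} For a random variable $|g|\le b$, a $\sigma$-field $\Gcal'$ and $\lambda\in(0,1/b)$, I will show
\[
\expec\bigl[e^{\lambda(g-\expec[g\mid \Gcal'])}\mid\Gcal'\bigr]\le \exp\Bigl(\frac{\lambda^2\expec[g^2\mid\Gcal']}{2(1-\lambda b)}\Bigr).
\]
\begin{itemize}
\item Expand $e^{\lambda g}=1+\lambda g+\sum_{k\ge2}\lambda^k g^k/k!$.
\item Bound $|g|^k\le g^2 b^{k-2}$ and $k!\ge 2$, so that $\sum_{k\ge 2}\lambda^k g^k/k!\le \frac{\lambda^2 g^2}{2}\sum_{j\ge0}(\lambda b)^j=\frac{\lambda^2g^2}{2(1-\lambda b)}$.
\item Take conditional expectations and use $1+y\le e^{y}$. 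This gives $\expec[e^{\lambda g}\mid\Gcal']\le \exp\bigl(\lambda\expec[g\mid\Gcal']+\lambda^2\expec[g^2\mid\Gcal']/(2(1-\lambda b))\bigr)$.
\item Multiply by $e^{-\lambda \expec[g\mid\Gcal']}$, which is $\Gcal'$-measurable.
\end{itemize}
Using the raw second moment $\expec[g^2\mid\Gcal']$, rather than the centred variance, is exactly what the statement needs, since $V_n$ is defined with raw second moments.

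\textbf{Step 2 (supermartingale).} Define
\[
M_k:=\exp\Bigl(\lambda \density_k-\frac{\lambda^2}{2(1-\lambda b)}V_k\Bigr).
\]
The increment of $V$ at step $k$ is $\Fcal_0^k$-measurable, so it can be pulled out of the conditional expectation. Step 1 then gives $\expec[M_k\mid\Fcal_0^k]\le M_{k-1}$. Iterating with the tower property yields $\expec[M_n]\le 1$. Boundedness of $g_i$ makes every exponential integrable, so no localisation is needed.

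\textbf{Step 3 (Markov and the choice $\lambda=1/\kappa$).} With $\lambda=1/\kappa$ the variance coefficient is
\[
\frac{\lambda^2}{2(1-\lambda b)}=\frac{1}{2\kappa(\kappa-b)}=\frac{\lambda}{2(\kappa-b)}.
\]
Hence the event $\{\density_n\ge V_n/(2(\kappa-b))+x\kappa\}$ is exactly $\{\lambda\density_n-\lambda V_n/(2(\kappa-b))\ge x\}$, which is $\{M_n\ge e^{x}\}$. Markov's inequality and $\expec[M_n]\le1$ then give probability at most $e^{-x}$, which is \eqref{eq:bernstein-massart}.

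The only real obstacle is bookkeeping rather than analysis. One must pin down the filtration indexing so that each $g_i-\expec[g_i\mid\Fcal_0^i]$ is a genuine martingale increment and $V_n$ is predictable. One must also check that the mgf bound in Step 1 uses $|g|\le b$ and not the centred bound $2b$; otherwise the constant would become $\kappa-2b$. Expanding $e^{\lambda g}$ before centring, as above, avoids that loss.
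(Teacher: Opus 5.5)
Your proof is correct. Expanding $e^{\lambda g}$ before centring, building the exponential supermartingale $M_k$, and applying Markov's inequality with $\lambda=1/\kappa$ gives exactly the threshold $V_n/(2(\kappa-b))+x\kappa$.

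The paper does not prove this lemma itself. It imports it from Sart (2014) and Massart (2007, Ch.~2), where it follows from this same Chernoff--supermartingale route.

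Your reading that $g_i$ must be $\mathcal{F}_0^{i+1}$-measurable is the right repair of the statement. If $g_i$ were literally adapted to $\mathcal{F}_0^i$, the centred sum would vanish identically. Your reading also matches the paper's application, where $g_i$ depends on $X_{i+1}$.
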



\textbf{Proof of Proposition \ref{lemma:concentration}:} With Lemmas \ref{lemma:bernstein-var}, \ref{prop:mb-eb}, and \ref{lemma:bernstein-massart} in hand, we turn to the proof of Proposition \ref{lemma:concentration}. Using $Z_i$ as in \cref{def:Zi}, set $\Zcal_n = \sum_{i=0}^{n-1} Z_i$ and 
\[
g_i = \psi\left(f_1(X_i,a_i,X_{i+1}),\,f_2(X_i,a_i,X_{i+1})\right).
\]
Then, Lemma \ref{lemma:bernstein-massart} gives us
\begin{align*}
    \prob\left(\density_n\geq \frac{V_n}{2(\kappa-b)}+x\kappa\right)\leq \exp(-x).\numberthis\label{eq:m1m2conc-eq1}
\end{align*}
By rearrangement we reduce $V_n$ to $n\int \psi\left(f_1, f_2\right)^2 \density \,d\lambda_n$. Lemma \ref{lemma:bernstein-var} then bounds $\int \psi\left(f_1, f_2\right)^2 \density \,d\lambda_n$ by
\[
\int \psi\left(f_1, f_2\right)^2 \density \,d\lambda_n \;\leq\; 3\left[\hellinger^2\left(\density ,f_2\right)+\hellinger^2\left(\density ,f_1\right)\right].
\]
Following \cref{eq:m1m2conc-eq1}, we get
\begin{align*}
    \prob\left(\Zcal_n\geq \frac{3n\left[\hellinger^2(\density ,f_2)+\hellinger^2(\density ,f_1)\right]}{2(\kappa-b)}+x\kappa\right)\;\leq\;\exp(-x)
\end{align*}
which is equivalent to
\begin{align*}
    \prob\left(\frac{\Zcal_n}{n}\,\geq\, \frac{3\left[\hellinger^2(\density ,f_2)+\hellinger^2(\density ,f_1)\right]}{2(\kappa-b)}+\frac{x\kappa}{n}\right)\;\leq\;\exp(-x).
    \numberthis\label{eq:m1m2conc-eq2}
\end{align*}

By Lemma \ref{prop:mb-eb},
\[
     \left(1-\tfrac{1}{\sqrt{2}}\right) \hellinger^2\left(\density ,f_2\right)+ T\left(f_1,f_2\right) \;-\; \left(1+\tfrac{1}{\sqrt{2}}\right)\hellinger^2\left(\density ,f_1\right)\;\leq\;\frac{\Zcal_n}{n}.
\]
Substituting this into \cref{eq:m1m2conc-eq2} yields, with probability at most $e^{-x}$,
\begin{align*}
& \left(1-\tfrac{1}{\sqrt{2}}\right)\hellinger^2\left(\density ,f_2\right)+ T\left(f_1,f_2\right) \;-\; \left(1+\tfrac{1}{\sqrt{2}}\right)\hellinger^2\left(\density ,f_1\right)  \leq\; \frac{3\left[\hellinger^2(\density ,f_2)+\hellinger^2(\density ,f_1)\right]}{2(\kappa-b)}\;+\;\frac{x\kappa}{n}.
\end{align*}

Next, observe that $\psi(\cdot,\cdot)\leq 1/\sqrt{2}$. We set
\[
b = 1/\sqrt{2},\quad
x = \frac{\Delta_\Scal(f_1)+\Delta_\Scal(f_2)}{\kappa}+n\zeta,\quad
\kappa = \frac{2+11\sqrt{2}}{2\sqrt{2}-2},
\]
implying $1.5\times (\kappa-b) = \left(1-1/\sqrt{2}\right)/4$. Hence, with probability at most
$\exp\left(-\,\,\frac{\Delta_\Scal(f_1)+\Delta_\Scal(f_2)}{\kappa}-n\,\zeta\right)$,
\begin{align*}
& \left(1-\tfrac{1}{\sqrt{2}}\right)\hellinger^2\left(\density ,f_2\right)+ T\left(f_1,f_2\right) \;-\; \left(1+\tfrac{1}{\sqrt{2}}\right)\hellinger^2\left(\density ,f_1\right) \leq\;  \frac14\left(1-\tfrac{1}{\sqrt{2}}\right)\left[\hellinger^2\left(\density ,f_2\right)+\hellinger^2\left(\density ,f_1\right)\right]\;+\;\frac{x\kappa}{n}.
\end{align*}

By rearranging terms, substituting the value of $x$, and bounding
$\left(1-0.5^{0.5}\right)\hellinger^2\left(\density ,f_1\right)$ by
$\left(1+0.5^{0.5}\right)\hellinger^2\left(\density ,f_1\right)$, we conclude

\begin{align*}
   \frac34\left(1-\tfrac{1}{\sqrt{2}}\right)\hellinger^2\left(\density ,f_2\right) \;+\; T\left(f_1,f_2\right) 
   \;\leq\; \frac54\left(1+\tfrac{1}{\sqrt{2}}\right)\hellinger^2\left(\density ,f_1\right)\;+\frac{\Delta_\Scal(f_1)+\Delta_\Scal(f_2)}{\kappa n}\;+\;\zeta. 
\end{align*}
Equivalently---using some trivial upper bounds such as $\kappa<22, 1<L$ etc., and via some rearrangements---we get
\begin{align*}
     (1 - \varepsilon) \hellinger^2(\density, f') + T(f, f')-L\frac{\Delta_\Scal(f')}{n} 
    \leq (1 + \varepsilon) \hellinger^2(\density, f) 
    + L \frac{\Delta_\Scal(f)}{n} 
    + 22\zeta.
\end{align*}
Now, a union bound implies
\begin{align*}
     & \prob \lp \bigcap_{f,f'\in\Scal}\lc(1 - \varepsilon) \hellinger^2(\density, f') + T(f, f')-L\frac{\Delta_\Scal(f')}{n} 
    \leq (1 + \varepsilon) \hellinger^2(\density, f) 
    + L \frac{\Delta_\Scal(f)}{n} 
    + 22\zeta\rc\rp \\
    & \quad \leq \sum_{f,f'\in\Scal}\prob \lp (1 - \varepsilon) \hellinger^2(\density, f') + T(f, f')-L\frac{\Delta_\Scal(f')}{n} 
    \leq (1 + \varepsilon) \hellinger^2(\density, f) 
    + L \frac{\Delta_\Scal(f)}{n} 
    + 22\zeta\rp\\
    & \quad \leq \sum_{f,f'\in\Scal}e^{\left(-\,\,\frac{\Delta_\Scal(f)+\Delta_\Scal(f')}{\kappa}-n\,\zeta\right)}\\
    &\quad \overset{(i)}{\leq} \Constant e^{-n\,\zeta}
\end{align*}
where $\Constant$ is an universal constant; and $(i)$ follows from \textcolor{black}{a combination of Fubini-Tonelli theorem to swap the order of summation of a convergent positive series, and} the fact in Assumption \ref{assume:model-selection},  that $\sum_f e^{-\Delta_\Scal(f)}\leq 1$ and $\kappa>0$.
}
\end{proof}

\subsection{Proof of Proposition \ref{prop:spline-assume}}\label{sec:prf-splineass}

That $\dim f<\infty$ is obvious. We prove the summability. Observe the two facts that $|\Mcal_\ell|=2^\ell$, and for all $f\in\Mcal_\ell$, $\Delta_\Scal (f)=e^ae^{-\ell}$. Therefore,
\begin{align*}
    \sum_{f\in \Scal} e^ae^{-\ell} & = e^a  \sum_{\ell\geq 1}\lp \frac{2}{e}\rp^{\ell}\\
    & =e^a \frac{2}{e-2}.
\end{align*}
Setting $a=\log (\tfrac{e-2}{2})$ now completes the proof.


\end{document}